%% file: main.tex
\documentclass{article}

\usepackage[preprint]{neurips_2026}

\usepackage[utf8]{inputenc}
\usepackage[T1]{fontenc}
\usepackage{microtype}
\usepackage{graphicx}
\usepackage{subcaption}
\usepackage{booktabs}
\usepackage{algorithm}
\usepackage{algorithmic}
\usepackage{hyperref}
\usepackage{url}
\usepackage{amsmath}
\usepackage{amssymb}
\usepackage{mathtools}
\usepackage{amsthm}
\usepackage{nicefrac}
\usepackage{xcolor}
\usepackage[capitalize,noabbrev]{cleveref}
\usepackage[disable,textsize=tiny]{todonotes}
\usepackage{enumitem}
\usepackage{amsfonts}
\usepackage{xspace}
\usepackage{arydshln}

\makeatletter
\@ifundefined{theHalgorithm}
  {}
  {}
\makeatother

\newcommand{\ours}{\textsc{QTPT}\xspace}

\theoremstyle{plain}
\newtheorem{theorem}{Theorem}[section]
\newtheorem{proposition}[theorem]{Proposition}
\newtheorem{lemma}[theorem]{Lemma}

\theoremstyle{definition}
\newtheorem{definition}[theorem]{Definition}
\newtheorem{assumption}[theorem]{Assumption}
\theoremstyle{remark}

\title{From Weak Data to Strong Policy: Q-Targets Enable Provable In-Context Reinforcement Learning%
\thanks{Accepted at NeurIPS 2026. This is an author preprint.}}

\author{%
  Yichen Lin$^{1}$ \quad Xuyuan Xiong$^{1}$ \quad Xue Wang$^{2}$ \quad Xiangfu Meng$^{1}$ \\
  Mike Mingcheng Wei$^{3}$ \quad Tao Yao$^{1}$\thanks{Corresponding author.} \\[0.5ex]
  {\normalfont$^{1}$Shanghai Jiao Tong University, Shanghai, China} \\
  {\normalfont$^{2}$Alibaba Group} \\
  {\normalfont$^{3}$School of Management, University at Buffalo, Buffalo, NY, USA} \\[0.5ex]
  {\normalfont$^{1}$\texttt{\{linyichen,xxy2021,mengxiangfu,taoyao\}@sjtu.edu.cn}} \\
  {\normalfont$^{2}$\texttt{wxie91@gmail.com}\quad $^{3}$\texttt{mcwei@buffalo.edu}}
}

\begin{document}

\maketitle

\begin{abstract}

\input{sections/abstract}
\end{abstract}

\input{sections/intro}
\input{sections/model_setting}
\input{sections/analysis}
\input{sections/experiment}

\input{sections/conclusion}

\section*{Broader Impact}
This paper presents work whose goal is to advance the field of Machine Learning.
There are many potential societal consequences of our work, none which we feel
must be specifically highlighted here.

\bibliographystyle{plainnat}
\bibliography{references,rebuttal_references}

\appendix
\input{sections/appendix}

\newpage
\input{checklist}

\end{document}

%% file: sections/abstract.tex
Existing in-context reinforcement learning methods mainly pretrain Transformers with supervised behavior-prediction objectives. This enables task inference from context, but makes the learned policy strongly depend on the quality of offline actions: when trajectories are weak or suboptimal, imitation itself becomes a biased learning signal. We propose Q-Target Pretrained Transformers (QTPT), which keeps the context-conditioned Transformer architecture but replaces behavior cloning with a Bellman-style Q-target objective. QTPT therefore learns to use rewards and transitions in the context to estimate action values, rather than simply imitating the behavior policy. We theoretically analyze QTPT in stochastic linear bandits and finite-horizon MDPs, showing stronger robustness to data quality than supervised pretraining. Empirically, QTPT improves over supervised behavior prediction on controlled RL benchmarks with random or suboptimal data, and we examine extensions to D4RL Kitchen and AntMaze. Supplementary experiments evaluate backbone robustness, meta-RL comparisons, task-coherent context, and unsupported-action value overestimation. These comparisons distinguish the benefits of Q-target pretraining from the remaining limitations of offline coverage.


%% file: sections/intro.tex
\vspace{-0.5cm}
\section{Introduction}
\vspace{-0.1cm}

%
Recent advances in large language models have demonstrated their remarkable ability to perform various tasks in a zero-shot or few-shot manner using in-context learning \citep{brown2020languagemodelsfewshotlearners,garg2023transformerslearnincontextcase}. Building upon this paradigm, transformer-based models have been successfully applied to Reinforcement Learning (RL) settings, showing strong In-Context Reinforcement Learning (ICRL) capabilities \citep{laskin2022context,lee2023supervised,lin2024transformersdecisionmakersprovable}. These models can implicitly capture temporal dependencies within sequences of state-action-reward tuples, enabling them to generalize and make decisions in unseen environments.

In-context RL requires inferring the current task from interaction history and selecting actions using that information. Supervised approaches include Algorithm Distillation \citep{laskin2022context}, which distills learning trajectories, and Decision Pretrained Transformer \citep{lee2023supervised}, which uses optimal-action supervision. These supervision sources differ. Our matched SPT baseline predicts the behavior actions recorded in the offline data; when those actions are weak or suboptimal, its targets encourage imitation of weak decisions.

To address this objective-level limitation, we propose Q-Target Pretrained Transformers (\ours), a pretraining framework that replaces supervised behavior cloning with a Bellman-style Q-learning objective. The context still provides the information needed for task adaptation, but the learning signal comes from rewards, transitions, and bootstrapped value estimates rather than behavior actions alone. As a result, \ours learns a context-conditioned Q-function from offline datasets, even when these datasets are generated by suboptimal policies (see Figure \ref{fig:qtpt}).
By aligning Transformer pretraining with value-based reinforcement learning, \ours reduces the reliance on expert trajectories or optimal action labels and improves robustness under weak offline data.

\begin{figure*}[t]
    \centering
    \includegraphics[width=0.6\textwidth]{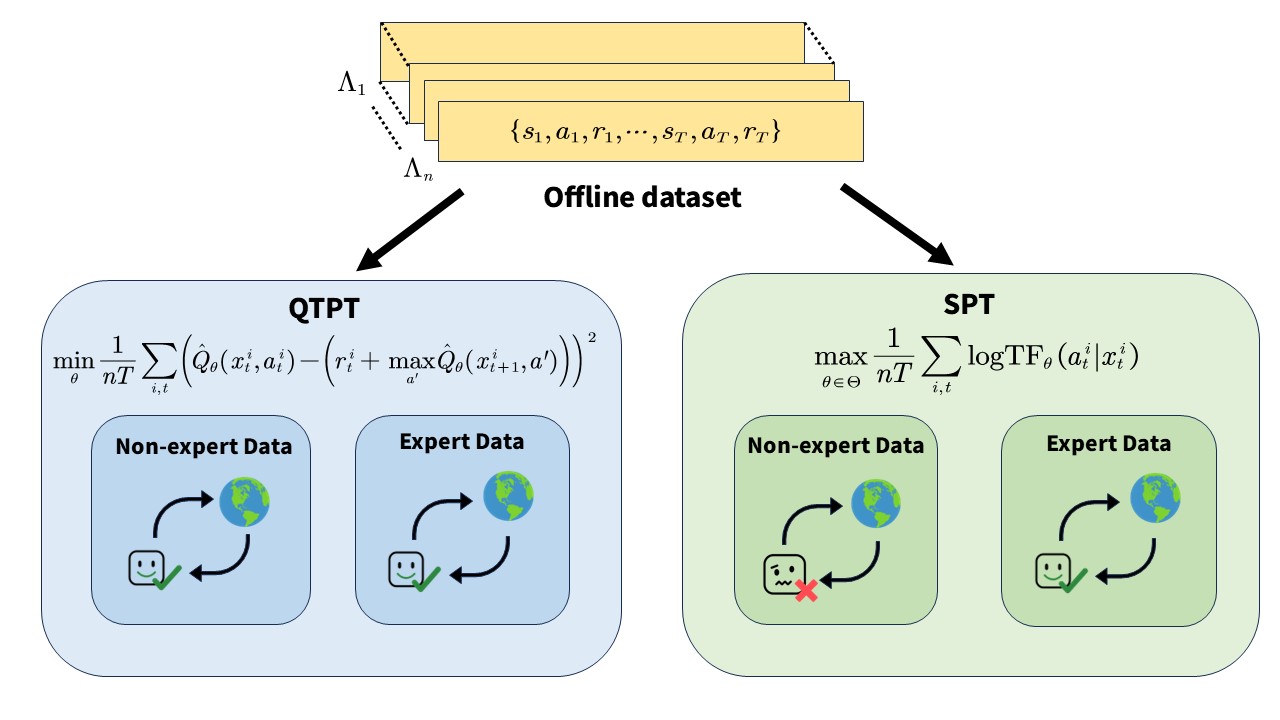}
    \caption{Comparison of Q-target Pretrained Transformers (QTPT) and Supervised Pretrained Transformers (SPT) for in-context RL. While SPT requires expert demonstrations for good performance, QTPT maintains robust performance even when trained on suboptimal data.}
    \label{fig:qtpt}
\end{figure*}
Theoretically, we give a Transformer approximation construction and a suboptimality upper bound with two distinct components.
The first component arises from the sample bias induced by behavior policies, and the second component stems from the model bias affected by the Rademacher complexity and the size of the training dataset.
Such a clear separation builds a novel theoretical foundation to isolate the impacts of sample bias and model bias and sheds light on how \ours balances leveraging available data and mitigating model bias during pretraining. 

We evaluate \ours on controlled RL benchmarks, D4RL Kitchen and AntMaze, and a language-model reasoning adaptation. Matched comparisons with supervised pretraining test the effect of replacing behavior labels with Q-targets, and separate ablations examine context and the backup target. Supplementary experiments compare meta-RL baselines, decoder backbones, and explicit conservative regularization; each comparison retains its own data and evaluation protocol rather than being pooled into one performance ranking.

The major contributions of this work are summarized as follows:

1. We propose QTPT, a context-conditioned Q-target pretraining framework for in-context RL. Unlike supervised pretraining, QTPT does not treat behavior actions as ground-truth labels; it trains the Transformer to estimate action values from contextual rewards and transitions.
By integrating Q-learning with Transformers, we enable end-to-end value estimation inside a sequence model.
This shift from behavior prediction to Q-target pretraining is the central algorithmic contribution of the paper.

2. We derive suboptimality upper bounds for stochastic linear bandits and finite-horizon MDPs, separating sample bias from model bias. The general-MDP analysis contains a worst-case $\widetilde{\mathcal O}(\sqrt L\,T^2n^{-1/4})$ statistical term, together with class-complexity and approximation terms. The bandit model-bias component has $\widetilde{\mathcal O}(\sqrt{T/n})$ dependence under its stated conditions; it is not the entire combined bound.

3. We evaluate the roles of supervision, context, and backup construction. The experiments show strong gains over behavior prediction on random data, test the pattern across three decoder families, and directly measure unsupported-action value overestimation. Additional experiments probe meta-RL comparisons and the effect of task-coherent versus mixed-task context.


\subsection{Related Work}

\textbf{Q-Learning}.
Q-learning is a foundational off-policy reinforcement learning algorithm that updates Q-values using the Bellman equation to learn optimal policies through environmental interaction \citep{Watkins1992qlearning,sutton2018reinforcement}. Over the years, a range of advancements have improved Q-learning's efficiency and practical applicability—especially regarding exploration-exploitation trade-offs. Upper-Confidence Bounds (UCB) have enabled more efficient exploration, achieving regret bounds on par with model-based methods \citep{jin2018qlearningprovablyefficient,zanette2019tighter}. Optimistic Q-learning algorithms \citep{even2001convergence,yang2021qlearninglogarithmicregret} further reduce regret under appropriate conditions, marking a notable improvement over earlier approaches that suffered from poor sample complexity. Deep Q-Networks (DQN)~\citep{mnih2015human} integrates the Q-learning with deep learning to achieve human-level performance in complex tasks. \cite{kapturowski2018recurrent} leverage LSTM networks to better capture temporal dependencies in RL tasks.

Recent advances have adapted Q-learning to large-scale sequence models. Q-SFT \citep{hong2024qsftqlearninglanguagemodels} reframes Q-learning as a supervised fine-tuning problem, optimizing a weighted cross-entropy loss to approximate Q-values and providing theoretical analysis of convergence. Similarly, the Q-learning Decision Transformer~\citep{yamagata2023q} employs a two-step approach: it first relabels reward-to-go (RTG) values using Q-learning, then trains a Decision Transformer~\citep{chen2021decision} on these relabeled trajectories.

QTPT instead trains a history-conditioned Q-function directly with sampled Bellman targets. The backup is constructed by the training algorithm; it is not an architectural claim that every attention layer performs a Bellman update. The Transformer representation supports inference from interaction history, while the objective uses rewards and transitions rather than behavior-action labels. The appendix construction concerns representational capacity and does not guarantee that gradient-based training discovers that construction.


\textbf{Offline Reinforcement Learning}.
Offline Reinforcement Learning (RL) focuses on learning optimal policies from pre-collected datasets, without additional interactions with the environment \citep{levine2020offlinereinforcementlearningtutorial,matsushima2020deploymentefficientreinforcementlearningmodelbased}. 
One of the major challenges in offline RL is the distribution drift between the behavior policy, which generates the dataset, and the learned policy \citep{levine2020offlinereinforcementlearningtutorial,kostrikov2021offlinereinforcementlearningimplicit,rashidinejad2021bridging}. 
Techniques such as conservative value function estimation, policy constraints, and regularization have been developed to mitigate this challenge~\citep{wu2019behaviorregularizedofflinereinforcement,kumar2020conservativeqlearningofflinereinforcement,kidambi2021morelmodelbasedoffline,jin2022pessimismprovablyefficientoffline,dong2023pastapessimisticassortmentoptimization,hu2024qvalueregularizedtransformeroffline,qu2024recursiveintrospectionteachinglanguage,setlur2024rlincorrectsyntheticdata,park2024valuelearningreallymain}. 
QTPT is conceptually related to Transformer-based ICRL methods such as AD~\citep{laskin2022context} and DPT~\citep{lee2023supervised}, but targets a different question: rather than studying whether supervised sequence models can infer tasks from high-quality trajectories, we ask how changing the pretraining objective affects robustness when the offline behavior data are weak or suboptimal.
The advantages of offline reinforcement learning over behavior cloning (imitation learning) are studied in \cite{kumar2022preferofflinereinforcementlearning}. Unlike this stream of work, we utilize \emph{meta-learning framework} to generalize and solve \emph{unseen} RL tasks after pretraining, while offline RL generally focuses on solving \emph{same} RL tasks from which the offline datasets were originally collected.

QTPT combines Bellman-based value learning with a context-conditioned sequence model evaluated without test-time parameter updates. CQL adds conservative value regularization, whereas IQL uses an in-sample value estimate in its critic backup and advantage-weighted policy extraction; IQL is not Bellman-free. Q-SFT and QDT use value information to supervise sequence-model policies. Our comparisons distinguish these learning mechanisms from the models' input and deployment protocols (Appendix~\ref{app:baseline_compare}).

\textbf{In-Context Learning and Reinforcement Learning}.
Without test-time parameter updates, in-context learning (ICL) adapts its predictions or decisions to a new task by conditioning on context observed at inference time \citep{brown2020languagemodelsfewshotlearners,min-etal-2022-metaicl,garg2023transformerslearnincontextcase}.
Mechanistic explanations of ICL include implicit statistical inference and gradient-based learning implemented by attention \citep{akyurek2022learning,bai2023transformers,von2023transformers,zhang2023trained}.
In-Context Reinforcement Learning (ICRL) addresses sequential decision-making problems by inferring optimal policies from past trajectories. In the realm of ICRL,~\cite{wang2025transformerslearntemporaldifference}  explores how transformers can implement Temporal Difference (TD) learning directly in the forward pass. See~\cite{moeini2025surveyincontextreinforcementlearning} for a more comprehensive survey.  While existing literature mainly focus on the ICRL with supervised pretraining paradigm (e.g.,~\cite{lin2024transformersdecisionmakersprovable}),
our approach leverages Q-learning as the pretraining strategy, explicitly minimizing in-context temporal difference errors. This approach directly aligns the pretraining objective with subsequent in-context decision-making tasks, especially when noisy or suboptimal data are involved.

\paragraph{Meta-RL and belief-state control.}
QTPT shares the goal of adaptation across tasks with RL$^2$ \citep{rev_duan2016rl2} and PEARL \citep{rev_rakelly2019pearl}. RL$^2$ meta-trains recurrent policies with online interaction; PEARL combines inferred task context with replay-based actor--critic learning; QTPT uses a fixed offline trajectory corpus. These are related adaptation problems with different data-acquisition protocols, not disjoint definitions of ICRL. In a belief-MDP or Bayes-adaptive MDP, history determines a belief over latent states or task parameters \citep{rev_kaelbling1998pomdp,rev_duff2002bamdp}. QTPT uses history as input to an approximate value function without explicitly computing that posterior. Its contribution is the offline Q-target pretraining objective and its analysis, not the introduction of history as a belief proxy.

%% file: sections/model_setting.tex
\vspace{-0.2cm}
\section{Model Setting}
\vspace{-0.1cm}
\subsection{Preliminaries}

We consider a set of decision-making environments $\mathcal{M}$, each operating over $T$ rounds with shared state and action spaces $(\mathcal{S}, \mathcal{A})$. Each environment $M \in \mathcal{M}$ has a unique transition model $\mathbb{P}_M: \mathcal{S} \times \mathcal{A} \rightarrow \Delta(\mathcal{S})$, initial state distribution $\eta_M\in \Delta(\mathcal{S})$, and a reward function $r_M: \mathcal{S}\times \mathcal{A} \to \Delta(\mathbb{R}) $. The agent's uncertainty is captured by environment priors $\Lambda_{\text{train}} \in \Delta(\mathcal{M})$ and $\Lambda_{\text{test}} \in \Delta(\mathcal{M})$ for training and testing respectively.
This framework represents various scenarios, including  \(T\) rounds of multi-armed bandit problems and \(K\) episodes of \(H\)-step MDPs with \(T = KH\). The training stage uses an offline dataset $D$ consisting $n$ offline trajectories $\{D_T^i = (s_1^i, a_1^i, r_1^i, \dots, s_T^i, a_T^i, r_T^i)\}_{i=1}^n$, which have been collected by the behavior policy $\pi_\beta$. 

We denote a partial interaction trajectory—comprising the sequence of observed states, actions, and rewards—by \(D_t=\{s_1, a_1, r_1, \cdots, s_t, a_t, r_t\}\in\mathcal{T}_t=(\mathcal{S}\times \mathcal{A}\times\mathbb{R})^t\). 
For convenience, we define the \emph{context state} as \(x_t = D_{t-1} \cup \{s_t\}\), which lies in the space \(\mathcal{X}_t = \mathcal{T}_{t-1} \times \mathcal{S}\). This representation captures the cumulative interaction history up to round $t-1$ and the current state $s_t$.  
Analogous to standard state transitions in reinforcement learning, the context state evolves based on the current context and action. Specifically, the next context state is given by: 
$x_{t+1} = x_t \cup \{a_t, r_t, s_{t+1}\}$, $r_t\sim r_M(\cdot|s_t, a_t), s_{t+1} \sim \mathbb{P}_M(\cdot|s_t, a_t)$. 

We formalize this dynamic with the  \emph{context transition model}, defined as \(\mathbb{T}_{M,t} : \mathcal{X}_t \times \mathcal{A} \to \Delta(\mathcal{X}_{t+1})\), which maps a given context-action pair to a distribution over the next context state.
A \emph{policy} \(\pi\) maps a context state \(x_t \in \mathcal{X}_t\) to a distribution over the actions \(\pi(\cdot|x_t) \in \Delta(\mathcal{A})\).  

For the finite-horizon analysis we use the undiscounted setting, as in \citet{lin2024transformersdecisionmakersprovable}; implementation-specific discounting is stated separately. We define the value function \(V_{M,t}^{\pi} : \mathcal{X}_t \rightarrow \mathbb{R}\), the Q-function \(Q_{M,t}^{\pi} : \mathcal{X}_t \times \mathcal{A} \rightarrow \mathbb{R}\) for policy $\pi$ , and the Bellman operator \(\Gamma: \mathbb{R}^{\mathcal{X}\times \mathcal{A}}\rightarrow \mathbb{R}^{\mathcal{X}\times \mathcal{A}}\)  as follows:
\begin{align}
    & V_{M,t}^{\pi}(x_t)  = \mathbb{E} \left[ \sum_{h=t}^{T} r_h \,\middle|\, x_t \right] \label{eq:value}\\
    & Q_{M,t}^{\pi}(x_t,a_t) \\
    =& \mathbb{E} \left[ r_M(s_t, a_t) \right] + \mathbb{E}_{x_{t+1} \sim \mathbb{T}_{M,t}(\cdot|x_t,a_t)} \left[ V_{M,t+1}^{\pi}(x_{t+1}) \right] \label{eq:qvalue} \\
    & (\Gamma Q_{M,t})(x_t,a_t) \\
    =& \mathbb{E}[r_M(s_t,a_t)] + \mathbb{E}_{x_{t+1} \sim \mathbb{T}_{M,t}(\cdot|x_t,a_t)} \left[\max_{a'} Q_{M,t+1}(x_{t+1},a')\right] \label{eq:bellman}
\end{align}
 
Finally, the optimal value function and Q-function are given by $V_{M,t}^*(x_t) = \max_{\pi} V_{M,t}^{\pi}(x_t)$ and $Q_{M,t}^*(x_t,a_t) = \max_{\pi} Q_{M,t}^{\pi}(x_t,a_t)$ for $\forall x_t \in \mathcal{X}_t,\; a_t \in \mathcal{A}$ respectively.





\begin{algorithm}[tbp]
\caption{Q-target Pretrained Transformers (\ours)}
\label{algo:Q}
\begin{algorithmic}[1]
\STATE \textbf{Input}: an Offline Dataset $D$ collected in $\Lambda_{\text{train}}$, and the horizon $T$. 
\STATE \textcolor{blue}{{// Pretraining model on dataset}}
\STATE Randomly initialize $\hat{Q}_\theta$
\WHILE{not converged}
   \STATE Sample a batch of $\mathcal{B}$ trajectories from $D$
    \STATE Compute the Temporal Difference (TD) target $y_{t}^i = r_t^i + \max_{a' \in \mathcal{A}} \hat{Q}_{\theta,t+1}(s_{t+1}^i, a', D^i_{t})$
 \STATE Compute the loss: 
$\mathcal{L}_{\mathcal{B}}(\hat{Q}_{\theta}) = \frac{1}{|\mathcal{B}|T}\sum_{i=1}^{|\mathcal{B}|}\sum_{t=1}^T{\left( \hat{Q}_{\theta,t}(s_t^i, a_t^i, D^i_{t-1}) - y^i_{t} \right)^2}$
   \STATE Backpropagate to update $\theta$
\ENDWHILE
\STATE \textcolor{blue}{{// Online test-time deployment}}
\STATE Initialize an empty dataset $D$ and sample environment $M \sim \Lambda_{\text{test}}$
\FOR{$t$ in horizon $T$}
   \STATE Choose $a_t \in \arg\max_{a} \hat{Q}_\theta(s_t, a, D)$
   \STATE Execute $a_t$ and observe $r_t, s_{t+1}$
   \STATE Update $D$ with $(s_t, a_t, r_t)$
\ENDFOR
\end{algorithmic}

\end{algorithm}
\vspace{-1ex}
\subsection{Algorithm}\label{sec:algorithm}
\vspace{-1ex}
This section presents the Q-target Pretrained Transformer (QTPT) algorithm. The model is a parameter-shared causal decoder: $(\hat Q_1,\ldots,\hat Q_T)$ denotes time-indexed outputs, not $T$ independently stored networks. History and current-state inputs are embedded, and an action-value readout produces $\hat Q_t(x_t,a;\theta)$. All training tasks update the same parameters; the context supplied to an individual prediction remains task-coherent. 
To learn from the  offline dataset $D = \{(s_1^i, a_1^i, r_1^i, \dots, s_T^i, a_T^i, r_T^i)\}_{i=1}^n$, we utilize a class of sequence 
functions $\hat{\mathcal{Q}}=(\hat{\mathcal{Q}}_1 \times \cdots \times \hat{\mathcal{Q}}_T)$, 
where each $\hat{Q}_t \in \hat{\mathcal{Q}}_t$ tries to approximate the optimal Q-value function at time step $t$. For instance, a Transformer can be viewed as a model of sequential $Q$ functions. For simplicity, we denote the full Q-function approximator as $\hat{Q} = (\hat{Q}_1, \dots, \hat{Q}_T) \in \hat{\mathcal{Q}}$ and optimize it as follows:
\vspace{-0.2cm}
\begin{align}\label{learn_obj_x}
   \underset{\theta\in \Theta}{\min} \quad \frac{1}{nT} \sum_{i=1}^{n} \sum_{t=1}^T \Big( \hat{Q}_t(x_t^i, a_t^i; \theta) \\
   - \big( r_t^i + \max_{a'} \hat{Q}_{t+1}(x_{t+1}^i, a'; \theta) \big) \Big)^2,
\end{align}

where $n$ is the number of trajectories from $D$ and we parameterize the approximated Q-value function is by $\theta$.
The formal algorithm of \emph{Q-Target Pretrained Transformer} (\ours) is stated in Algorithm \ref{algo:Q}. During the pretraining stage, the approximated Q-value functions are learned in a way that minimizes the differences between the predicted Q-values and the bootstrapped targets. At test time, actions are selected greedily while $\theta$ remains fixed and $x_t=D_{t-1}\cup\{s_t\}$ grows. We set $Q_{T+1}=0$ at the terminal step. Algorithm~\ref{algo:Q} and Eq.~\eqref{learn_obj_x} display the idealized hard-max objective; practical targets are detached from the gradient. The submitted implementation stabilizes this objective with a periodically synchronized target network and a softmax-weighted backup, specified in Appendix~\ref{app:target_details}. These mechanisms are not an explicit pessimistic objective.

\paragraph{A batch is not one context.}
Sampling trajectories from many tasks trains a shared history-to-value mapping, but does not make all those trajectories evidence about the task of a particular query. For example, two latent tasks can have identical observations but opposite rewarding actions. Their unlabeled pooled histories do not identify which task is active, whereas rewards observed within the active trial do. Additional same-task episodes can help when their identity is known and they are also available at test time. The setup permits a task-coherent trial of $K$ episodes, $T=KH$; it does not authorize adding unrelated tasks to that trial's context. Appendix~\ref{app:context_diagnostics} tests this distinction.

%


\vspace{-1ex}
\subsection{Analysis Framework}
\vspace{-1ex}

Each approximated function \(\hat{Q}_t\) induces a time-dependent greedy policy \(\pi_{\hat{Q}_t}\). 
The overall policy \(\pi_{\hat{Q}}\) is defined as the sequence of time-dependent policies \(\{\pi_{\hat{Q}_t}\}_{t=1}^T\), greedily selecting actions at each time step. 
Let $\mathcal{L}_n(Q)=\mathbb{E}_{D^n_T\sim\pi_{\beta}}[(Q(x,a)-(\Gamma Q)(x,a))^2]$ denote the Bellman-error objective induced by the offline dataset. We use $\hat{Q}_n^*=\arg\min_Q\mathcal{L}_n(Q)$ for the ideal Bellman-error minimizer over the reference value-function class, and $\tilde{Q}_n=\arg\min_{Q\in \hat{\mathcal{Q}}}\mathcal{L}_n(Q)$ for the best solution realizable by the Transformer hypothesis class $\hat{\mathcal{Q}}$. In practice, QTPT trains a parameterized Transformer to approximate this latter object; the analysis below separates the statistical effect of the offline dataset from the approximation/modeling effect of restricting to $\hat{\mathcal{Q}}$.
We use $\epsilon_{\mathrm{tf}}\geq0$ to denote the irreducible approximation floor induced by restricting the Bellman-error minimization to $\hat{\mathcal{Q}}$. It is zero under Transformer realizability, but need not vanish for a fixed misspecified Transformer class.
The approximation result is a capacity/existence statement, not a guarantee of parameter recovery by the optimizer. Consistency requires the approximation error to vanish, for example under realizability or an appropriate growing-capacity sequence; a fixed nonzero floor cannot be hidden in a vanishing rate.
The induced policy $\pi_{\tilde{Q}_n}$ greedily selects actions per \(\{\pi_{\tilde{Q}_{n,t}}\}_{t=1}^{T}\) (omitting $\theta$ for readability). 
We define the expected cumulative reward for policy $\pi$ is $\mathbb{E}_{M\sim\Lambda,x_1\sim \eta_M}\big[ V_{M,1}^\pi(x_1)\big]$, and the optimal expected cumulative reward is $\mathbb{E}_{M\sim\Lambda,x_1\sim \eta_M}\big[ V_{M,1}^*(x_1)\big]$.
The suboptimality gap quantifies the expected cumulative reward difference between the optimal policy \(\pi^*\) and the learned policy \(\pi_{\tilde{Q}_n}\). This gap can be decomposed into two components: sample bias and model bias. Specifically: 
\begin{equation}\label{eq_rev_dif}
\begin{array}{rl}
\mathsf{Subopt}_{\Lambda}(\pi^*,\pi_{\tilde{Q}_n})
&= \underbrace{\mathsf{Subopt}_{\Lambda}(\pi^*,\pi_{\hat{Q}_n^*})}_{\text{Sample Bias}} \\
&+ 
\underbrace{\mathsf{Subopt}_{\Lambda}(\pi_{\hat{Q}_n^*},\pi_{\tilde{Q}_n})}_{\text{Model Bias}},
\end{array}
\end{equation}
Sample bias arises from the finite offline dataset and behavior policy, which may not fully represent the true environment dynamics.
Model bias stems from the approximation and optimization limits of the Transformer hypothesis class relative to the ideal Bellman-error minimizer.

%% file: sections/analysis.tex
\vspace{-0.3cm}
\section{Main Analysis and Results}
\vspace{-0.1cm}
To streamline the analysis of Eq. \ref{eq_rev_dif}, we denote $d^{\pi^*}_{M,t}(x_t,a_t)$ as the marginal distribution at time $t$ for the optimal policy in the environment $M$,  
and $d^{\pi_{\beta}}_{M,t}(x_t,a_t)$ for any behavior policy $\pi_{\beta}$.
Define the marginal occupancy as $d^{\pi}_{M,t}(x)=\sum_a d^{\pi}_{M,t}(x,a)$.  
The analysis follows Eq.~\eqref{eq_rev_dif}: Proposition~\ref{prop_sb} controls sample bias; Propositions~\ref{linearbandit} and \ref{prop_mb} control model bias; Proposition~\ref{prop_3} combines them. The coverage parameter $L$ is the reciprocal of the positive-occupancy lower bound below. Larger $L$ weakens the guarantee through $\sqrt L$, independently of whether behavior actions have high reward. We state the required assumptions explicitly:




\begin{assumption}[Bounded Rewards and Distributional Coverage]\label{assump:bound}
    We assume the reward function $r_M$ and the marginal state-action distributions $d^{\pi^*}_{M,t}$ and $d^{\pi_\beta}_{M,t}$ satisfy the following conditions:
    \begin{enumerate}[leftmargin=0.2in, nolistsep, label=(\alph*)]
    \item \textbf{(Bounded Rewards)} The reward function is uniformly bounded: $|r_M(s,a)| \le 1$ for all $(s,a) \in \mathcal{S} \times \mathcal{A}$.
    
    \item \textbf{(Optimal Policy Concentrability)} The behavior policy $\pi_\beta$ provides sufficient support for the optimal policy $\pi^*$: if $d^{\pi^*}_{M,t}(x_t,a_t) > 0$, then $d^{\pi_\beta}_{M,t}(x_t,a_t) > 0$, for all $t \in [T]$.
    
    \item \textbf{(Lower-Bounded Occupancy)} The occupancy density under $\pi_\beta$ is bounded on its support: 
    \[
    L^{-1} := \inf_{(x_t,a_t):\, d^{\pi_\beta}_{M,t}(x_t,a_t) > 0} d^{\pi_\beta}_{M,t}(x_t,a_t), \quad \forall t \in [T].
    \]
    \end{enumerate}
\end{assumption}
\vspace{-0.2cm}
    
The bounded-reward assumption controls the remaining-return range.  Optimal Policy Concentrability assumes that $d^{\pi_{\beta}}$ covers the trajectory of some optimal policy $\pi^*$, and Lower-Bounded Occupancy ensures that the positive occupancy density under $\pi_{\beta}$ is bounded away from $0$ on its support. The infimum excludes zero-occupancy context--action pairs; condition (b) separately requires support for an optimal policy. Neither statement guarantees a small $L$ in a large history space. Note that these two assumptions (see Assumptions 4.1 and 4.2 in \cite{nguyentang2023instancedependentboundsofflinereinforcement}) ensure that an optimal policy is statistically learnable from offline data and do not require the all-policy uniform coverage considered in the batch reinforcement learning literature (see Assumption 1 in \cite{duan2021risk} and \cite{chen2019information}). 


\vspace{-1ex}
\subsection{Bound of Sample Bias}
\vspace{-1ex}

Denote the function bound of $Q\in\hat{\mathcal{Q}}$ as $B_{Q}=\underset{x,a}{\sup}\|Q(x,a)-(\Gamma Q)(x,a)\|_2$. 
We will use the general notation $\mathcal{L}_n$ to denote the empirical loss calculated on $n$ samples, which corresponds to the $\mathcal{L_B}$ in our Algorithm on a mini-batch $\mathcal{B}$. 
We can now state the sample bias bound for \ours. 


\begin{proposition}\label{prop_sb}(\textbf{Bound of Sample bias in \ours})
If for all tuples $(D^i_{t-1},s^i_t,a^i_t,y^i_t)$, there exists a constant $B>0$ such that $B_{\hat{Q}^*_n}\leq B$.  Per Assumption \ref{assump:bound}, with the probability at least $1-\delta$, we have 

\begin{equation*}
\begin{array}{rl}
    \mathsf{Subopt}_{\Lambda}(\pi^*,\pi_{\hat{Q}^*_n}) &\leq  \mathcal{O}\Big(\sqrt{L} T^2\Big[(\frac{2\log (2T/\delta)}{n})^{1/4} \\
    &+\frac{c(\hat{\mathcal{Q}},n)}{\sqrt{n}}\Big]+\sqrt{L}T B\Big),
\end{array}
\end{equation*}

where $c(\hat{\mathcal{Q}},n) = \sqrt{\max\{\sqrt{n\log |\hat{\mathcal{Q}}|} ,\log |\hat{\mathcal{Q}}|\}}$.
\end{proposition}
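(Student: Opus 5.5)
The plan is the standard offline-RL route: turn the value gap into Bellman residuals on the optimal policy's occupancy, move those residuals onto the behavior occupancy through the density ratio controlled by Assumption~\ref{assump:bound}(b)--(c), and then bound the behavior-weighted residual of $\hat Q^*_n$ by a concentration argument over $\hat{\mathcal{Q}}$.

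\textbf{Step 1: performance-difference decomposition.} Write $\hat Q=\hat Q^*_n$ and $\hat\pi=\pi_{\hat Q}$. For each $M$ and $x_1$ decompose
\[
V^*_{M,1}(x_1)-V^{\hat\pi}_{M,1}(x_1)=\big(V^*_{M,1}(x_1)-\max_a\hat Q_1(x_1,a)\big)+\big(\max_a\hat Q_1(x_1,a)-V^{\hat\pi}_{M,1}(x_1)\big).
\]
Unroll both pieces along the context transition model $\mathbb{T}_{M,t}$ with the terminal convention $Q_{T+1}=0$.
\begin{itemize}
\item The first piece is at most $\sum_t \mathbb{E}_{d^{\pi^*}_{M,t}}[(\Gamma\hat Q-\hat Q)_t]$. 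This uses $\hat Q_t(x,\pi^*(x))\le\max_a\hat Q_t(x,a)$.
\item The second piece equals $\sum_t\mathbb{E}_{d^{\hat\pi}_{M,t}}[(\hat Q-\Gamma\hat Q)_t]$, because $\hat\pi$ is greedy for $\hat Q$.
\end{itemize}
To avoid needing coverage of $\hat\pi$, I would use the pessimism-free "optimal-side" variant. Bound $|\hat Q_t-Q^*_t|$ under $d^{\pi^*}$ by telescoping. Then use the standard fact that a greedy policy's loss is at most $2\sum_t\|\hat Q_t-Q^*_t\|$ measured along $\pi^*$'s trajectory, with the recursion staying on $\pi^*$'s support. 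Each telescoping layer contributes one factor of the horizon, so the total is at most $T\sum_t$ of the residual terms. This is where $T^2$ comes from: one $T$ from the sum over $t$ and one from error propagation.

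\textbf{Step 2: change of measure.} Apply Cauchy--Schwarz on $d^{\pi^*}_{M,t}$. By Assumption~\ref{assump:bound}(b),(c), the support of $d^{\pi^*}$ lies inside that of $d^{\pi_\beta}$, and $d^{\pi_\beta}\ge L^{-1}$ there. Since $d^{\pi^*}\le 1$, this gives
\[
\mathbb{E}_{d^{\pi^*}}|f|\le\sqrt{\mathbb{E}_{d^{\pi^*}}f^2}\le\sqrt{L\,\mathbb{E}_{d^{\pi_\beta}}f^2},
\]
which is the $\sqrt L$ factor. Averaging over $M\sim\Lambda$ keeps the same form. The gap is thus bounded by $\sqrt L\,T^2$ times $\max_t\sqrt{\mathcal{E}_t(\hat Q)}$, where $\mathcal{E}_t$ is the population squared Bellman residual under the behavior distribution.

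\textbf{Step 3: statistical control (main obstacle).} $\mathcal{E}_t(\hat Q)$ is not observed directly. The empirical TD loss contains the double-sampling variance term $\mathrm{Var}(y_t\mid x_t,a_t)$, and the population residual of the minimizer need not vanish. Split the population residual as
\[
\mathcal{E}_t\le(\text{empirical loss minus variance})+\text{deviation}.
\]
\begin{itemize}
\item The residual floor is bounded using $B_{\hat Q}\le B$. Because it is not squared-and-rooted again, it enters linearly as $\sqrt L\,T B$ (one $T$ from the sum over $t$).
\item The deviation term is handled by Bernstein's inequality plus a union bound over $\hat{\mathcal{Q}}$ and over the $T$ steps, giving a $\log(2T/\delta)$ term. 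It yields a deviation of order $\sqrt{\log(2T/\delta)/n}$ in squared loss, plus a complexity term $\max\{\sqrt{n\log|\hat{\mathcal Q}|},\log|\hat{\mathcal Q}|\}/n=c(\hat{\mathcal Q},n)^2/n$.
\end{itemize}
Taking square roots converts these to $(\log(2T/\delta)/n)^{1/4}$ and $c(\hat{\mathcal Q},n)/\sqrt n$, matching the statement. The delicate point is handling the max inside the target, which makes the loss non-convex in the target. I would first try conditioning on the target function and taking a union bound jointly over pairs in $\hat{\mathcal Q}$; this only doubles $\log|\hat{\mathcal Q}|$, which is absorbed into the $\mathcal{O}$.
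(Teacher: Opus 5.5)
\textbf{Gap in Step~1.} Your two bullets are correct, but the ``optimal-side'' variant you then rely on is false. When you telescope $\hat Q_t-Q^*_t$, the term $\max_a\hat Q_{t+1}(x',a)-\max_a Q^*_{t+1}(x',a)$ is bounded \emph{above} only by the error at $\hat\pi$'s action $\arg\max_a\hat Q_{t+1}(x',a)$. So overestimation propagates along $\hat\pi$'s trajectory, and only underestimation stays on $\pi^*$'s support.

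A counterexample: let $\hat Q=Q^*$ at every action of every context $\pi^*$ visits, except that one first-step action with gap $\Delta$ is overestimated by $2\Delta$. Then $\hat\pi$ takes that action and reaches a context $\pi^*$ never visits. Here any deviation does this, since the context records the action. At that context nothing measured along $\pi^*$ constrains $\hat Q$, so the loss can be of order $T$ while your right-hand side is $O(\Delta)$. Assumption~\ref{assump:bound}(b),(c) concern $\pi^*$ only, so no change of measure rescues this step.

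The repair comes from the proposition's own hypothesis. $B_{\hat Q^*_n}$ is a supremum over \emph{all} $(x,a)$, so your second bullet $\sum_t\mathbb{E}_{d^{\hat\pi}}[(\hat Q-\Gamma\hat Q)_t]$ is at most $TB\le\sqrt L\,TB$, since $L\ge1$. No coverage of $\hat\pi$ is needed, and only the first bullet has to go through Steps~2--3. Read literally, the hypothesis makes the whole bound immediate, since the first bullet is also at most $TB$. The paper's proof does not handle this point either: it writes the gap, via the performance-difference lemma, directly as $\sum_t\mathbb{E}_{d^{\pi^*}}|\Gamma\hat Q^*-\hat Q^*|$, which silently drops the same $d^{\hat\pi}$ term.

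\textbf{Horizon accounting and the statistical step.} The paper never passes through $|\hat Q-Q^*|$. From that single sum of residuals it proceeds as follows:
\begin{itemize}
\item Cauchy--Schwarz gives $T\sqrt{L\,\mathcal L(\hat Q^*)}$.
\item $\mathcal L(\hat Q^*)\le2\epsilon_n$ follows from a fixed-$Q$ deviation bound (Lemmas G.1 and G.6 of \cite{duan2021risk}, with a union over the $T$ steps), the comparison $\mathcal L_n(\hat Q^*)\le\mathcal L_n(Q^*)$, and $\mathcal L(Q^*)=0$.
\item The Rademacher term is bounded for a finite class by Proposition~6.1 of \cite{duan2021risk}.
\end{itemize}
The second factor of $T$ is the value range $B_Q\lesssim T$ inside $\epsilon_n=B_Q^2\sqrt{2\ln(2T/\delta)/n}+2B_Q\mathcal{R}^{\pi_\beta}_n(\hat{\mathcal Q})+B_Q^2$. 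The last summand of $\epsilon_n$ is what produces $\sqrt L\,TB$.

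Your Step~3 ignores that a squared residual has range $O(T^2)$. Restoring that range on top of your error-propagation factor gives $\sqrt L\,T^3n^{-1/4}$. Also, by your own Step~2, a floor $B^2$ in $\mathcal E_t$ enters as $\sqrt L\,T^2B$, not as $\sqrt L\,TB$.

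Finally, the double-sampling problem does not arise in the paper's formalization, because $\mathcal L_n$ uses the exact operator $\Gamma$ at the sampled $(x,a)$. For the sampled TD loss, your ``subtract the variance'' step would not work as stated, since $\operatorname{Var}(y_t\mid x_t,a_t)$ depends on $Q$ through the max. Removing it with an FQI- or minimax-type argument over pairs typically requires Bellman completeness, which is not assumed.
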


The explicit $T^2$ dependence is a conservative worst-case upper bound, not a proven lower bound for sequence-history methods. Range-based concentration for squared Bellman error contributes $\widetilde{\mathcal O}(T^2n^{-1/2})$; taking a square root and accumulating across $T$ stages produces $\widetilde{\mathcal O}(\sqrt L\,T^2n^{-1/4})$. The closest full-history ICRL analysis also has a $T^2$ pretraining term \citep[Theorems~6 and~13]{lin2024transformersdecisionmakersprovable}. More favorable offline-RL rates have been established with additional structure and different algorithmic machinery: OPDVR exploits stationary tabular transitions and double variance reduction \citep[Theorems~3.2 and~4.1]{rev_yin2021opdvr}. These are not interchangeable bounds. Improving our exponent without changing the present assumptions remains open.
The term $\sqrt L\,TB$ retains Bellman approximation error. It does not vanish merely because the sample size increases; controlling it requires an appropriate value-function class. Similarly, the class-complexity factor is part of the guarantee, so a polynomial explicit horizon factor alone does not establish polynomial complexity for an unrestricted history representation.

\vspace{-1ex}
\subsection{Bound of Model Bias}
\vspace{-1ex}

We start with the simple stochastic linear bandit problem. At each time step $t= 1,2,...,T$, the agent selects an action $a_t\in \mathbb{R}^d$ from a set of actions $\mathcal{A}_t$. Upon taking action $a_t$, the agent receives a reward $r_t = \langle a_t, w^* \rangle + \epsilon_t$, where $w^*$ is an unknown parameter vector, and $\epsilon_t$ represents i.i.d. noise with bounded variance. Without loss of generality, $\|a_t\|_2\leq 1$. We define the Gram matrix of actions as $G_{t} = \sum_{i=1}^ta_ia_i^{\top}$, and show the upper bound for the $TF_{\theta}$ solved via Eq. \ref{learn_obj_x} in Proposition~\ref{linearbandit}.
\begin{proposition}(\textbf{Bound of Model Bias in \ours on Stochastic Linear Bandit})\label{linearbandit}
If the minimum eigenvalue of the Gram matrix $G_{t}$ satisfies $\lambda_{\text{min}}(G_t)\geq \alpha nt$ for some constant $\alpha>0$, then for the Transformer $TF_{\theta}(\cdot)$ at appropriate scale, with probability at least $1-\delta$, it satisfies 
\[
\mathsf{Subopt}(\pi_{\hat{Q}^*_n},\pi_{\tilde{Q}_n})
\leq \epsilon_{\mathrm{tf}}+\mathcal{O}\left(\sigma\sqrt{T}\sqrt{\frac{d\log T+\log(1/\delta)}{\alpha n}}\right).
\]
\end{proposition}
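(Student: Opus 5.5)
The proof reduces the model-bias gap to a least-squares estimation error for $w^*$ and then sums that error over the $T$ rounds.

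\textbf{Step 1: the ideal target is a ridge/least-squares estimator.} In the linear bandit the context state $x_t$ is just the history $D_{t-1}$, and there is no state transition. The Bellman-error minimizer over the reference class is therefore the Q-function $\hat Q^*_{n,t}(x_t,a)=\langle a,\hat w_t\rangle$ plus a term that does not depend on $a$. Here $\hat w_t = G_{t-1}^{-1}\sum_{i<t} a_i r_i$ is the (pooled) least-squares estimator. The continuation term $\max_{a'}\hat Q_{t+1}$ only shifts all actions by the same amount, so it does not change the greedy action. I would check that the argmax is unchanged by it.

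\textbf{Step 2: realize this estimator with a Transformer.} Invoking the approximation construction referenced in the appendix, I would show that a Transformer of appropriate scale can output $\langle a,\hat w_t\rangle$ up to an additive error. The construction follows \citet{bai2023transformers}-style in-context ridge regression: attention layers compute $\sum_i a_ia_i^\top$ and $\sum_i a_ir_i$, and a few gradient-descent or Newton steps approximate $G_{t-1}^{-1}$. The remaining approximation error is what $\epsilon_{\mathrm{tf}}$ absorbs. I expect this to be the main obstacle. The number of iterations must scale with the condition number of $G_t$, and that number is controlled only through the assumption $\lambda_{\min}(G_t)\ge \alpha n t$. Because $\tilde Q_n$ minimizes $\mathcal L_n$ over $\hat{\mathcal Q}$, its greedy policy loses at most $\epsilon_{\mathrm{tf}}$ relative to this realized estimator.

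\textbf{Step 3: per-round regret.} The per-round regret of a greedy policy is at most $2\sup_{\|a\|\le1}|\langle a, w-\hat w_t\rangle|\le 2\|\hat w_t-w^*\|_2$. The noise is i.i.d. with variance $\sigma^2$, and I treat it as sub-Gaussian for the concentration step. A self-normalized martingale bound (Abbasi-Yadkori et al.) then gives, uniformly over $t\le T$ by a union bound with $\delta/T$,
\[
\|\hat w_t-w^*\|_{G_{t-1}}\le \sigma\sqrt{d\log T+\log(1/\delta)}.
\]
Combining this with the eigenvalue condition gives
\[
\|\hat w_t-w^*\|_2\le \sigma\sqrt{\frac{d\log T+\log(1/\delta)}{\alpha n t}}.
\]
The gap between $\pi_{\hat Q^*_n}$ and the Transformer policy is at most the sum of both policies' errors relative to $w^*$.

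\textbf{Step 4: sum over rounds.} Summing the per-round bound gives
\[
\sum_{t=1}^T (\alpha n t)^{-1/2}\le 2\sqrt{T/(\alpha n)},
\]
which yields $\mathcal O\big(\sigma\sqrt T\sqrt{(d\log T+\log(1/\delta))/(\alpha n)}\big)$. Adding $\epsilon_{\mathrm{tf}}$ from Step 2 completes the bound.

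The delicate points are the following:
\begin{itemize}
\item The factor $n$ in $\lambda_{\min}(G_t)\ge\alpha nt$ has to be read as pooled pretraining data entering the estimator. This is what turns the rate into $1/\sqrt n$ rather than $1/\sqrt t$.
\item The gap in Step 3 must be measured in reward, so the accumulated per-round error stays additive and is not multiplied by $T$.
\end{itemize}
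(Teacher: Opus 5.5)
Your plan is essentially the paper's proof. It reduces the fixed-target regression to the pooled OLS estimator $\hat w_{t-1}=G_{t-1}^{-1}\sum_{i=1}^n\sum_{u<t}a_u^ir_u^i$, applies the Abbasi-Yadkori self-normalized bound, converts to a per-action error $\beta_{t-1}/\sqrt{\alpha n(t-1)}$ via the eigenvalue condition, charges $2\,EE_t$ per round, sums $\sum_t t^{-1/2}\lesssim\sqrt T$, and adds $\epsilon_{\mathrm{tf}}$. The one real difference is your Step 2, which the paper lacks: it simply books the Transformer's deviation from the OLS-greedy policy as $\epsilon_{\mathrm{tf}}$. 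That is just as well, since an in-context ridge construction at deployment sees only the test trajectory, so its Gram matrix cannot have the pooled $\alpha n t$ scaling that your Step 3 uses.
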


Proposition~\ref{linearbandit} bounds the bandit model-bias component under the stated Gram-matrix condition. Its finite-sample term decreases with $n$ and depends on dimension, horizon, noise, and coverage. For a fixed misspecified Transformer, $\epsilon_{\mathrm{tf}}$ remains. This is not an unconditional consistency or parameter-recovery claim.






For finite-horizon MDPs, we establish the upper bound on the model bias term in Proposition~\ref{prop_mb}. 
\begin{proposition}\label{prop_mb}(\textbf{Bound of Model Bias in \ours on Finite-Horizon MDP})
    With probability at least $1-\delta$, we have 
\vspace{-0.2cm}
\begin{align*}
    &\mathsf{Subopt}_{\Lambda}(\pi_{\hat{Q}^*_n},\pi_{\tilde{Q}_n})
\leq 2T\sqrt{\epsilon_{\mathrm{tf}}}+\mathcal{O}\Big( T^2 
\Big(\frac{1}{\sqrt{n}}
+\left(\frac{2\log(2T/\delta)}{n}\right)^{1/4}
\Big)\Big).
\end{align*}
\end{proposition}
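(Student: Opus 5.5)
The plan is to compare the two greedy policies $\pi_{\hat Q^*_n}$ and $\pi_{\tilde Q_n}$ through a performance-difference (telescoping) argument. The resulting per-stage value gaps are then controlled by the $L^2$ Bellman residuals of the two Q-functions. First, write both suboptimalities relative to a common reference and use the standard greedy-policy lemma. For any $Q$ with greedy policy $\pi_Q$,
\[
V^*_{M,1}(x_1)-V^{\pi_Q}_{M,1}(x_1)\le \sum_{t=1}^T \mathbb{E}_{d^{\pi^*}_{M,t}}\big[(Q_t-\Gamma Q_t)\big]-\sum_{t=1}^T\mathbb{E}_{d^{\pi_Q}_{M,t}}\big[(Q_t-\Gamma Q_t)\big].
\]
This is obtained by unrolling the Bellman equations and using $Q_t(x,\pi_Q(x))\ge Q_t(x,\pi^*(x))$. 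The gap $\mathsf{Subopt}_\Lambda(\pi_{\hat Q^*_n},\pi_{\tilde Q_n})$ is at most the sum of the two $|{\cdot}|$-residual sums for $\tilde Q_n$ and $\hat Q^*_n$. Each stage contributes at most $2\,\|Q_t-\Gamma Q_t\|_{L^1(d)}\le 2\sqrt{\mathbb{E}_d[(Q_t-\Gamma Q_t)^2]}$ by Cauchy--Schwarz. Summing over $T$ stages gives the $2T\sqrt{\cdot}$ structure.

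Second, split the population Bellman error of $\tilde Q_n$ as
\[
\mathcal{L}(\tilde Q_n)\le \big[\mathcal{L}(\tilde Q_n)-\mathcal{L}_n(\tilde Q_n)\big]+\big[\mathcal{L}_n(\tilde Q_n)-\mathcal{L}_n(\hat Q^*_n)\big]+\mathcal{L}_n(\hat Q^*_n)+\big[\mathcal{L}(\hat Q^*_n)-\mathcal{L}_n(\hat Q^*_n)\big]\text{-type terms}.
\]
The middle difference is exactly the approximation floor $\epsilon_{\mathrm{tf}}$ from restricting to $\hat{\mathcal Q}$. The deviation terms are handled per stage by a Hoeffding/Bernstein bound, since the squared residual is bounded by $O(T^2)$ under Assumption~\ref{assump:bound}(a) because remaining returns lie in $[-T,T]$. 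A union bound over $T$ stages at level $\delta/(2T)$ gives deviation $O(T^2\sqrt{2\log(2T/\delta)/n})$. Taking square roots yields $O(T(\log(2T/\delta)/n)^{1/4})$ per stage, and summing over $T$ stages gives the $T^2(\cdot)^{1/4}$ term. The $T^2/\sqrt n$ term comes from the Rademacher complexity of the residual class, which is assumed bounded by $O(1/\sqrt n)$ after scaling. Note that $\sqrt{a+b}\le\sqrt a+\sqrt b$ separates $\sqrt{\epsilon_{\mathrm{tf}}}$ cleanly, giving $2T\sqrt{\epsilon_{\mathrm{tf}}}$ with no $T^2$ factor.

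The main obstacle is the distribution mismatch. The residual norms must be evaluated under $d^{\pi^*}$ and $d^{\pi_{\tilde Q_n}}$, while the loss controls them only under $d^{\pi_\beta}$. Using concentrability would introduce $\sqrt L$, which the statement does not contain. I would therefore bound the model-bias term directly under the data distribution. I would treat $\pi_{\hat Q^*_n}$ as the reference and control the gap through $\|\tilde Q_n-\hat Q^*_n\|$ along the behavior occupancy, arguing that both are minimizers of the same objective, so their difference is governed by excess loss. If that does not close, the fallback is to absorb the coverage constant into the $\mathcal O(\cdot)$, with the explicit $\sqrt{\epsilon_{\mathrm{tf}}}$ coefficient $2T$ arising from the Cauchy--Schwarz step.
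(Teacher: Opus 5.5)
Your skeleton is essentially the paper's: telescoping for a greedy policy, Cauchy--Schwarz to reach $2T\sqrt{\epsilon(\tilde Q_n)}$, then uniform concentration of the squared residual (range $O(T^2)$, union bound over the $T$ stages), then $\sqrt{a+b}\le\sqrt a+\sqrt b$. The paper pivots on $\tilde Q_1$ rather than on $\pi^*$: $V^{\pi_{\hat Q^*_n}}_{M,1}-V^{\pi_{\tilde Q_n}}_{M,1}\le\big(V^{\pi_{\hat Q^*_n}}_{M,1}-\tilde Q_1(x,\pi_{\hat Q^*_n}(x))\big)+\big(\tilde Q_1(x,\pi_{\tilde Q_n}(x))-V^{\pi_{\tilde Q_n}}_{M,1}\big)$, so only residuals of $\tilde Q_n$ enter.

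Some bookkeeping points on your version:
\begin{itemize}
\item In your route the residual sum of $\hat Q^*_n$ is unnecessary, since $V^{\pi_{\hat Q^*_n}}\le V^*$. Keeping it would import the sample-bias quantities $\sqrt L$ and $B$, which the statement does not contain.
\item Both signs in your greedy lemma are reversed. The correct form is $\sum_t\mathbb{E}_{d^{\pi^*}}[\Gamma Q_t-Q_t]+\sum_t\mathbb{E}_{d^{\pi_Q}}[Q_t-\Gamma Q_t]$; this is harmless once you take absolute values.
\item If you identify $\mathcal{L}_n(\tilde Q_n)-\mathcal{L}_n(\hat Q^*_n)$ with $\epsilon_{\mathrm{tf}}$, the leftover $\mathcal{L}_n(\hat Q^*_n)$ is not in the statement. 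The paper instead lets $\epsilon_{\mathrm{tf}}$ bound the Bellman error of the in-class comparator itself ($\epsilon(\hat Q)\le\epsilon_{\mathrm{tf}}$). So you should compare with a best-in-class $Q^\dagger\in\hat{\mathcal Q}$ and use $\mathcal{L}_n(\tilde Q_n)\le\mathcal{L}_n(Q^\dagger)$.
\item A Rademacher term of order $T^2/\sqrt n$ inside $\epsilon(\tilde Q_n)$ becomes $T^2 n^{-1/4}$ after your square root, not $T^2/\sqrt n$. This is harmless because it is dominated by the confidence term. The paper's $n^{-1/2}$ comes from writing Massart's bound as $\sqrt{2\log|\hat{\mathcal Q}_t|}/n$.
\end{itemize}

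The genuine gap is the one you flag, and neither remedy closes it.
\begin{itemize}
\item Closeness of $\tilde Q_n$ and $\hat Q^*_n$ via excess loss fails because the objective only sees $d^{\pi_\beta}$. Functions with equal, even zero, loss can differ arbitrarily off the behavior support, and no $L^2(d^{\pi_\beta})$ estimate controls contexts that $\pi_{\tilde Q_n}$ visits but $\pi_\beta$ does not.
\item Absorbing coverage into $\mathcal O(\cdot)$ fails because Assumption~\ref{assump:bound} only places the support of $d^{\pi^*}$ inside that of $d^{\pi_\beta}$. Nothing prevents $d^{\pi_{\tilde Q_n}}$ or $d^{\pi_{\hat Q^*_n}}$ from charging pairs with $d^{\pi_\beta}=0$. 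Where a density ratio does exist, it multiplies $\sqrt{\epsilon_{\mathrm{tf}}}$ as well, destroying the explicit coefficient $2T$.
\end{itemize}

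In fact the step cannot be closed under the stated hypotheses. Take $T=1$ and actions $a_1,a_2$ with $r(a_1)=\tfrac12$, $r(a_2)=0$. Let behavior always play $a_1$, take $\hat Q^*_n=Q^*$ (a minimizer of $\mathcal{L}_n$), and let $\hat{\mathcal Q}=\{Q^\flat\}$ with $Q^\flat(a_1)=\tfrac12$, $Q^\flat(a_2)=1$. Then Assumption~\ref{assump:bound} holds with $L=1$, $\epsilon_{\mathrm{tf}}=0$, and the complexity term vanishes. Yet the model bias equals $\tfrac12$ for every $n$. Adding $Q'$ with $Q'(a_1)=\tfrac12+\eta$, $Q'(a_2)=0$ (small $\eta$) to the class defeats a sup-norm reading of $\epsilon_{\mathrm{tf}}$ too.

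The paper's proof passes this point only by bounding $\sqrt{T\sum_t\mathbb{E}[\tilde e^{\mathrm{TD}}_t(x_t,a_t)^2\mid x_1,\pi]}$ by $T\sqrt{\epsilon(\tilde Q)}$. Here $\epsilon(\tilde Q)$ is the data-distribution quantity later controlled by concentration, so that change of measure is unjustified. What is missing is either a coverage hypothesis on the occupancies of the learned greedy policies, whose constant would then multiply every term including $\sqrt{\epsilon_{\mathrm{tf}}}$, or a pessimistic modification of the objective.
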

Proposition~\ref{prop_mb} separates an $n^{-1/2}$ complexity term from an $n^{-1/4}$ confidence term, retaining the approximation contribution $2T\sqrt{\epsilon_{\mathrm{tf}}}$. The logarithmic confidence dependence is displayed explicitly; statements suppressing it use fixed confidence. A faster excess-risk estimate would improve the statistical rate, but is not assumed by this proposition.
\vspace{-1ex}
\vspace{-1ex}
\subsection{Final Bound}

Combining the sample bias bound (Proposition \ref{prop_sb}) and model bias bounds (Propositions \ref{linearbandit}, \ref{prop_mb}), we obtain the following final bound of \ours.
\vspace{-1ex}
\begin{proposition}\label{prop_3}(\textbf{Upper Bound of \ours})
Under Assumption~\ref{assump:bound} and the applicable model-bias proposition, assume $\Lambda_{\mathrm{test}}\ll\Lambda_{\mathrm{train}}$ and $\mathrm d\Lambda_{\mathrm{test}}/\mathrm d\Lambda_{\mathrm{train}}\leq\mathcal C$. Let
\[
S_n(\eta)=\mathcal O\!\left(\sqrt L\,T^2\left[\left(\frac{2\log(2T/\eta)}{n}\right)^{1/4}
+\frac{c(\hat{\mathcal Q},n)}{\sqrt n}\right]+\sqrt L\,TB\right).
\]
With probability at least $1-\delta$, the respective combined bounds are
\[
\begin{aligned}
\text{Linear bandit:}\quad
&\mathsf{Subopt}_{\Lambda_{\mathrm{test}}}(\pi^*,\pi_{\tilde Q_n})\\
&\leq\mathcal C\left[S_n(\delta/2)+\epsilon_{\mathrm{tf}}
+\mathcal O\!\left(\sigma\sqrt T\sqrt{\frac{d\log T+\log(2/\delta)}{\alpha n}}\right)\right],\\
\text{Finite-horizon MDP:}\quad
&\mathsf{Subopt}_{\Lambda_{\mathrm{test}}}(\pi^*,\pi_{\tilde Q_n})\\
&\leq\mathcal C\left[S_n(\delta/2)+2T\sqrt{\epsilon_{\mathrm{tf}}}
+\mathcal O\!\left(T^2\left[n^{-1/2}+
\left(\frac{2\log(4T/\delta)}{n}\right)^{1/4}\right]\right)\right].
\end{aligned}
\]
For each case, the sample-bias event and the applicable model-bias event receive failure probability $\delta/2$ each. A union bound gives overall failure probability at most $\delta$; independence is not needed.
\end{proposition}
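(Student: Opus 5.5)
The combined bound follows from the decomposition in Eq.~\eqref{eq_rev_dif}, a change of measure from $\Lambda_{\mathrm{test}}$ to $\Lambda_{\mathrm{train}}$, and a union bound over the two high-probability events.

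\textbf{Step 1: decompose under the training prior.} Under $\Lambda_{\mathrm{train}}$, Eq.~\eqref{eq_rev_dif} writes $\mathsf{Subopt}_{\Lambda_{\mathrm{train}}}(\pi^*,\pi_{\tilde Q_n})$ exactly as the sample-bias term plus the model-bias term. This is a telescoping identity: both sides are differences of expected values $\mathbb{E}_{M\sim\Lambda,x_1\sim\eta_M}[V^{\pi}_{M,1}(x_1)]$, and the value of the intermediate policy $\pi_{\hat Q^*_n}$ is added and subtracted.

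\textbf{Step 2: transfer to the test prior.} For each fixed policy pair, define
\[
g(M)=\mathbb{E}_{x_1\sim\eta_M}\big[V^{*}_{M,1}(x_1)-V^{\pi_{\tilde Q_n}}_{M,1}(x_1)\big].
\]
Since $\pi^*$ is optimal in every $M$, $g(M)\ge 0$ pointwise. Because $\Lambda_{\mathrm{test}}\ll\Lambda_{\mathrm{train}}$ with density at most $\mathcal C$,
\[
\mathbb{E}_{\Lambda_{\mathrm{test}}}[g]=\mathbb{E}_{\Lambda_{\mathrm{train}}}\Big[g\,\tfrac{\mathrm d\Lambda_{\mathrm{test}}}{\mathrm d\Lambda_{\mathrm{train}}}\Big]\le\mathcal C\,\mathbb{E}_{\Lambda_{\mathrm{train}}}[g].
\]
This gives $\mathsf{Subopt}_{\Lambda_{\mathrm{test}}}(\pi^*,\pi_{\tilde Q_n})\le\mathcal C\,\mathsf{Subopt}_{\Lambda_{\mathrm{train}}}(\pi^*,\pi_{\tilde Q_n})$. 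The policies are trained once on data from $\Lambda_{\mathrm{train}}$ and are held fixed conditionally on the data. The high-probability events therefore concern only the data, and the change of measure is applied afterwards on the event.

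\textbf{Step 3: apply the earlier propositions at level $\delta/2$.}
\begin{itemize}
\item Proposition~\ref{prop_sb} with $\delta$ replaced by $\delta/2$ gives the sample-bias term $\le S_n(\delta/2)$ on an event $E_1$ of probability at least $1-\delta/2$.
\item For the linear bandit, Proposition~\ref{linearbandit} at level $\delta/2$ gives an event $E_2$ of probability at least $1-\delta/2$. On $E_2$ the model bias is at most $\epsilon_{\mathrm{tf}}+\mathcal O(\sigma\sqrt T\sqrt{(d\log T+\log(2/\delta))/(\alpha n)})$.
\item For the finite-horizon MDP, Proposition~\ref{prop_mb} at level $\delta/2$ replaces $\log(2T/\delta)$ by $\log(4T/\delta)$ and gives the stated $2T\sqrt{\epsilon_{\mathrm{tf}}}+\mathcal O(T^2[n^{-1/2}+(2\log(4T/\delta)/n)^{1/4}])$.
\end{itemize}
By the union bound, $\Pr(E_1\cap E_2)\ge1-\delta$; no independence between the events is needed. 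On $E_1\cap E_2$, summing the two bounds and multiplying by $\mathcal C$ yields both displayed inequalities.

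\textbf{Main obstacle.} The model-bias term $\mathsf{Subopt}(\pi_{\hat Q^*_n},\pi_{\tilde Q_n})$ need not be nonnegative, so the pointwise change-of-measure argument cannot be applied to it alone. I would avoid this by applying the change of measure only to the total gap, which is nonnegative, and decomposing afterwards under $\Lambda_{\mathrm{train}}$, as in Steps 1–3.

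A second point needs checking: the propositions must be stated, or restated, under $\Lambda_{\mathrm{train}}$, since the offline data come from that prior. If Proposition~\ref{linearbandit} is stated for a single bandit instance, I would average its bound over $M\sim\Lambda_{\mathrm{train}}$. Its right-hand side does not depend on $M$ beyond $\alpha$, $\sigma$ and $d$, so this averaging is harmless provided the Gram-matrix condition holds uniformly over instances.
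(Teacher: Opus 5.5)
Your proposal is correct and follows essentially the same route as the paper: bound $\mathsf{Subopt}_{\Lambda_{\mathrm{test}}}$ by $\mathcal C\,\mathsf{Subopt}_{\Lambda_{\mathrm{train}}}$ via the density ratio, invoke Proposition~\ref{prop_sb} and Proposition~\ref{linearbandit} (or Proposition~\ref{prop_mb}) each at level $\delta/2$, take a union bound, and sum the two bounds through Eq.~\eqref{eq_rev_dif}. You also make explicit two points the paper's proof leaves implicit: the change of measure needs the pointwise nonnegativity of $V^*_{M,1}-V^{\pi_{\tilde Q_n}}_{M,1}$, and it must therefore be applied to the total gap rather than to the possibly signed model-bias term.
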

\vspace{-0.2cm}
The constant $\mathcal{C}$ defined in Proposition \ref{prop_3} describes the environment-level distribution shift between the set of environments used during pretraining $\Lambda_{\text{train}}$ and those encountered during testing $\Lambda_{\text{test}}$, which can differ in aspects like the state transitions, reward structures and the size of action space, for example.
It reflects how well the pretraining data prepares the model for test environments. 

Proposition \ref{prop_3} establishes theoretical guarantees for \ours in both stochastic linear bandit and general MDP settings. To contextualize the stochastic-linear-bandit part of the result, we compare it with recent work by Lin et al. (2024) \cite{lin2024transformersdecisionmakersprovable}, who analyze SPT in a related bandit setting. Their analysis decomposes the total regret into three components: statistical estimation bias, approximation bias $\epsilon_{\text{real}}$ (which is similar to $\epsilon_{\text{tf}}$ in our setting, see Section \ref{ssec:approx} for details), and intrinsic expert bias $\epsilon_{\text{approx}}$. This comparison is intended to clarify the different behavior of supervised and Bellman-style pretraining under weak random data, rather than to claim a uniform rate dominance across all settings.

\textbf{When offline dataset is collected by random policy.}
A supervised learner trained on random behavior has no improving behavior-action signal: imitation can retain linear cumulative regret. Bellman pretraining instead uses reward information. Our bandit \emph{model-bias component} decreases as $\mathcal O(\sqrt{T\log T/n})$ when fixed problem factors are suppressed, but the total guarantee also contains Proposition~\ref{prop_sb}. The component rate should not be read as replacing the full bound in Proposition~\ref{prop_3}.

\textbf{When offline dataset is collected by expert policy.}
LinUCB trajectories provide stronger action supervision. In that regime, the relevant comparison includes both supervision quality and the finite-data/approximation terms, not a universal rate ordering between methods. The connection to batch fitted-Q analysis \citep{antos2007fitted} concerns the role of offline sample size, not a proof that all horizon or coverage factors are identical.

\vspace{-1ex}
\subsection{A Dataset Criterion for When QTPT Helps}
\vspace{-1ex}
In this section, we analyze the properties of offline datasets and give a sufficient condition under which QTPT can improve over SPT, focusing on how dataset characteristics influence policy learning. Inspired by \citep{kumar2022preferofflinereinforcementlearning}, 
we first define the non-informative context set as follows: 
\begin{definition}\label{def_informative}(\textbf{Non-informative Context})
For an environment $M$ and its optimal Q-function $Q^*_M$, the non-informative context set $G$ contains all $a\in\mathcal{A}$ and $x\in\mathcal{X}$ such that $\nu_M(x,a)\le O(T^{-1})$, where $\nu_M(x,a)=\underset{a'}{\max} Q^*_M(x,a')-Q^*_M(x,a)$.
\end{definition}

Here $G$ is a set of \emph{near-indifferent context--action pairs}, not a set of histories carrying no information about the task. It necessarily includes an optimal action at each context because its action gap is zero. Thus it is not empty; the criterion concerns the gap and visitation of suboptimal actions outside $G$.
For any $(x,a)$ pair in $G$, there is no alternative action $a' \in\mathcal{A}$ such that the value of the optimal $Q$ function is significantly improved. In this situation, simply following the behavior policy does not substantially damage the output policy, and SPT can yield behavior similar to \ours. However, when there exists a potential action for which $\nu_{M}$ is large enough, the Bellman target in Algorithm~\ref{algo:Q} can move the learned policy toward this better action, while SPT continues to imitate the behavior policy. We formally state this sufficient-condition perspective in Proposition~\ref{data_criterion}.

\begin{proposition}\label{data_criterion}
Let $G$ be the non-informative context set of an environment $M$ with optimal $Q$ function $Q^*_{M}$. Assuming there exists a positive constant $c$ such that for $(x,a) \in(\mathcal{X}\times\mathcal{A})\setminus G$, $\nu_{M}(x,a)>c$. Then, under the technical coverage and gap conditions stated in the proof, we have
\[
\mathsf{Subopt}_{\Lambda}(\pi^*,\hat{\pi}_{\textrm{QTPT}})\lesssim \mathsf{Subopt}_{\Lambda}(\pi^*,\hat{\pi}_{\textrm{SPT}}).
\]
\end{proposition}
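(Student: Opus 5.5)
The plan is to compare the two methods through the performance-difference lemma, written in terms of the action gap $\nu_M$. For any policy $\pi$ and environment $M$,
\[
V^*_{M,1}(x_1)-V^{\pi}_{M,1}(x_1)=\sum_{t=1}^T \mathbb{E}_{x_t\sim d^{\pi}_{M,t}}\,\mathbb{E}_{a\sim\pi(\cdot|x_t)}\big[\nu_M(x_t,a)\big].
\]
I would then split each summand according to whether $(x_t,a)\in G$ or not. Pairs in $G$ contribute at most $O(T^{-1})$ per step, so $O(1)$ in total, for either policy. Every pair outside $G$ costs at least $c$. The suboptimality of any policy is therefore pinched as
\[
c\sum_t \Pr_\pi[(x_t,a_t)\notin G]\;\le\;\mathsf{Subopt}\;\le\; O(1)+T\sum_t \Pr_\pi[(x_t,a_t)\notin G].
\]
Hence it suffices to compare how much mass the two policies place on $G^c$.

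\textbf{Lower bound for SPT.} SPT imitates the behavior policy. Up to its own statistical error $\epsilon_{\mathrm{spt}}$ in total variation per step, $\hat\pi_{\mathrm{SPT}}(\cdot|x)\approx\pi_\beta(\cdot|x)$. I would impose the first technical condition: the behavior policy places at least $p_0>0$ mass on actions outside $G$ at contexts reached with constant probability, which holds for random or weak data. Summing over steps then gives $\mathsf{Subopt}(\pi^*,\hat\pi_{\mathrm{SPT}})\gtrsim c\,p_0\,T-T\epsilon_{\mathrm{spt}}$. This is linear in $T$ and does not vanish with $n$.

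\textbf{Upper bound for QTPT.} I would invoke Proposition~\ref{prop_3}, which gives $\mathsf{Subopt}(\pi^*,\hat\pi_{\mathrm{QTPT}})\le\mathcal C\,[S_n(\delta/2)+\text{model-bias term}]$. This quantity is $o(T)$ per-step once $n$ is large, the approximation floor is small, and $B$ is small; these are the second set of technical conditions. I expect a sharper route through the gap itself. If $\|\tilde Q_{n,t}-Q^*_{M,t}\|_\infty<c/2$ on the covered support, which Assumption~\ref{assump:bound}(b),(c) and the $L^2$ Bellman-error control of Propositions~\ref{prop_sb}--\ref{prop_mb} give with the factor $\sqrt L$, then the greedy action never lies outside $G$. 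In that case QTPT pays only the $O(1)$ contribution from $G$.

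Comparing the two displays then yields $\mathsf{Subopt}(\pi^*,\hat\pi_{\mathrm{QTPT}})\lesssim\mathsf{Subopt}(\pi^*,\hat\pi_{\mathrm{SPT}})$ whenever $c\,p_0>\epsilon_{\mathrm{spt}}$ plus the vanishing QTPT rate. The constant hidden in $\lesssim$ absorbs $\mathcal C$ and the $O(1)$ contribution from $G$.

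\textbf{Main obstacle.} The hard step is turning the $L^2$ Bellman-error bounds into the sup-norm, or on-support, accuracy $c/2$ needed for the gap argument. My first attempt would use the lower-bounded occupancy: the pointwise error at a covered pair is at most $\sqrt{L}$ times the $L^2$ error. I would then propagate this through the $T$ backward stages, accepting an extra factor of $T$ in the required $n$. If that fails, I would fall back on the coarser route of comparing the total bounds directly, stated as holding for $n$ large enough relative to $c\,p_0$.
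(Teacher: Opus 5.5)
Your main route, Proposition~\ref{prop_3} against a lower bound for SPT, goes through under the conditions you add. It is a genuinely different proof from the paper's.

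\textbf{The paper's route.} It partitions \emph{contexts} into $G_{\mathcal X}$ and its complement.
\begin{itemize}
\item On the complement, whose total occupancy is at most $p_M$, it reuses the Proposition~\ref{prop_mb} bound.
\item On $G_{\mathcal X}$ it uses Lemma~\ref{lemma_critical}. This is a visit-count/Chernoff argument that \emph{assumes} a sup-norm accuracy $\epsilon_0$ for $\tilde Q$. It makes the probability that the greedy action leaves $G_{\mathcal A}$ exponentially small in $n$.
\item This is compared with an $O(c/n)$ error for SPT in a near-expert regime ($\alpha_0\approx 1+1/n$). The conclusion comes from a ratio computed with $p_M=T^{-3/2}$.
\end{itemize}

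\textbf{Your route.} You write the performance-difference lemma in action-gap form. You lower-bound SPT through a constant off-$G$ behavior mass $p_0$, and upper-bound QTPT through Proposition~\ref{prop_3} or a $c/2$ gap argument.

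\textbf{What each buys.} Your route has the correct logical direction: an upper bound for QTPT against a \emph{lower} bound for SPT that is linear in $T$ and does not vanish in $n$. The paper's final simplification discards the $\sqrt T\,n^{3/4}$ and $\epsilon n/T$ contributions, which dominate $fn=O(ne^{-n})$, so its ratio step is not justified as written. What you give up is scope. Your $p_0$ condition confines you to weak data, where the claim essentially reduces to consistency of QTPT versus inconsistency of imitation. It says nothing about the near-expert regime the paper's lemma targets, where SPT's error also vanishes with $n$.

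\textbf{Steps to tighten.}
\begin{itemize}
\item \emph{Gap route.} Accuracy ``on the covered support'' is not enough. The greedy $\arg\max$, and the $\max_{a'}$ you propagate backward, range over behavior-unsupported actions whose $\tilde Q$ values the data do not constrain. An inflated one can be selected; this is the effect measured in Appendix~\ref{app:ood}. You need full action coverage at every context the QTPT policy can reach (true for uniform random behavior), or an explicit sup-norm assumption like the paper's $\epsilon_0$.
\item \emph{SPT lower bound.} Suppose $\epsilon_{\mathrm{spt}}$ is a per-step total-variation error under $d^{\pi_\beta}$. Rolling out $\hat\pi_{\mathrm{SPT}}$ then shifts the step-$t$ context law by up to $t\epsilon_{\mathrm{spt}}$, so the loss term is of order $cT^2\epsilon_{\mathrm{spt}}$, not $T\epsilon_{\mathrm{spt}}$. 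Either impose the $p_0$ condition at every context with a sup-over-context error, or require $\epsilon_{\mathrm{spt}}\lesssim p_0/T$.
\item \emph{Size of the QTPT bound.} Proposition~\ref{prop_3} is not $o(T)$ by itself, because its $n$-independent terms $\mathcal C(\sqrt L\,TB+2T\sqrt{\epsilon_{\mathrm{tf}}})$ are linear in $T$. Your condition should therefore read $\mathcal C(\sqrt L B+2\sqrt{\epsilon_{\mathrm{tf}}})\ll c\,p_0$, together with $n$ large relative to $(\mathcal C\sqrt L\,T/(c\,p_0))^4$ up to logarithms.
\end{itemize}
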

\vspace{-2ex}

%% file: sections/experiment.tex
\vspace{-1ex}
\section{Numerical Experiments}\label{experiments}
\vspace{-1ex}
We evaluate QTPT on theory-aligned controlled benchmarks, contextual MDPs, controlled ablations, and standard D4RL environments. The main-text presentation is organized around three questions. First, does replacing supervised behavior prediction with Q-target pretraining improve robustness when offline data are weak or suboptimal? Second, what roles do context conditioning and Bellman/TD targets play in this improvement? Third, does the resulting method remain effective beyond the controlled environments used for theoretical calibration?

\vspace{-1ex}
\subsection{Stochastic Linear Bandit}
\vspace{-1ex}
We consider a stochastic linear bandit with dimension $d=5$, arms $A=10$, and horizon $T=200$. At each time step, the agent selects $a_t$ and receives reward $r_t=\langle a_t,\theta^* \rangle+\epsilon_t$, where $\epsilon_t \sim N(0,1.5^2)$ and $\theta^*$ is sampled from $[0,1]^d$. The action set is fixed over time, with actions drawn i.i.d.\ from the same range. We pretrain on two offline datasets: one collected by random actions and the other by LinUCB, each containing $100{,}000$ trajectories. Transformer backbones follow the GPT-2-style architecture used throughout the paper, with $8$ layers, $4$ heads, and embedding dimension $256$.

\begin{figure}[t]
    \centering
    \begin{subfigure}[t]{0.49\textwidth}
        \centering
        \includegraphics[width=\textwidth]{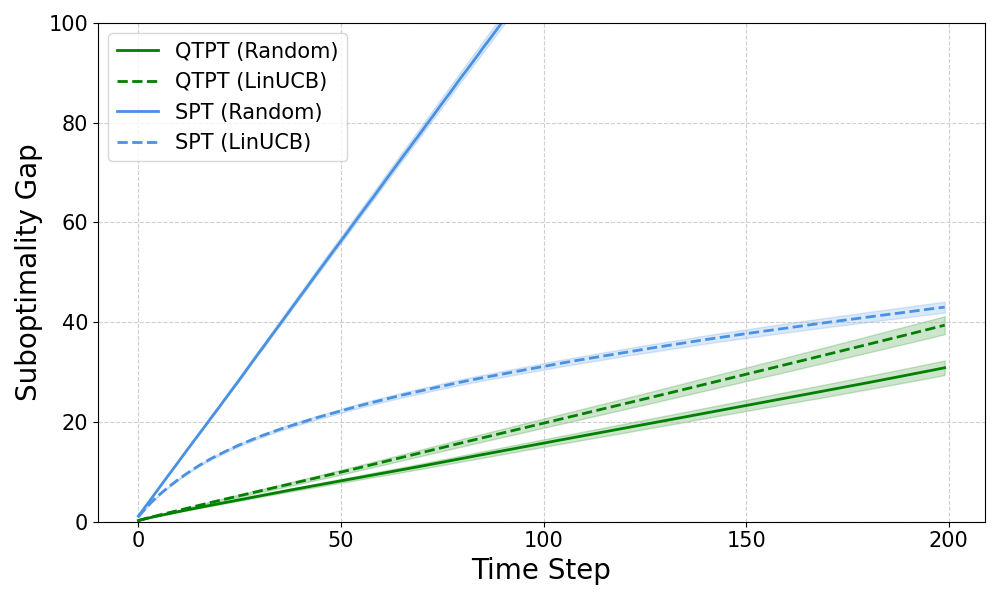}
        \caption{}\label{fig:linear_bandit}
    \end{subfigure}
    \hfill
    \begin{subfigure}[t]{0.49\textwidth}
        \centering
        \includegraphics[width=\textwidth]{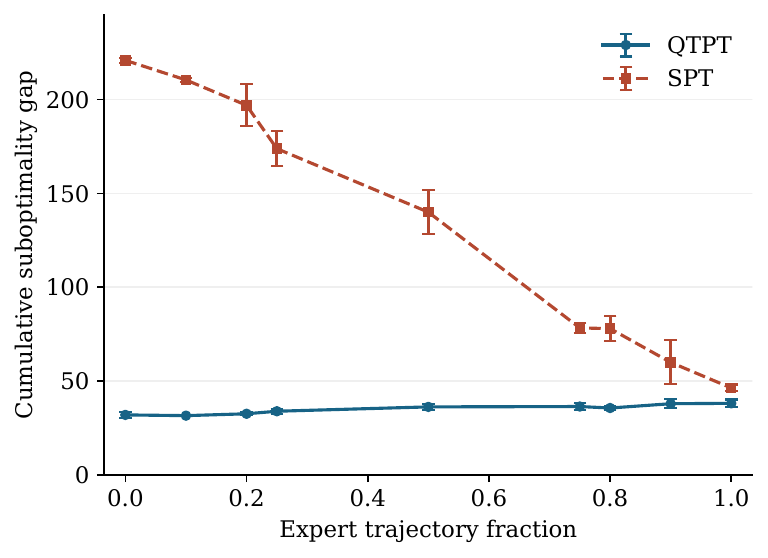}
        \caption{}\label{fig:mix}
    \end{subfigure}
    \caption{\textbf{(a)} Suboptimality gap over time for QTPT and SPT under random and LinUCB pretraining data, averaged over 1000 runs. \textbf{(b)} Supplementary mixture evaluation: mean gap and one standard deviation across five independent training seeds, evaluated on 1,000 task instances. The horizontal axis is the fraction of expert trajectories. These are finite-horizon results at $T=200$, not an empirical verification of asymptotic rates or a basis for extrapolating curve crossings.}
    \label{fig:combined}
\end{figure}

Figure~\ref{fig:linear_bandit} shows that QTPT consistently outperforms SPT on both random and LinUCB datasets. Figure~\ref{fig:mix} further shows that QTPT remains stable as the proportion of expert data decreases, while SPT is highly sensitive to the quality of the offline trajectories. Here weak behavior quality and action coverage are distinct: random actions are poor imitation targets but can supply broad reward/transition evidence. The finite-horizon curves illustrate this distinction rather than measure the theoretical exponents. Appendix~\ref{app:backbone} tests the objective comparison with GPT-2-, Llama-, and Qwen2-style decoders. Additional robustness studies on non-stationary test distributions are reported in Appendix~\ref{ssec:non-stat}, and broader baseline comparisons against CQL, IQL, Q-SFT, and QDT are summarized in Appendix~\ref{app:baseline_compare}.

\vspace{-1ex}
\subsection{Controlled Ablations}
\vspace{-1ex}
We next include controlled ablations to directly answer \emph{why QTPT helps}. To isolate the role of each component, we compare QTPT with two matched variants: \emph{QTPT-MC}, which replaces bootstrapped TD targets with Monte Carlo returns, and \emph{QTPT-NoCtx}, which keeps the TD objective but removes context conditioning. All three models use the same architecture, offline dataset, and evaluation protocol.

\begin{table}[t]
    \centering
    \caption{Controlled ablations on the Darkroom benchmark under matched architectures and offline datasets. Historical submitted evaluation: mean reward with 95\% confidence-interval half-widths over 300 evaluation episodes. These intervals measure episode-level variation, not variability across independent training seeds; higher is better.}
    \label{tab:post_submission_ablation}
    \begin{tabular}{lcc}
        \toprule
        Variant & Random data & Expert data \\
        \midrule
        QTPT (full) & 14.46 $\pm$ 3.27 & 18.29 $\pm$ 4.06 \\
        QTPT-MC & 9.33 $\pm$ 3.01 (-35\%) & 13.77 $\pm$ 3.68 (-25\%) \\
        QTPT-NoCtx & 0.38 $\pm$ 0.61 (-97\%) & 0.34 $\pm$ 0.65 (-98\%) \\
        \bottomrule
    \end{tabular}
\end{table}

Table~\ref{tab:post_submission_ablation} retains the submitted episode-level evaluation, whose uncertainty must not be interpreted as training-seed uncertainty. The point estimates decrease when context is removed and when TD targets are replaced by Monte Carlo returns. They motivate the two mechanisms but do not, on their own, establish seed-level significance. Additional diagnostics below and in Appendix~\ref{app:context_diagnostics} separately examine task-relevant context; the conservative-objective results use their own validation-best protocol and are not pooled with this table.

\vspace{-1ex}
\subsection{Markov Decision Processes: Darkroom and Dark Key-to-door}
\vspace{-1ex}
We evaluate QTPT on Darkroom, Dark Key-to-door, and Miniworld, which are standard contextual decision-making benchmarks for in-context RL \citep{laskin2022context,lee2023supervised}. To keep the main text focused, Figure~\ref{fig:dark_main} highlights Darkroom and Dark Key-to-door, while Miniworld follows the same qualitative trend and is discussed together with the implementation details in Appendix~\ref{sec:exp_details}.

\begin{figure}[t]
    \centering
    \begin{subfigure}[t]{0.49\textwidth}
        \centering
        \includegraphics[width=\textwidth]{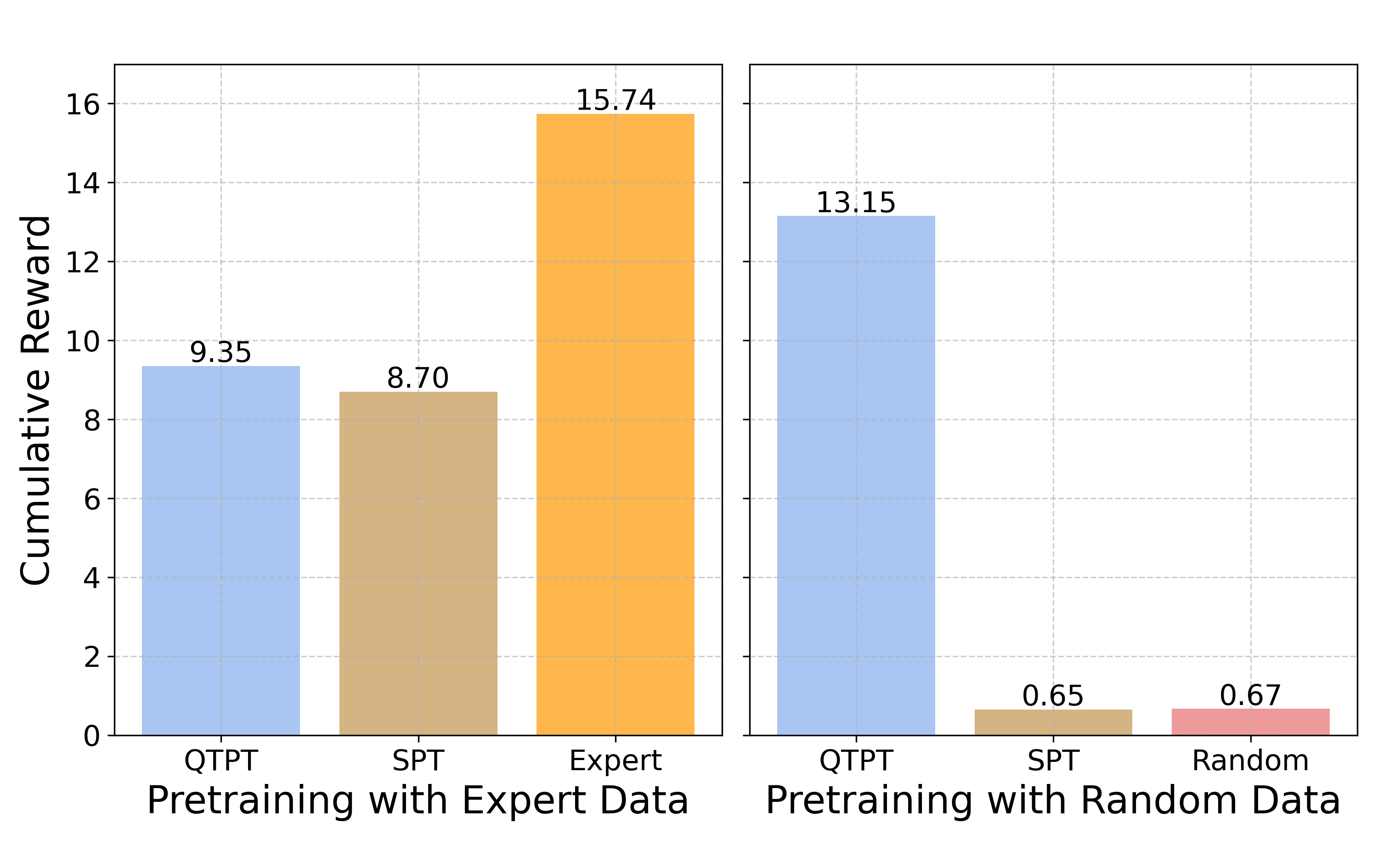}
        \caption{}\label{fig:darkrom}
    \end{subfigure}
    \hfill
    \begin{subfigure}[t]{0.49\textwidth}
        \centering
        \includegraphics[width=\linewidth]{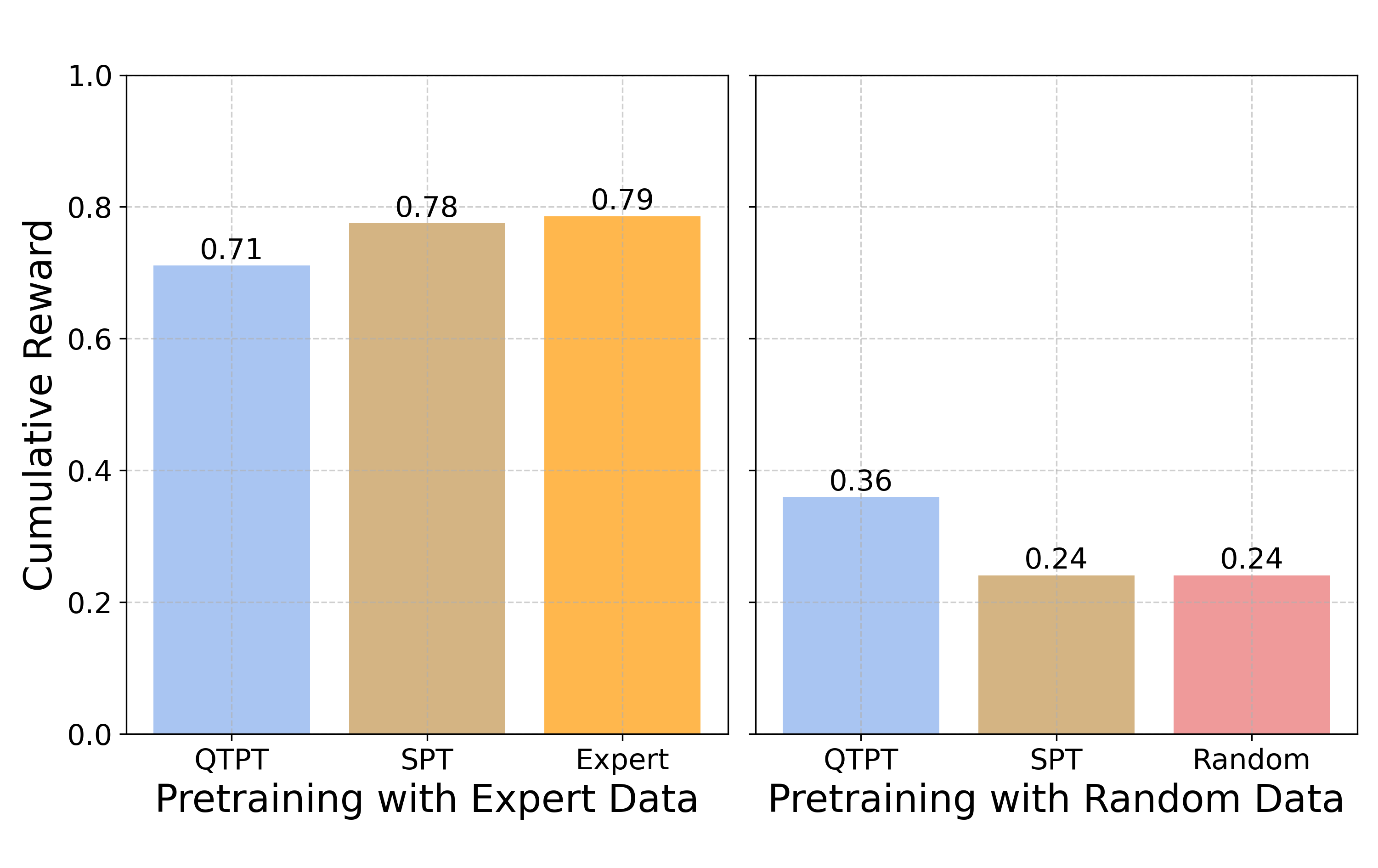}
        \caption{}\label{fig:darkk2d}
    \end{subfigure}
    \caption{\textbf{(a)} Average cumulative reward in Darkroom. \textbf{(b)} Average cumulative reward in Dark Key-to-door. QTPT is usually stronger under weak data, while extremely sparse expert-only settings remain difficult for TD learning.}
    \label{fig:dark_main}
\end{figure}

Across these MDPs, QTPT outperforms SPT in most settings, especially when the offline data are weak or broadly exploratory. An interesting pattern is that random-data pretraining can outperform expert-data pretraining in Darkroom and Miniworld, because random trajectories cover a wider state-action support and thus provide richer signals for Q-learning. The main exception is Dark Key-to-door with expert data, where rewards are extremely sparse and bootstrapped training receives less informative supervision.

\vspace{-1ex}
\subsection{D4RL Extension}
\label{ssec:post_submission}
\vspace{-1ex}
We further extend the study to D4RL Kitchen and AntMaze \citep{fu2020d4rl}. These domains are substantially more complex than the original synthetic environments: they involve higher-dimensional observations, longer horizons, and much sparser reward structure. Table~\ref{tab:d4rl_main} reports the available D4RL extension summaries. Kitchen entries are D4RL normalized scores, while AntMaze entries are active-stage success rates; therefore, comparisons should be made within each task rather than across columns.

\begin{table*}[t]
    \centering
    \small
    \caption{Main-text D4RL extension. QTPT is compared against matched Transformer baselines under the same revised experimental protocol. Kitchen entries report D4RL normalized score (\%), and AntMaze reports success rate (\%). Values report active-stage mean $\pm$ standard deviation across 3--4 seeds; active-stage performance averages non-zero periodic evaluation checkpoints. This conditional checkpoint statistic excludes failed checkpoints and is descriptive, not the conventional final-policy D4RL evaluation. Higher is better; boldface is omitted because no seed-level significance claim is made.}
    \label{tab:d4rl_main}
    \begin{tabular}{lccc}
        \toprule
        Method & Kitchen-partial-v0 & Kitchen-mixed-v0 & AntMaze-umaze-v2 \\
        \midrule
        QTPT (Ours) & 13.23 $\pm$ 4.03 & 14.62 $\pm$ 3.36 & 35.94 $\pm$ 13.52 \\
        SPT & 10.84 $\pm$ 4.18 & 13.25 $\pm$ 2.68 & 26.96 $\pm$ 11.39 \\
        Q-SFT & 11.37 $\pm$ 1.14 & 9.44 $\pm$ 4.86 & 19.05 $\pm$ 9.21 \\
        QDT & 12.58 $\pm$ 4.28 & 7.28 $\pm$ 5.23 & 20.00 $\pm$ 8.66 \\
        \bottomrule
    \end{tabular}
\end{table*}

QTPT has the largest reported mean under this matched Transformer protocol, including 35.94\% active-stage success on \texttt{antmaze-umaze-v2}. The substantial between-seed variation and conditional checkpoint statistic limit the conclusion: these runs demonstrate an implemented extension, not statistically established superiority or a comparison with published D4RL SOTA scores. Appendix~\ref{app:post_submission} uses exactly the same numbers and terminology.

\vspace{-1ex}
\subsection{More Complex Task: Math Reasoning}
\vspace{-1ex}
To test cross-domain breadth, we also adapt QTPT to mathematical reasoning. In this setting we compare QTPO (QTPT with policy optimization) against behavior cloning. Detailed setup and implementation choices are deferred to Appendix~\ref{ssec:math_reasoning}. On GSM8K, QTPO improves over behavior cloning across all mixture ratios by an average of \textbf{1.81\%}, showing that the advantages of value-aware pretraining are not confined to the RL benchmarks studied above.

\subsection{Supplementary Tests of Adaptation and Value Calibration}
Five-seed support-controlled bandit experiments directly expose residual unsupported-action overestimation. QTPT reduces harmful OOD selection relative to naive hard-max QTPT, but adding a CQL penalty produces no statistically reliable incremental improvement in either backup branch (Appendix~\ref{app:ood}). Comparisons with RL$^2$ and PEARL test complementary data-acquisition protocols rather than a universal ranking (Appendix~\ref{app:meta_rl}). The context experiments separate task-coherent history from unrelated extra examples (Appendix~\ref{app:context_diagnostics}). Detailed configurations, uncertainty units, and scope are stated separately for these diagnostics.

%% file: sections/conclusion.tex
\vspace{-1ex}
\section{Limitations}
\vspace{-1ex}
This work focuses on establishing and validating Q-target pretraining as an objective-level alternative to supervised behavior prediction for in-context RL. The theory requires support/coverage conditions on the offline data, so the guarantees should not be interpreted as applying to arbitrary low-coverage datasets. Rather, the intended weak-data regime is one where behavior can be suboptimal or noisy while still providing sufficient state-action coverage for Bellman-style learning.

Our D4RL experiments are designed as fixed-protocol extensions with matched Transformer baselines under the same data and evaluation setup. They are not intended to claim state-of-the-art performance against specialized offline-RL pipelines tuned for D4RL. Finally, our controlled ablations identify context conditioning as necessary for in-context adaptation and Bellman/TD targets as an additional source of robustness; studying alternative backup designs, larger-scale environments, and broader sequence-generation tasks remains important future work.

\vspace{-1ex}
\section{Conclusion and Future Work}
\vspace{-1ex}
We propose Q-target Pretrained Transformers (\ours), which use reward and Bellman-target supervision to learn a context-conditioned value function from offline trajectories. The analysis separates finite-data error from Transformer approximation, with explicit coverage, confidence, and horizon dependence. Controlled experiments show gains over supervised behavior prediction when behavior actions are weak, while supplementary tests examine decoder backbones, meta-RL comparisons, and task-coherent context. The support-controlled diagnostic shows that Double-target and softmax backup mitigate, but do not eliminate, harmful unsupported-action overestimation. Adding a standard CQL penalty does not produce a statistically reliable incremental improvement in that diagnostic. The D4RL extension is descriptive evidence under a matched sequence-model protocol, not a standardized SOTA claim.

While this work establishes a framework for combining Q-learning with ICRL, several directions remain open. First, broader evaluations on larger D4RL suites, newer offline RL benchmarks, and industrial-scale datasets would further test scalability. Second, alternative Bellman backup designs, such as multi-step returns, are promising refinements within the QTPT framework. Finally, it is important to study distribution shifts between training and testing and to verify QTPT on a broader range of NLP and reasoning tasks.

\paragraph{Reproducibility.} The anonymous source code to this project is available in the supplementary. The proofs to the main theory can be
found in \ref{sec:proof}.





%% file: sections/appendix.tex
\onecolumn



\section{Proofs}\label{sec:proof}

\subsection{Proofs Related to The Sample Bias}

\begin{definition}[Rademacher complexity]\label{def:rademacher}
For a generic real-valued function space $\mathcal{F} \subseteq \mathbb{R}^\mathcal{Y}$ and $n$ fixed data points $Y=\{y_1,\cdots,y_n\}\in\mathcal{Y}^n$, the empirical Rademacher complexity is defined as follows: 
\[
\hat{\mathcal{R}}_Y(\mathcal{F})=\mathbb{E}_{\omega}\Big[\underset{f\in \mathcal{F}}{\sup} \frac{1}{n} \sum_{i=1}^n \omega_i f(y_i)|Y\Big],
\]
where $\omega_i$ are independent and identically distributed Rademacher random variables, ranging from $-1$ to $1$ and, thus, representing randomly assigned labels. 
Let $\rho$ be the distribution of $Y$, we further define the Rademacher complexity $\mathcal{R}^{\rho}_{n}(\mathcal{F}):=\mathbb{E}_{\rho}[\hat{\mathcal{R}}_{Y}(\mathcal{F})]$, which measures the complexity of $\mathcal{F}$ by the degree to which the functions in class $\mathcal{F}$ are correlated with the random noise $w_i$. 
\end{definition}

This definition follows a similar structure in \cite{duan2021risk}. 
The Rademacher complexity provides a measure of the complexity of our function class, which will be crucial for bounding the sample bias.

\subsubsection{Proof of Proposition \ref{prop_sb}}

\begin{proof}
On the one hand, for a fixed $Q\in \hat{\mathcal{Q}}$ , considering the union bound, by Lemma G.1 and Lemma G.6 in \cite{duan2021risk}, 
\[
\Pr \Big[|\mathcal{L}_n(Q)-\mathcal{L}(Q)|\geq \epsilon_n\Big]\leq \delta,
\]
where $\epsilon_n=B_{Q}^2\sqrt{\frac{2\ln(2T/\delta)}{n}}+2B_{Q}\mathcal{R}^{\pi_\beta}_{n}(\hat{\mathcal{Q}})+B_{Q}^2$. 
Thus, with probability at least $1-\delta$, $|\mathcal{L}_n(Q)-\mathcal{L}(Q)|\leq \epsilon_n$, and 
\[
\mathcal{L}(\hat{Q}^*)\leq \mathcal{L}_n(\hat{Q}^*)+\epsilon_n\leq \mathcal{L}_n(Q^*)+\epsilon_n\leq \mathcal{L}(Q^*)+2\epsilon_n,
\]
since $\mathcal{L}(Q^*)=0$, $\mathcal{L}(\hat{Q}^*)\leq 2\epsilon_n$. \\
On the other hand,  by Assumption \ref{assump:bound}, it implies that $d^{\pi_\beta}_{M,t}(x_t,a_t)\geq L^{-1}$, when $d^{\pi_\beta}_{M,t}(x_t,a_t)>0$.\\
For any $(t,x_t,a_t)$, 
\[
\frac{d^{\pi^*}_{M,t}(x_t,a_t)}{d^{\pi_{\beta}}_{M,t}(x_t,a_t)}\leq \frac{1}{d^{\pi_{\beta}}_{M,t}(x_t,a_t)}\leq L.
\]
Thus, using Performance Difference Lemma by Lemma 6.1 in \cite{kakade2002approximately}, we have

\begin{equation*}
    \begin{array}{rl}
        \mathsf{Subopt}_{\Lambda}(\pi^*,\pi_{\hat{Q}^*})&=  \mathbb{E}_{M\sim \Lambda}\sum^T_{t=1}\mathbb{E}_{d^{\pi^*}}\Big[ |(\Gamma \hat{Q}^*)(x_t,a)-\hat{Q}^*(x_t,a)| \Big] \\
         & \leq \sqrt{T\sum^T_{t=1}\mathbb{E}_{d^{\pi^*}}\Big[|\hat{Q}^*(x_t,a)-(\Gamma \hat{Q}^*)(x_t,a)|^2\Big]}\\
         & \leq T\sqrt{L}\sqrt{\mathcal{L}_n(\hat{Q}^*)}\\
         & \leq \sqrt{L}T\sqrt{2\epsilon_n}\\
         & \leq \sqrt{L}\Big[ 2T^2(\frac{2\log (2T/\delta)}{n})^{1/4}+\sqrt{2}T^{3/2}\sqrt{\mathcal{R}^{\pi_\beta}_{n}(\hat{\mathcal{Q}})}+TB \Big],
    \end{array}
\end{equation*}
using Proposition 6.1 in literature \cite{duan2021risk}, 
we get the final results as  
\[
\mathsf{Subopt}_{\Lambda}(\pi^*,\pi_{\hat{Q}^*})\leq \mathcal{O}\Big(\sqrt{L} T^2\Big[(\frac{2\log (2T/\delta)}{n})^{1/4}+\sqrt{\max\{\sqrt{\frac{\log |\mathcal{Q}|}{n}} ,\frac{\log |\mathcal{Q}|}{n}\}}\Big]+\sqrt{L}TB\Big).
\]

\end{proof}

\subsection{Proofs Related to the Model Bias}

\subsubsection{Proof of Proposition \ref{linearbandit} }
\begin{proof}
At iteration $k$, with current parameters $w^{(k)}$, compute $y_t^{i,(k)}=r_t+\max_{a'\in \mathcal{A}_{t+1}}Q_{w^{(k)}}(D_t^i,s^i_{t+1},a'),$
note that $y^{i,(k)}_t$ is now a fixed constant w.r.t. $w$. 
Thus,  {\ours} at round $t-1$ solves the ordinary least-square problem
$\hat{w}_{t-1}=\arg \min_w \sum^n_{i=1}\sum_{u=1}^{t-1} (\langle a^i_u,w\rangle-r^i_u)^2$.
A greedy policy then selects $a_t=\arg\max_{a\in\mathcal{A}_t}\langle a,\hat{w}_{t-1}\rangle$.
Define the estimation error term ${EE}_t=\sup_{a\in\mathcal{A}_t}|\langle a,w^*-\hat{w}\rangle|$. 
We get the solution $\hat{w}_{t-1}=G^{-1}_{t-1}\sum_{i=1}^n\sum_{u=1}^{t-1}a^i_u r^i_u$.\\
On the one hand, by Theorem 2 in Abbasi-Yadkori et al. (2011)\cite{NIPS2011_e1d5be1c}, for zero-mean, $\sigma$-sub-aussian noise $\epsilon_u$, with probability at least $1-\delta$, we have
\[
|\hat{w}_{t-1}-w^*\|_{G_{t-1}}\leq \sigma \sqrt{2\log \frac{\det(G_{t-1})^{1/2}}{\det(\alpha nI)^{1/2}\delta}}=\beta_{t-1}\]
On the other hand, for any $a$, 
\[|Q_{\hat{w}}-Q^*|=|\langle a,\hat{w}-w^*\rangle|\leq \|a\|_{G_{t-1}^{-1}}\|\hat{w}-w^*\|_{G_{t-1}}\leq \frac{\beta_{t-1}}{\sqrt{\lambda_{min}(G_{t-1})}}\leq \frac{\beta_{t-1}}{\sqrt{\alpha n(t-1)}},\]
since $\det(G_{t-1})=\mathcal{O}((\alpha n)^d(t-1)^d)$, we have $\beta_{t-1}=\mathcal{O}(\sigma\sqrt{d\ln t+\log(1/\delta)})$. \\
Thus, \[EE_t=\sup|Q_{\hat{w}}-Q^*|\leq \mathcal{O}(\sigma\sqrt{\frac{d\ln t+\log(1/\delta)}{\alpha nt}}).\]


The cumulative suboptimality over $T$ steps as 
\[\mathbb{E}_{M\sim \Lambda}\Big[\sum_{t=1}^T (\langle a^*,w^*\rangle-\langle a_t,\hat{w}\rangle)\Big],\]
then we have 
\[\langle a^*,w^*\rangle-\langle a_t,\hat{w}\rangle=\Big[ 
\langle a^*,w^*\rangle-\langle a^*,\hat{w}\rangle \Big]
+\Big[ \langle a^*,\hat{w}\rangle-\langle a_t,\hat{w}\rangle \Big]
\leq MB_t,\]
Finally, we get that \[ 
\mathsf{Subopt}(\pi_{\hat{Q}^*},\pi_{\tilde{Q}})
\leq 2\mathbb{E}[\sum_{t=1}^T EE_t]+\epsilon_{\text{tf}}\leq \epsilon_{\text{tf}}+\mathcal{O}(\sigma\sqrt{T}\sqrt{\frac{d\log (nT)+\log(1/\delta)}{\alpha n}}),\]
where $\epsilon_{\text{tf}}$ is the Transformer approximation error (see Section \ref{ssec:approx}). 
\end{proof}

\subsubsection{Proof of Proposition \ref{prop_mb}}
\begin{proof}
For simplicity, we abbreviate $\hat{Q}^{\pi}_{M,t}$ as $\hat{Q}_t$. 
Let $\hat e^{\mathrm{TD}}_t(x, a)
= \hat{Q}_t(x, a) - r - 
\mathbb{E}_{x_{t+1}}\left[\max_{a'} \hat{Q}_{t+1}(x_{t+1}, a')\right]$.
Similarly, using the definitions of $\tilde e^{\mathrm{TD}}_t$ and $e^{\mathrm{TD},*}_t$, we can express $\hat{Q}$ to $\tilde{Q}$ and $Q^*$ in terms of $\hat e^{\mathrm{TD}}_t$.
For function $\hat{Q}=(\hat{Q},\cdots,\hat{Q}_T)$, define
\[\epsilon(\hat{Q})=\frac{1}{T}\sum_{t=1}^{T}\mathbb{E}\left[\hat{Q}_t - r -\mathbb{E}_{x_{t+1}}\max_{a'} \hat{Q}_{t+1}(x_{t+1}, a')\right]^2 = \frac{1}{T}\sum_{t=1}^{T} \mathbb{E} \hat e^{\mathrm{TD}}_t(x, a)^2.\]
Similarly, using the definition of $\epsilon(\tilde{Q})$, we can express $\hat{Q}$ to $\tilde{Q}$ in terms of $\epsilon(\hat{Q})$.  
Since $\pi_{\tilde{Q}}(x) = \arg \underset{a'}{\max} \ \tilde{Q}(x, a')$, we can show that
\[V_M^{\pi_{\hat{Q}^*}}(x)-V_M^{\pi_{\tilde{Q}}}(x) \leq V^{\pi_{\hat{Q}^*}}(x)-\tilde{Q}_1(x,\pi_{\hat{Q}^*}(x))+\tilde{Q}_1(x,\pi_{\tilde{Q}}(x))-V_M^{\pi_{\tilde{Q}}}(x).\]
For any policy $\pi$, we have
\begin{equation*}
    \begin{split}
        & \tilde{Q}_1(x_1, \pi(x_1)) - V_{M,1}^{\pi}(x_1) \\
= & \mathbb{E}\left[\sum_{t=1}^{T} \Big(\tilde{Q}_t(x_t, a_t)- \mathbb{E}_{\pi}[\tilde{Q}_{t+1}(x_{t+1}, a_{t+1}) + r_t|x_t,a_t] \Big) \Big| x_1, \pi  \right] \\
= & \mathbb{E}\left[ \sum_{t=1}^T \left( \tilde{Q}_t(x_t, a_t) - r_t - \tilde{Q}_{t+1}(x_{t+1}, a_{t+1}) \right) \Big| x_1, \pi \right].
    \end{split}
\end{equation*}
We can show that 
\[\Big|\mathbb{E}\Big[\sum^T_{t=1}\tilde e^{\mathrm{TD}}_t(x_t,a_t)|x_1,\pi\Big]\Big|\leq \sqrt{T\sum^T_{t=1} \mathbb{E}\Big[\tilde e^{\mathrm{TD}}_t(x_t,a_t)^2|x_1,\pi\Big]}\leq T \sqrt{\epsilon(\tilde{Q})},\]
which implies that
\[
\tilde{Q}_1(x_1,\pi_{\tilde{Q}}(x_1))-V_{M,1}^{\pi_{\tilde{Q}}}(x_1)=\mathbb{E}\Big[\sum^T_{t=1}
\tilde e^{\mathrm{TD}}_t(x_t,a_t)|x_1,\pi_{\tilde{Q}}\Big],
\]
\[
\tilde{Q}_1(x_1,\pi_{\hat{Q}^*}(x_1))-V_{M,1}^{\pi_{\hat{Q}^*}}(x_1)\geq \mathbb{E}\Big[\sum^T_{t=1}
\tilde e^{\mathrm{TD}}_t(x_t,a_t)|x_1,\pi_{\hat{Q}^*}\Big].
\]
Hence,  
\[V_{M,1}^{\pi_{\hat{Q}^*}}(x)-V_{M,1}^{\pi_{\tilde{Q}}}(x)\leq \mathbb{E}\Big[\sum^T_{t=1}\tilde e^{\mathrm{TD}}_t(x_t,a_t)|x_1,\pi_{\tilde{Q}}\Big]- \mathbb{E}\Big[\sum^T_{t=1}\tilde e^{\mathrm{TD}}_t(x_t,a_t)|x_1,\pi_{\hat{Q}^*}\Big]\leq 2T \sqrt{\epsilon(\tilde{Q})}.\]
since $\hat e^{\mathrm{TD}}_t(x,a)^2-\tilde e^{\mathrm{TD}}_t(x,a)^2\in 
[-2 T^2,2 T^2]
$, using Massart Finite Class Lemma,  with probability at least $1-\delta$, there exist a constant $c$ such that for any $\pi$, 
\begin{equation*}
    \begin{array}{rl}
        & \mathbb{E}\tilde e^{\mathrm{TD}}_{t}(x,a)^2-\mathbb{E}\hat e^{\mathrm{TD}}_{t}(x,a)^2 \\
        \leq & 
        \frac{1}{n}\underset{(x_t,a_t,r_t,x_{t+1})\sim D_t}{\sum}(\tilde e^{\mathrm{TD}}_{t}(s,a)^2-\hat e^{\mathrm{TD}}_{t}(x,a)^2)
        \\ &\quad + 2 R_n({(\tilde e^{\mathrm{TD}}_{t})^2-(\hat e^{\mathrm{TD}}_t)^2|\hat{Q}_t})+2T^2\sqrt{\frac{2\log(2/\delta)}{n}}\\
         \leq & 
         2T^2\sqrt{\frac{2\log(2/\delta)}{n}}+cT R_n(\hat{Q}^{~}_t) \\
         \leq & 
         2T^2\sqrt{\frac{2\log(2/\delta)}{n}}+2cT^2 \frac{\sqrt{2\log|\mathcal{Q}_t|}}{n}.
    \end{array}
\end{equation*}
Since 
\[
\epsilon(\tilde{Q})-\epsilon(\hat{Q})=\frac{1}{T}\sum^T_{t=1} \Big(\mathbb{E}\tilde e^{\mathrm{TD}}_{t}(x,a)^2-\mathbb{E}\hat e^{\mathrm{TD}}_{t}(x,a)^2\Big), 
\]
since $\epsilon(\hat{Q})$ indicates the capacity of Transformers approximation (see Section \ref{ssec:approx} for details), we can get that 
\begin{equation*}
    \begin{array}{rl}
        \epsilon(\tilde{Q}) \leq &  \epsilon(\hat{Q})
        +2T^2\sqrt{\frac{2\log(2T/\delta)}{n}}+2cT\underset{t=1}{\overset{T}{\sum}}\frac{\sqrt{2\log|\hat{\mathcal{Q}}_t|}}{n}\\
         \leq & \epsilon_{\text{tf}} 
        +  2T^2\sqrt{\frac{2\log(2T/\delta)}{n}}
        +2cT\underset{t=1}{\overset{T}{\sum}}\frac{\sqrt{2\log|\hat{\mathcal{Q}}_t|}}{n},
    \end{array}
\end{equation*}
Thus,  
\begin{equation*}
    \begin{array}{rl}
        & V_{M,1}^{\pi_{\hat{Q}^*}}(x_1)-V_{M,1}^{\pi_{\tilde{Q}}}(x_1)\\
        \leq & 2T\sqrt{\epsilon_{\mathrm{tf}}}+T^2\Big( \sqrt{2}(\frac{2\log (2T/\delta)}{n})^{1/4}+\sqrt{2c\frac{\sqrt{2\sup \log|\hat{\mathcal{Q}}_t|}}{n}} \Big) \\
         = & 2T\sqrt{\epsilon_{\mathrm{tf}}}+\mathcal{O}\Big( T^2 \Big(n^{-1/2}+\left(2\log(2T/\delta)/n\right)^{1/4}\Big)\Big).
    \end{array}
\end{equation*}
\end{proof}

\subsubsection{Proof of Proposition \ref{prop_3}}

\begin{proof}
Since 
\[
\begin{aligned}
&\int \Lambda_{\mathrm{test}}(M)\,
\mathbb E_{x_1\sim\eta_M}\!\left[V^*_{M,1}(x_1)-V^{\pi_{\tilde Q}}_{M,1}(x_1)\right]\,\mathrm dM\\
&\qquad\leq \mathcal C\int \Lambda_{\mathrm{train}}(M)\,
\mathbb E_{x_1\sim\eta_M}\!\left[V^*_{M,1}(x_1)-V^{\pi_{\tilde Q}}_{M,1}(x_1)\right]\,\mathrm dM,
\end{aligned}
\]
thus 
\[\mathsf{Subopt}_{\Lambda_{\text{test}}}(\pi^*,\pi_{\tilde{Q}})\leq \mathcal{C} \cdot \mathsf{Subopt}_{\Lambda_{\text{train}}}(\pi^*,\pi_{\tilde{Q}}).\]
For the bandit case apply Proposition~\ref{prop_sb} and Proposition~\ref{linearbandit} with confidence parameter $\delta/2$ each; for the MDP case use Proposition~\ref{prop_sb} and Proposition~\ref{prop_mb} instead. In either case the union bound gives probability at least $1-\delta$ for their simultaneous validity. Add the two bounds using Eq.~\eqref{eq_rev_dif}, retaining the approximation terms, and then apply the environment-density ratio $\mathcal C$. 
\end{proof}

\subsection{Proofs Related to the Comparison of \ours and SPT}

\subsubsection{Proof of Proposition \ref{data_criterion}}

At first, we need to prove a result on non-informative context states, which is inspired by Lemma B.10 in \citep{kumar2022preferofflinereinforcementlearning}. 
For simplify, when $G$ be the non-informative context set, define $G_{\mathcal{A}},G_{\mathcal{X}}$ be the separate sets to represent all unique context states and actions present in $G$, i.e. $G_{\mathcal{A}}=\{a|\exists x, (x,a)\in G\}$, $G_{\mathcal{X}}=\{x|\exists a, (x,a)\in G\}$. 

For an environment $M$ and $\forall a\in \mathcal{A}\setminus G_{\mathcal{A}}$, $\nu_M(x,a)\simeq \Delta_M(x)$.
For $(x,a)\in G$, there exists a constant $\epsilon>0$ that satisfies $\nu_M(x,a)\leq \frac{\epsilon}{T}$. 

\begin{lemma}
\label{lemma_critical}
Consider a fixed environment $M$, 
policy $\hat{\pi}_{\text{QTPT}}$ is obtained from the \ours algorithm. 
There exists a non-informative context set $G$ satisfying $|\mathcal{X}\setminus G_{\mathcal{X}}|\geq n_0$,  
and let $\forall a\in \mathcal{A}$,  $\frac{d^{\pi^*}_{M}(a|x)}{d^{\pi_\beta}_M(a|x)}\leq \alpha_0<2$, 
where $d^{\pi_{\beta}}_M(a|x)$ is the conditional action distribution at context state $x$, $\alpha_0$ is a constant. 
$\tilde{Q}$ is the learned Q-value, since $\tilde{Q}$ can only differ from $Q^*$, there exists $\epsilon_0>0$, s.t. $|\tilde{Q}(x,a)-Q^*(x,a)|\leq \epsilon_0$. 
Then, the probability that the policy $\hat{\pi}_{\text{QTPT}}$ doesn't choose the action in $G_{\mathcal{A}}$ at context state $x$ is upper bounded as 
\begin{align*}
    &\mathbb{P}[\hat{\pi}_{\text{QTPT}}(x)\notin G_{\mathcal{A}}]\\
    \leq& \exp \Big(-n_0 \frac{\alpha_0^2}{2(\alpha_0-1)}\times [\frac{|G_{\mathcal{A}}|n_{\Delta}(x)}{n_0}-\frac{1}{\alpha_0}]^2\Big)
\end{align*}
where $n(x,a)$ be the expected number of visit $(x,a)$ in dataset $D$, $n_{\Delta}$ corresponds to the maximum value of $n(x,a)$ s.t. \[\frac{\epsilon}{T}\geq \Delta_M(x)-2\epsilon_0.\]
\end{lemma}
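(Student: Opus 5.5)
The plan follows Lemma B.10 of \citep{kumar2022preferofflinereinforcementlearning}: reduce the event $\{\hat{\pi}_{\text{QTPT}}(x)\notin G_{\mathcal{A}}\}$ to a deficiency in the visitation counts of the near-optimal actions at $x$, and then bound that deficiency with a multiplicative Chernoff inequality.

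\textbf{Step 1 (deterministic reduction).} Use the uniform closeness $|\tilde{Q}(x,a)-Q^*(x,a)|\leq\epsilon_0$. Take $a\notin G_{\mathcal{A}}$ and $a^\star\in G_{\mathcal{A}}$ optimal at $x$, which exists because $G$ contains an optimal action at every context. Then $\nu_M(x,a)\simeq\Delta_M(x)$, so
\[
\tilde{Q}(x,a^\star)-\tilde{Q}(x,a)\geq \Delta_M(x)-2\epsilon_0 .
\]
The greedy policy can therefore pick $a$ only if $\Delta_M(x)-2\epsilon_0$ does not exceed the resolution $\epsilon/T$ defining $G$. This is exactly the threshold condition $\frac{\epsilon}{T}\geq\Delta_M(x)-2\epsilon_0$ in the definition of $n_\Delta(x)$.

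Next I tie $\epsilon_0$ to data counts. Fitted-Q estimation error at $(x,a)$ shrinks with the number of visits $n(x,a)$. So the threshold can be crossed only at contexts where the actions of $G_{\mathcal{A}}$ are visited at most $n_\Delta(x)$ times each. Summing over $G_{\mathcal{A}}$, the bad event implies that the total count of near-optimal actions at $x$ is at most $|G_{\mathcal{A}}|\,n_\Delta(x)$.

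\textbf{Step 2 (concentration).} Over the $n_0$ informative contexts in $\mathcal{X}\setminus G_{\mathcal{X}}$, view the near-optimal-action visits as a sum of independent Bernoulli trials. The concentrability ratio $d^{\pi^*}_M(a|x)/d^{\pi_\beta}_M(a|x)\leq\alpha_0$ forces $\pi_\beta$ to place conditional mass at least $1/\alpha_0$ on the actions $\pi^*$ uses. Hence the expected fraction of such visits is at least $1/\alpha_0$.

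The bad event requires the empirical fraction to be at most $|G_{\mathcal{A}}|n_\Delta(x)/n_0$. Apply a Chernoff/Bernstein lower-tail bound with mean $\mu=1/\alpha_0$ and variance proxy $\mu(1-\mu)=(\alpha_0-1)/\alpha_0^2$. This gives
\[
\exp\!\Big(-n_0\tfrac{t^2}{2\mu(1-\mu)}\Big),\qquad t=\tfrac{|G_{\mathcal{A}}|n_\Delta(x)}{n_0}-\tfrac{1}{\alpha_0},
\]
which is exactly $\exp\!\big(-n_0\frac{\alpha_0^2}{2(\alpha_0-1)}[\cdot]^2\big)$. The condition $1<\alpha_0<2$ makes $\mu\in(1/2,1)$, so the variance factor is positive and well defined.

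\textbf{Main obstacle.} The delicate step is Step 1's link between $\epsilon_0$ and the counts $n(x,a)$. The lemma treats $\epsilon_0$ as a single uniform constant, but the counting argument needs a count-dependent error. I would handle this by defining $n_\Delta(x)$ as the largest count at which the per-pair error can still reach $(\Delta_M(x)-\epsilon/T)/2$. Its existence then rests on the monotonicity of the regression error in $n(x,a)$, with the error controlled by the concentration already used in Proposition~\ref{prop_mb}.

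There is also a sign subtlety in the Chernoff step. The resulting bound is informative only when $|G_{\mathcal{A}}|n_\Delta(x)/n_0$ lies below $1/\alpha_0$, and I would state this regime explicitly.
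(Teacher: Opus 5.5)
Your plan follows the paper's proof step for step. Both reduce the greedy-error event via $\epsilon_0$ to the count deficit $\sum_{a\in G_{\mathcal A}}n(x,a)\le|G_{\mathcal A}|n_\Delta(x)$, then apply a Bernoulli lower-tail bound at mean $1/\alpha_0$ with a quadratic exponent. However, the final concentration step is false as stated. The paper's version, which invokes $\mathrm{KL}(p+\epsilon\|p)\geq\epsilon^2/(2p(1-p))$ for $p\geq1/2$, fails for the same reason.

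The lower-tail Chernoff bound applies only when $q:=|G_{\mathcal A}|n_\Delta(x)/n_0<\mu:=1/\alpha_0$, and there $\mathrm{KL}(q\|\mu)=\int_q^\mu\frac{v-q}{v(1-v)}\,dv$. Since $\mu>1/2$, for $q\in[1-\mu,\mu)$ every $v\in[q,\mu]$ satisfies $v(1-v)\geq\mu(1-\mu)$, so $\mathrm{KL}(q\|\mu)<\frac{(\mu-q)^2}{2\mu(1-\mu)}$. For example, $\mu=0.9$, $q=0.8$ gives $0.0444<0.0556$. Because the Chernoff exponent is asymptotically exact, your bound $\exp\bigl(-n_0\tfrac{\alpha_0^2}{2(\alpha_0-1)}t^2\bigr)$ is actually violated by a $\mathrm{Bin}(n_0,1/\alpha_0)$ proportion for large $n_0$. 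With $\mu>1/2$ the lower tail is the heavy side, and the variance proxy $\mu(1-\mu)$ is legitimate only for the upper tail ($\epsilon>0$), which is not the event you need. So $1<\alpha_0<2$ does not make the constant valid; it is precisely the regime where it breaks.

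What you can prove is one of the following:
\begin{itemize}
\item $\exp(-n_0\,\mathrm{KL}(q\|1/\alpha_0))$, using that the true mean is at least $1/\alpha_0$ and that $\mathrm{KL}(q\|\cdot)$ increases on $[q,1]$;
\item Hoeffding's $\exp(-2n_0t^2)$;
\item Bernstein with the extra $t/3$ term in the denominator.
\end{itemize}
Note that $\frac{\alpha_0^2}{2(\alpha_0-1)}>2$ whenever $\alpha_0\neq2$, so the claimed constant is strictly stronger than Hoeffding and cannot be had for free.

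Second, the link between $\epsilon_0$ and the counts in Step~1 is a missing ingredient, not a technicality; the paper simply asserts the inclusion. With a deterministic uniform $\epsilon_0$, the event $\{\epsilon/T\geq\Delta_M(x)-2\epsilon_0\}$ has probability $0$ or $1$ and says nothing about $n(x,a)$. Your repair needs a pointwise, count-dependent error bound for the shared Transformer. Proposition~\ref{prop_mb} controls only an averaged squared Bellman error. That gives pointwise control only through the fixed occupancy $d^{\pi_\beta}_{M,t}$, not through the random counts, and under parameter sharing it need not be monotone in $n(x,a)$.

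Even granting such a bound, $\tilde Q(x,a)\geq\tilde Q(x,a_g)$ only forces the error at $a$ \emph{or} at $a_g$ to be at least $(\Delta_M(x)-\epsilon/T)/2$. To conclude $n(x,a_g)\leq n_\Delta(x)$ for every $a_g\in G_{\mathcal A}$, you must separately rule out overestimation at the suboptimal action $a$, via coverage of $a$ or pessimism. That is exactly the unsupported-action failure the paper measures.

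Finally, $\sum_{a\in G_{\mathcal A}}n(x,a)$ is binomial in the number of visits to the fixed context $x$. It is not binomial in the number $n_0$ of contexts outside $G_{\mathcal X}$. (That set is in fact empty for $G$ as defined, since every context has an optimal action in $G$.) Normalizing by $n_0$ therefore needs an identification you have not supplied.
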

\begin{proof}
\begin{equation*}
    \begin{array}{rl}
       \mathbb{P}[\hat{\pi}_{\text{QTPT}}(x)\notin G_{\mathcal{A}}]  =&\mathbb{P}_M[\exists a\notin G_{\mathcal{A}}, \, \forall a_g\in G_{\mathcal{A}}, \, \hat{Q}(x,a)\geq \hat{Q}(x,a_g)] \\
          \leq  &\mathbb{P}_M[\exists a\notin G_{\mathcal{A}}, \forall a_g\in G_{\mathcal{A}}, \, \hat{Q}(x,a)-Q^{*}(x,a)\geq \hat{Q}(x,a_g)-Q^*(x,a_g)\\&+\Delta_M(x)-\epsilon/T] \\
          \leq &\mathbb{P}_M[\cap_{a_g\in G_{\mathcal{A}}} \{\epsilon/T\geq \Delta_M(x)-2\epsilon_0\} ] \\
          \leq &\mathbb{P}_M[\cap_{a\in G_{\mathcal{A}}} \{n(x,a)\leq n_{\Delta}\} ]
    \end{array}
\end{equation*}
where $n_{\Delta}$ corresponds to the maximum value of $n(x,a)$ s.t. 
\[\frac{\epsilon}{T}\geq \Delta_M(x)-2\epsilon_0\]
consider the action sampled from $d_M^{\pi_{\beta}}(a|x)$ belong to the set $G_{\mathcal{A}}$ or not, 
\begin{equation*} \begin{array}{rl} \mathbb{P}_M[\cap_{a_g\in G_{\mathcal{A}}} \{n(x,a)\leq n_{\Delta}\} ] & \leq \mathbb{P}_M[\sum_{a_g\in G_{\mathcal{A}}} n(x,a)\leq |G_{\mathcal{A}}|n_{\Delta}(x)]\\
&\leq \mathbb{P}_M[\frac{\sum_{a_g\in G_{\mathcal{A}}} n(x,a)}{n_0}\leq \frac{|G_{\mathcal{A}}|n_{\Delta}(x)}{n_0}]\\\ &\leq \exp \Big(-n_0 KL\Big(Bern(\frac{|G_{\mathcal{A}}|n_{\Delta}(x)}{n_0})||Bern(1/\alpha_0)\Big)\Big)\end{array}\end{equation*}
since $KL(p+\epsilon||p)\geq \frac{\epsilon^2}{2p(1-p)}$, if $p\geq 1/2$,  $p=1/\alpha_0$,  we have 
\[ \mathbb{P}_M[\cap_{a\in G_{\mathcal{A}}} \{n(x,a)\leq n_{\Delta}(x)\} ]\leq \exp \Big(-n_0 \frac{\alpha_0^2}{2(\alpha_0-1)}\times [\frac{|G_{\mathcal{A}}|n_{\Delta}(x)}{n_0}-\frac{1}{\alpha_0}]^2\Big)\]
\end{proof}
Combining the result of Lemma \ref{lemma_critical}, for an appropriate value of $c$ and under the stated coverage and gap conditions, we obtain Proposition \ref{data_criterion}, which compares \ours and SPT when a few informative context states appear in the given trajectories. 

\begin{proof}

Suppose there exists $p_M\in [0,1]$ s.t. for any policy $\pi$, the average occupancy of states which is not in non-informative set $G$ is bounded as 
\[
\sum_{t=1}^T\sum_{x\notin G_{\mathcal{X}}} d^{\pi}_{M,t}(x)\leq p_M.
\]
Any given environment $M\sim \Lambda $ such that
$\sup_{x\in \mathcal{X}}\{d^{\pi^*}_M(x)/d^{\pi_\beta}_M(x)\}=m$ be a generalization parameter which is independent of $T$, 
and $L_0\leq 1+1/n$. 
A non-informative context set $G$ is given such that $|\mathcal{X}\setminus G_{\mathcal{X}}|\geq n_0$. 
For learning algorithm that return a policy $\hat{\pi}^*$, the suboptimality is given: 
\begin{equation*}
    \begin{array}{rl}
        & \mathsf{Subopt}(\pi^*,\hat{\pi}^*) \\
        =&\sum_{x} d^{\pi^*}_M(x)
        (V^{\pi^*}_{M,1}(x)-V^{\hat{\pi}^*}_{M,1}(x)) \\
        \leq & m\Big[\underbrace{\sum_{x\notin G_{\mathcal{X}}}d_M^{\hat{\pi}^*}(x) (Q^*(x;\pi^*)-Q^*(x;\hat{\pi}^*))}_{term\ 1}+\underbrace{\sum_{x\in G_{\mathcal{X}}}d_M^{\hat{\pi}^*}(x) (Q^*(x;\pi^*)-Q^*(x;\hat{\pi}^*))}_{term\ 2} \Big], 
    \end{array}
\end{equation*}
On the one hand, consider the part of $x\notin G_{\mathcal{X}}$ (named term 1), we can construct a new MDP where $r(x,a)=r(x,\pi^*(x))$ for all actions and $\forall x\in G_{\mathcal{X}}$. Denote the suboptimality of this MDP be $\mathsf{Subopt}_{x\notin G_{\mathcal{X}}}(\pi^*,\hat{\pi}^*)$, then we have 
\[
\mathsf{Subopt}(\pi^*,\hat{\pi}^*)=\mathsf{Subopt}_{x\notin G_{\mathcal{X}}}(\pi^*,\hat{\pi}^*)
\]
This can be bounded as in Proposition \ref{prop_mb} , except with the dependence on all context states $\mathcal{X}$ replaced by $\mathcal{X} \setminus G_{\mathcal{X}}$ . We get that 
\[
term\ 1\lesssim p_MT^2(\frac{1}{\sqrt{n}}+\frac{\sqrt[4]{\log T}}{\sqrt[4]{n}})
\]
On the other hand, consider the part of $x\in G_{\mathcal{X}}$ (named term\, 2), 
\[
term\, 2=\sum_{x\in G_{\mathcal{X}}}\sum_{a\in G_{\mathcal{A}}}W(x,a,\pi^*)+\sum_{x\in G_{\mathcal{X}}}\sum_{a\notin G_{\mathcal{A}}}W(x,a,\pi^*)
\]
where $W(x,a,\pi^*)=d_M^{\hat{\pi}^*}(x,a) (Q^*(x;\pi^*)-Q^*(x,a))$. 
Since 
\begin{align*}
& \sum_{a\in G_{\mathcal{A}}}W(x,a,\pi^*)\leq d_M^{\hat{\pi}^*}(x) \mathbb{P}[\hat{\pi}^*(x)\in G_{\mathcal{A}}]\cdot \frac{\epsilon}{T},\\
& \sum_{a\notin G_{\mathcal{A}}}W(x,a,\pi^*)\leq d_M^{\hat{\pi}^*}(x)\mathbb{P}[\hat{\pi}^*(x)\notin G_{\mathcal{A}}]\cdot c,
\end{align*}
using Lemma \ref{lemma_critical}, and note that in this case, we are interested in the setting where $L_0\simeq 1+\mathcal{O}(1/n)$, 
we can get that 
\[
\sum_{x\in G_{\mathcal{X}}} d_M^{\hat{\pi}^*}(x)\mathbb{P}[\hat{\pi}^*(x)\notin G_{\mathcal{A}}]\cdot c\lesssim (1-p_M) f_{\text{QTPT}}(n,c),
\]
where $f=c\cdot\exp(-n_0\times \frac{(n+1)^2}{2n}\times [\frac{|G_{\mathcal{A}}|n_{\Delta}(x)}{n_0}-\frac{n+1}{n}]^2)$ is an exponential function of $-n$. Meanwhile, the corresponding term for SPT is 
\begin{align*}
&\sum_{s\in G_{\mathcal{X}}} d_M^{\hat{\pi}^*}(s)\mathbb{P}[\hat{\pi}^*(s)\notin G_{\mathcal{A}}]\cdot c\\
\lesssim & (1-p_M)
c\cdot \frac{1}{n}\\
=&(1-p_M) g_{\text{SPT}}(n,c).
\end{align*}
By controlling $c$ and $p_M$ for some appropriate values, we have the comparison of \ours and SPT as follows: 
\[
\frac{\mathsf{Subopt}_{\Lambda}(\pi^*,\hat{\pi}_{\text{QTPT}})}{\mathsf{Subopt}_{\Lambda}(\pi^*,\hat{\pi}_{\text{SPT}})}\simeq \frac{m \Big[ p_MT^2(\frac{1}{\sqrt{n}}+\frac{\sqrt[4]{\log T}}{\sqrt[4]{n}})+(1-p_M)(f+\epsilon/T) \Big] }{p_M\frac{T}{n}+(1-p_M)g},
\]
since $T$ is finite, combining $f\rightarrow \mathcal{O}(e^{-n}),g\rightarrow \mathcal{O}(1/n)$,
setting $p_M=\frac{1}{T\sqrt{T}}$, we get that 
\begin{align*}
    \frac{\mathsf{Subopt}_{\Lambda}(\hat{\pi}_{\text{QTPT}})}{ \mathsf{Subopt}_{\Lambda}(\hat{\pi}_{\text{SPT}})} & \simeq \frac{m\Big[ p_M T^2(\mathcal{O}(n^{1/2})+\mathcal{O}(n^{3/4})) +(1-p_M)fn \Big]}{p_M T+(1-p_M)gn} \\
    & \simeq \frac{m\Big[ \sqrt{T}(\mathcal{O}(n^{1/2})+\mathcal{O}(n^{3/4})) +(1-p_M)fn \Big]}{1/\sqrt{T}+(1-p_M)gn} \\
    & \simeq \frac{(1-p_M)fn}{(1-p_M)gn} \\
    & \simeq \mathcal{O}(e^{-n}/n)
\end{align*}
so we get that
\[
\mathsf{Subopt}_{\Lambda}(\hat{\pi}_{\text{QTPT}})\lesssim \mathsf{Subopt}_{\Lambda}(\hat{\pi}_{\text{SPT}}).
\]
\end{proof}

\subsection{Transformer Realization of the QTPT Update}\label{ssec:approx}

\begin{proposition}\label{tran_approx}
(\textbf{Transformer Realization of One QTPT Bellman Update})
For bounded Q-value predictions and a tokenized transition history, there exists a Transformer block of appropriate scale that constructs the TD target and implements the relaxed QTPT update
\[
Q^{+}(x_t,a_t)=(1-\alpha)Q(x_t,a_t)+\alpha\left(r_t+\max_{a'}Q(x_{t+1},a')\right),\quad \forall t\in[T].
\]
This proposition establishes architectural realizability of the update operation. It does not by itself imply that a fixed Transformer class has zero approximation error or that repeated optimization converges to the ideal Bellman minimizer; those effects are represented separately by $\epsilon_{\mathrm{tf}}$ in the model-bias bound.
\end{proposition}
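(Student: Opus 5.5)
We prove Proposition~\ref{tran_approx} by an explicit construction in the style of the attention/MLP realizations of \citet{bai2023transformers} and \citet{lin2024transformersdecisionmakersprovable}. The input is a token sequence $h_1,\dots,h_T$. Token $h_t$ stores:
\begin{itemize}[leftmargin=0.2in, nolistsep]
\item the transition $(s_t,a_t,r_t)$;
\item the current value estimates $Q(x_t,a)$ for every $a\in\mathcal{A}$ (assuming a finite action set, or a finite readout grid), written in reserved coordinates;
\item a positional encoding of $t$;
\item zero scratch coordinates.
\end{itemize}
The block will be one attention layer followed by one ReLU MLP layer, each with residual connections. It writes $Q^{+}(x_t,a_t)$ into a scratch coordinate of token $t$, using the terminal convention $Q_{T+1}\equiv 0$.

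\textbf{Step 1 (max over actions via the MLP).} Apply a first MLP layer to each token $t+1$ to compute $m_{t+1}=\max_{a'}Q(x_{t+1},a')$. Use the identity $\max(u,v)=v+\mathrm{ReLU}(u-v)$ and a binary tree of depth $\lceil\log_2|\mathcal{A}|\rceil$. Each tree level is exactly realizable with ReLU units, so this step can be done by a few stacked MLP sublayers, or by one wider MLP that evaluates the tree in parallel. Since the $Q$-values are bounded, every intermediate quantity stays bounded. The last token writes $m_{T+1}=0$, selected by its positional coordinate.

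\textbf{Step 2 (fetch the next-step maximum via attention).} Use a single head whose query at position $t$ encodes $t+1$ and whose key at position $u$ encodes $u$. For example, with sinusoidal/cosine positional features, the score $\beta\langle p(t+1),p(u)\rangle$ is uniquely maximized at $u=t+1$. In the hard-attention limit ($\beta\to\infty$), or by a ReLU-attention construction that is exactly one-hot as in \citet{bai2023transformers}, the head copies $m_{t+1}$ into a scratch coordinate of token $t$. Because the block is causal, we apply it to an output position that follows the next transition, so that token $t+1$ is visible. Equivalently, we reindex so that query $t$ sits after $x_{t+1}$, which is how the TD target in Algorithm~\ref{algo:Q} uses $D_t$. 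The next MLP then forms $y_t=r_t+m_{t+1}$ and the convex combination $(1-\alpha)Q(x_t,a_t)+\alpha y_t$. Both are linear in stored coordinates, so a linear map plus the residual connection realizes them. To select $Q(x_t,a_t)$ from the vector $\{Q(x_t,a)\}_a$, take an inner product with the one-hot encoding of $a_t$. This is bilinear, but it can be realized exactly by ReLU gating because the $Q$-values are bounded by some $B_Q$: $Q\cdot\mathbb{1}[a=a_t]=\mathrm{ReLU}(Q+B_Q(\mathbb{1}-1))-\mathrm{ReLU}(-Q+B_Q(\mathbb{1}-1))$.

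\textbf{Main obstacle.} The delicate step is the exact positional retrieval in Step 2 under softmax attention. If the attention is softmax, we only get approximate one-hot weights, with error $\lesssim T e^{-\beta\gamma}\cdot\sup|m|$, where $\gamma$ is the score gap. We then take $\beta$ large (this is the ``appropriate scale'' in the statement) so that the error falls below any prescribed tolerance. Any residual discrepancy is absorbed into $\epsilon_{\mathrm{tf}}$, consistent with the caveat in the proposition. We expect to state the construction exactly for ReLU attention, and to give the approximate softmax version as a remark with explicit norm bounds on the weights.
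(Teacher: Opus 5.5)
Your proposal is correct and follows the same plan as the paper. Attention keyed on positional features moves the needed next-step quantity to the right position, and ReLU MLPs compute $\max_{a'}Q$, the target $y_t$, and $(1-\alpha)Q(x_t,a_t)+\alpha y_t$.

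Compared with the paper:
\begin{itemize}
\item \textbf{Layout and causality.} The paper spreads the construction over four steps on separate state and action--reward tokens. It assumes $\hat Q_\theta(x_t,a_t)$ is linearly readable from $h_{2t-1}$, and its Step 3 lets $h_{2t-1}$ attend to the later token $h_{2t}$. Your single-token layout, the bounded-ReLU gate that extracts $Q(x_t,a_t)$ via the one-hot $a_t$, and the causal reindexing make these points explicit.
\item \textbf{The max, which is the one substantive difference.} The paper uses a single hidden layer computing $\sigma(q_1)+\sum_{k\ge 2}\sigma(q_k-q_{k-1})$. This is not the maximum in general: it gives $2$ for $q=(1,0,1)$. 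Your pairwise tree $\max(u,v)=v+\mathrm{ReLU}(u-v)$ is exact.
\end{itemize}

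Three corrections are needed:
\begin{itemize}
\item \textbf{Depth of the max.} Keep the tree as $\lceil\log_2|\mathcal{A}|\rceil$ stacked MLP sublayers. A one-hidden-layer ReLU map has a gradient jump that is constant along each breakpoint hyperplane. But $\max(q_1,q_2,q_3)$ kinks on $\{q_1=q_2>q_3\}$ and not on $\{q_1=q_2<q_3\}$, so it cannot be represented that way. Drop the ``one wider MLP'' option and your opening description ``one attention layer followed by one ReLU MLP layer''; ``appropriate scale'' covers the extra depth.
\item \textbf{Exactness.} State the construction exactly with ReLU attention, as the paper does. Do not absorb a softmax residual into $\epsilon_{\mathrm{tf}}$: that term is the Bellman-error floor of the hypothesis class, not a per-update realization error.
\item \textbf{Terminal step.} Supply $m_{T+1}=0$ for $t=T$, either with an appended zero token or with a positional gate.
\end{itemize}
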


\begin{proof}

\textbf{Embedding and Extraction Mappings}

For each $t\in [T]$, we construct two tokens 
\[
h_{2(t-1)}=\begin{bmatrix} s_t\\ \hdashline 0_{|\mathcal{A}|+1}\\ \hdashline pos_{2(t-1)}\end{bmatrix}
=\begin{bmatrix}h^a_{2(t-1)}\\h^b_{2(t-1)}\\h^c_{2(t-1)} \end{bmatrix},\quad 
h_{2t-1}=\begin{bmatrix}0_{|\mathcal{S}|}\\ \hdashline a_t\\r_t\\ \hdashline pos_{2t-1}\end{bmatrix}
=\begin{bmatrix}h^a_{2t-1}\\h^b_{2t-1}\\h^c_{2t-1} \end{bmatrix}
\]
where $pos$ is the position embedding , $s_t,a_t$ are represented using one-hot embedding, $h^b_{2t-1}$ is used to store the policy at time $t$ given current state $s_t$. 
we add an empty token $h_{2T}=[0_{|\mathcal{S}|+|\mathcal{A}|+1}\quad pos_{2T}]^T$ to store intermediate calculations. We also include in the tokens the position embedding $pos_i=(i,i^2,1)^T$ for $i\in [2T]$. We define the token matrix $H_t=[h_0,\cdots,h_{2t-1}]\in \mathbb{R}^{D_{dim}\times 2t}, D_{dim}=|\mathcal{S}|+|\mathcal{A}|+4$ for all $t\in [T]$. 

\textbf{Pretraining} 

During pretraining the Transformer $TF_{\theta}$ takes in $H_{T}^{pre}=[h_0,\cdots,h_{2T-1}]$ as the input matrix, and generates $H^{post}_{T}=TF_{\theta}(H_{T}^{pre})$ as the output. 
from each $t$, suppose the linear extraction map $\mathcal{G}$ , $h^{out}_{t}=H^{post}_{T}[:,2t-1]$, $\hat{Q}_{\theta}(x_t,a_t)=[\mathcal{G} h^{out}_t]_{a_t}$, where $[\mathcal{G} h]_{a}$ be the logit of line $a$. \\
Calculating $y_t=r_t+\max_{a'} \hat{Q}_{\theta}(x_{t+1},a')$,  we then updating the parameter $\theta\in \Theta$ by gradient descent method. 

\textbf{Rollout} 

At online deployment time, we initialize an empty context dataset $D_0$. 
At each timestep $t\in[T]$, given the accumulated context $D_{t-1}$ and the current state $s_t$, we construct the token matrix $H^{pre}_{roll,t} = [h_0,\cdots, h_{2(t-1)-1}, h_{2(t-1)}]$, and generates $H^{post}_{roll,t}=TF_{\hat{\theta}}(H_{roll,t}^{pre})$, from the last output token corresponding to the current state, $h^{out}_{roll,t}=H^{post}_{roll,t}[:,2t-1]$, we compute the action logits via the extraction map $\mathcal{G}$ as $\{[\mathcal{G}_a h_{roll,t}^{out}]\}_{a\in \mathcal{A}}$. 
The agent selects the action $a_t$ according to a greedy policy over these logits $a_t= \arg\max_{a}[\mathcal{G}_a h_{roll,t}^{out}]$. 
After selecting, the agent executes the action, observes the reward $r_t$ and the next state $s_{t+1}$, then construct the new token $h_{2(t-1)+1}$ and append it to the context dataset $D_t$. \\
Given the input token matrix $H^{pre}_{roll,t}$, we construct a Transformer that implements the following steps on the last token. For each token $h^{pre}_{2t-1}$, the Transformer implements the following step-by-step structured transformations:

\[
\begin{bmatrix}
h^{pre,a}_{2t-1} \\
h^{pre,b}_{2t-1} \\
h^{pre,c}_{2t-1}
\end{bmatrix}
\xrightarrow{\text{Step 1 }}
\begin{bmatrix}
h^{pre,a}_{2t-1} \\
0_{\mathcal{A}}\\ r_t \\
pos_{2t-1}
\end{bmatrix}
\xrightarrow{\text{Step 2 }}
\begin{bmatrix}
h^{pre,a}_{2t-1} \\
\max\limits_{a'} \hat{Q}_{\theta}(x_{t+1}, a') \\
r_t \\
pos_{2t-1}
\end{bmatrix}
\]

\[
\xrightarrow{\text{Step 3 }}
\begin{bmatrix}
h^{pre,a}_{2t-1} \\
y_t = r_t + \max\limits_{a'} \hat{Q}_{\theta}(x_{t+1}, a') \\
\star \\
pos_{2t-1}
\end{bmatrix}
\xrightarrow{\text{Step 4 }}
\begin{bmatrix}
h^{post,a}_{2t-1} \\
h^{post,b}_{2t-1} \\
h^{post,c}_{2t-1}
\end{bmatrix}
\]

given the current context $x_t=(D_{t-1},s_t)$, the Transformer $TF_{\theta}(\cdot)$ outputs a token $h^{out}_{2t-1}$, from which the Q-values ${\hat{Q}_{\theta}(x_t,a)}$ are extracted. Each step is defined as below: 

\textbf{Step 1 (Reward Extraction)}  

There exists a attention-only Transformer $\text{TF}_\theta(\cdot)$ that implements Step 1. 

\textbf{Proof of Step 1} 
We prove this step by constructing a Transformer that add $r_t$ from $h_{2t-1}^b$ to $h_{2t+1}^b$. 
We can construct a two-layer attention-only Transformer with $Q^{(1)}_{1,2},K^{(1)}_{1,2},V^{(1)}_{1,2}$, such that for all $i\leq 2t-1$,  
\[
Q_{1}^{(1)}h_i^{(0)}= \begin{bmatrix}1\\i
\end{bmatrix},\quad K^{(1)}_1h_i^{(0)}=\begin{bmatrix}i+2\\-1\end{bmatrix},\quad V_1^{(1)}h^{(0)}_{2t-1}=\begin{bmatrix}0_{|\mathcal{S}|}\\0_{|\mathcal{A}|}\\r_t
\end{bmatrix},\quad V_1^{(1)}h^{(0)}_{2t}=\begin{bmatrix}0_{|\mathcal{S}|}\\0_{|\mathcal{A}|}
\\0\end{bmatrix}, 
\]
and we choose $Q_2^{(1)}=Q^{(1)}_1,V_2^{(1)}=-V^{(1)}_1$, $K_2^{(1)}h^{(0)}_i=\begin{bmatrix}i+1\\-1\end{bmatrix}$, summing up the heads, we obtain the update on a subset of coordinates in $h^{(0),b}_{2t+1}$ as
\[
0_{|\mathcal{S}|+|\mathcal{A}|+1}\rightarrow 0_{|\mathcal{S}|+|\mathcal{A}|+1} + \sum^2_{j=1} \sum_{i=1}^{2t+1} \sigma(\langle Q_j^{(1)}h^{(0)}_{2t+1},K_j^{(1)}h_i^{(0)}\rangle )V_jh_i^{(0)}=\frac{1}{2t+1}\begin{bmatrix} 0_{|\mathcal{S}|}\\0_{|\mathcal{A}|}\\r_t \end{bmatrix}
\]
We then use another attention layer to multiply the updated vectors by a factor of $2t+1$.  

Choosing $Q_1^{(2)}h^{(1)}_i=\sqrt{2t+1}\cdot e_i$, $K_1^{(2)}h^{(1)}_j=\sqrt{2t+1}\cdot e_j$, $V_1^{(2)}h^{(1)}_{2t+1}=\frac{1}{2t+1}\begin{bmatrix}0_S\\0_A\\r_t\end{bmatrix}$, and noting that $\langle Q_1^{(2)}h^{(1)}_i,K_1^{(2)}h^{(1)}_j\rangle =2t+1$, when $j=i$ and otherwise 0. 

\textbf{Step 2 (Future Q-Value Lookup)}  

Attend to the next state token $h^{\text{pre}}_{2t}$, and extract the maximum predicted $\max_{a'} \hat{Q}_{\theta}(x_{t+1},a')$.
There exists a Transformer $\text{TF}_\theta(\cdot)$ that implements Step 2. 

\textbf{Proof of Step 2} 

Given $(\hat{Q}_{\theta})_{T}=0$, we start with constructing an-attention layer, and $\{Q_{jt,s}\}_{s=1}^2$,$\{K_{jt,s}\}_{s=1}^2$,$\{V_{jt,s}\}_{s=1}^2$ such that for all $i\leq 2t-1$ and $j\leq i$, 
\[
Q_{jt,1}^{(1)}h_i^{(0)}=\begin{bmatrix}x_t\\-i\\3T\end{bmatrix},\quad K_{jt,1}^{(1)}h_i^{(0)}=\begin{bmatrix}\hat{Q}_t(\cdot,a_j)\\3T\\j\end{bmatrix},\quad V_{jt,1}^{(1)}h_i^{(0)}=\begin{bmatrix}0\\ie_{jt}\\0\end{bmatrix},\quad Q_{jt,2}^{(1)}h_i^{(0)}=\begin{bmatrix}-x_t\\-i\\3T\end{bmatrix},
\]
$K_{jt,2}^{(1)}=K_{jt,1}^{(1)}$, $V_{jt,2}^{(1)}=-V_{jt,1}^{(1)}$. \\
where $e_{jt}$ is a one-hot vector supported on some entry of $h^c$ . Summing up two heads gives the update for $i\leq 2t-1$, 
\[
0\rightarrow 0+\Big[ \sigma(\langle Q_{jt,1}^{(1)}h_i^{(0)},K_{jt,1}^{(1)}h_i^{(0)}\rangle)-\sigma(\langle Q_{jt,2}^{(1)}h_i^{(0)},K_{jt,2}^{(1)}h_i^{(0)}\rangle) \Big]e_{jt}=\hat{Q}_t(x_t,a_j) e_{jt}
\]
Denote the resulting token vector by $h_i^{(1)}$. 

Next, we construct a MLP layer, 
s.t for any $x\in\mathcal{X}$ on the corresponding coordinates we have 
\[
W^{(2)}_{1}h^{(1)}_i=\begin{bmatrix}\vdots \\ \hat{Q}_t(x,a_1)\\ \hat{Q}_t(x,a_2)-\hat{Q}_t(x,a_1)\\ \vdots \\ \hat{Q}_t(x,a_{|\mathcal{A}|})-\hat{Q}_t(x,a_{|\mathcal{A}|-1})\\ \vdots
\end{bmatrix}
\]
where $a_k$ denotes the $k^{th}$ action, and 
\[
W^{(2)}_2 \sigma(W_1^{(2)}h^{(1)}_i)=\sigma(\hat{Q}_t(x,a_1))+\sum_{k=2}^A \sigma(\hat{Q}_t(x,a_k)-\hat{Q}_t(x,a_{k-1}))=\max_{a\in A}\hat{Q}_t(x,a).
\]

\textbf{Step 3 (TD Target Construction)}  

Combine $r_t$ and $\max_{a'} \hat{Q}(x_{t+1},a')$ to form the TD target:
\[
y_t = r_t + \max_{a'} \hat{Q}(x_{t+1},a').
\]
There exists a Transformer $TF_{\theta}(\cdot)$ that implements Step 3.

\textbf{Proof of Step 3} 

The proof is similar to that of the literature \cite{wang2025transformerslearntemporaldifference}. 
Let
\[
Q_1^{(1)}h^{(0)}_i=K^{(1)}_1h^{(0)}_i=\begin{bmatrix}
0\\1\\0\end{bmatrix},\quad V^{(1)}_1 h^{(0)}_{2t-1}=\begin{bmatrix}
0_{|\mathcal{S}|}\\r_t\\0\end{bmatrix},
\]
\[Q^{(1)}_2 h^{(0)}_i=K^{(1)}_2 h^{(0)}_i=\begin{bmatrix}
0\\0\\1\end{bmatrix},\quad V^{(1)}_2h^{(0)}_{2t}=\begin{bmatrix}
0_{|\mathcal{S}|}\\\max_{a'}\hat{Q}_{\theta}(x_{t+1},a')\\0\end{bmatrix},
\]
then we have that 
\[h^{(1)}_{2t-1}=\sum^2_{j=1}\sum^{2t}_{i=1}\sigma(\langle Q^{(1)}_jh^{(0)}_{2t-1},K^{(1)}_jh^{(0)}_i\rangle)V^{(1)}_j h^{(0)}_i=\begin{bmatrix}0_{|\mathcal{S}|}\\r_t\\\max_{a'}\hat{Q}_{\theta}(x_{t+1},a')\\0
\end{bmatrix},
\]
define a MLP layer with $W^{(2)}_1=\begin{bmatrix}
    0&0&0\\0&1&0\\0&0&1
\end{bmatrix}$, $W^{(2)}_2=\begin{bmatrix}
    0&1&1
\end{bmatrix}$.
We can get that $W^{(2)}_2 \sigma(W_1^{(2)}h^{(1)}_{2t-1})=r_t + \max\limits_{a'} \hat{Q}_{\theta}(x_{t+1}, a') .$

\textbf{Step 4 (Q-Value Update):}

There exists a small MLP head (two-layer feed-forward network) which updates $h_{2t-1}$ to regress the Q-value prediction $\hat{Q}(x_t,a_t)$ towards the TD target $y_t$.

\textbf{Proof of Step 4} 

The proof is related to the literature \cite{vonoswald2023transformerslearnincontextgradient}.
Let 
\[
Q_1^{(1)}h^{(0)}_i=K^{(1)}_1h^{(0)}_i=\begin{bmatrix}0\\1\\0\\0\end{bmatrix}, \quad V^{(1)}_1 h^{(0)}_{2t-1}=\begin{bmatrix}
0_{|\mathcal{S}|}\\\hat{Q}_{\theta}(x_t,a_t)\\0\\0\end{bmatrix},
\]
\[
Q^{(1)}_2 h^{(0)}_i=K^{(1)}_2 h^{(0)}_i=\begin{bmatrix}
0\\0\\1\\0\end{bmatrix},\quad V^{(1)}_2h^{(0)}_{2t-1}=\begin{bmatrix}
0_{|\mathcal{S}|}\\0\\y_t\\0\end{bmatrix}
\]
combining two attention heads, we get that $h^{(1)}_{2t-1}=\begin{bmatrix}0_{|\mathcal{S}|}\\ \hat{Q}_{\theta}(x_t,a_t)\\y_t\\0
\end{bmatrix}.$
define a MLP layer with $W^{(2)}_1=\begin{bmatrix}
    \cdots \\0&1&0&0\\0&0&1&0\\ \cdots
\end{bmatrix}$, $W^{(2)}_2=\begin{bmatrix}
    0&1-\alpha&\alpha&0
\end{bmatrix}$, where $\alpha$ be a coefficient.
We have that $W^{(2)}_2 \sigma(W_1^{(2)}h^{(1)}_{2t-1})=(1-\alpha)\hat{Q}_{\theta}(x_t,a_t)+\alpha y_t .$

\end{proof}

\section{Experiment Details}
\label{sec:exp_details}
\subsection{Computing Resources}
All experiments are conducted on two NVIDIA A100 GPUs (40GB each).

\subsection{LinUCB Algorithm}
\label{ssec:linucb}
Let $T$ denote the time horizon and $\lambda, \alpha > 0$ be input parameters. At each time step $t \in \{1,2,...,T\}$, the LinUCB algorithm operates through the following steps:

\begin{enumerate}
    \item Compute the ridge estimator for the weight vector: \[
    \mathbf{w}^{t}_{\text{ridge}, \lambda} = \arg\min_{\mathbf{w} \in \mathbb{R}^d} \left( \frac{1}{2t} \sum_{j=1}^{t-1} \left( r_j - \langle \mathbf{a}_j, \mathbf{w} \rangle \right)^2 + \frac{\lambda}{2t} \|\mathbf{w}\|_2^2 \right)
    \]
    
    \item For each action $i \in [A]$, compute the upper confidence bound:
    \[
    v_{t,i}^* = \langle \mathbf{a}_{t,i}, \mathbf{w}^{t}_{\text{ridge}, \lambda} \rangle + \alpha \sqrt{\mathbf{a}_{t,i}^{\top} \mathbf{A}_t^{-1} \mathbf{a}_{t,i}},
    \]
    where $\mathbf{A}_t = \lambda \mathbf{I}_d + \sum_{j=1}^{t-1} \mathbf{a}_j \mathbf{a}_j^{\top}$.
    
    \item Select the action $a_{t,j}$ by:
    \[
    j := \arg\max_{i \in [A]} v_{t,i}^*.
    \]
\end{enumerate}

\subsection{Darkroom Environment}
\label{ssec:darkroom_details}

 Darkroom is considered as a complex Markov decision process (MDP) problem and is utilized as a standard benchmark for evaluating in-context learning \cite{laskin2022context, lee2023supervised}. In this experiment, the agent must locate an unknown goal within a 10 × 10 darkroom. The agent receives a reward of 1 only when it reaches the goal. At each step, the agent can choose from five possible actions: move up, down, left, right, or stay still. If the agent is not at the goal, it receives a reward of 0. The horizon for the Darkroom environment is set to 100 steps. We summarize the details as follows:

\paragraph{Pretraining Data Collection} Similar to the stochastic linear bandit problem, we consider two types of policies: A random policy, which selects an action randomly at each position, and an expert policy, which chooses legal actions to avoid crashing into walls and will stay still once the agent receives a reward of 1. To test whether the pretrained model can generalize to \textbf{unseen} RL problems in context, we collect datasets from 80 out of the total 100 goals, reserving the remaining 20 for testing. For each training goal, we run both the random and expert policies, collecting 1k trajectories from each policy. This leads to a total of 80k trajectories for the random policy and 80k trajectories for the expert policy, resulting in 160k context trajectories in the pretraining dataset.

\paragraph{Comparison and Implementation}We evaluate QTPT and SPT under two settings: pretraining on purely random data and on purely expert data. The models are evaluated on an \textbf{unseen} task, where the target goal is not included in the pretraining dataset. The model architecture is identical to that used in the stochastic linear bandit experiments (see Section~\ref{ssec:implement}).

\subsection{Dark Key-to-Door Environment}
\label{ssec:darkk2d}
The Dark Key-to-Door environment extends the complexity of the Darkroom setting by introducing a hierarchical task structure with sparse rewards. In this environment, the agent must sequentially accomplish two objectives: first locate an invisible key to receive a reward of $r = 1$, then find and open an invisible door to receive an additional reward of $r = 1$. This creates a challenging sparse reward scenario where the agent must learn to complete subtasks in the correct order. The environment consists of a 9 × 9 grid, and each episode is limited to 50 steps. At the beginning of each episode, the agent is reset to position $(0,0)$, while the key and door locations are randomly generated across different episodes. Since door and key can be placed in any positions, there are total $81\times81=6561$ distinct tasks. 

\paragraph{Pretraining Data Collection} To capture diverse exploration strategies, we collect our pretraining data using two distinct behavioral policies. The expert policy employs a systematic spiral search pattern, ensuring comprehensive coverage of the environment while efficiently locating the key and door. In contrast, the random policy selects actions uniformly at random, providing diverse but generally suboptimal exploration patterns. To ensure a balanced representation of both behaviors, we collect 80k trajectories from each policy type. The pretraining dataset is partitioned into 5249 tasks, which represents approximately 80\% ($6561\times0.8\approx5249$) of the total tasks. The remaining 1312 tasks are held out for testing and are thus considered \textbf{unseen}.

\paragraph{Comparison and Implementation} We evaluate the performance of our QTPT and SPT models under two distinct pretraining configurations: one trained exclusively on random policy data and the other trained exclusively on expert policy data. The evaluation, conducted on \textbf{unseen tasks}, specifically assesses the models' ability to generalize the underlying hierarchical task structure and adapt to new key-door configurations during in-context learning. To ensure a fair comparison across environments and configurations, the model architecture is kept consistent with that of previous experiments.

\subsection{Miniworld Environment}
\label{ssec:miniworld}
To evaluate QTPT's effectiveness in visual domains, we employ the Miniworld \citep{lee2023supervised} navigation benchmark, which presents a visually grounded navigation task. In this environment, agents must navigate to the correct colored target box among four boxes positioned at the corners of the environment. The agent receives 25 × 25 RGB observations with directional conditioning, providing rich visual information that requires the model to process high-dimensional sensory input. The agent has access to three discrete actions: turn left, turn right, and move forward, which creates a more realistic navigation scenario compared to grid-world abstractions. The agent receives a reward of $r = 1$ only when positioned near the target box, with all other states yielding zero reward. Each episode is constrained to 50 timesteps, requiring efficient navigation strategies.

\paragraph{Pretraining Data Collection} We construct our pretraining dataset using two complementary data collection strategies to capture both optimal and suboptimal navigation behaviors. The expert policy demonstrates intelligent navigation by selecting legal actions that avoid wall collisions and environmental boundaries, and crucially, the agent remains stationary once it successfully reaches the target and receives the reward signal. This policy represents efficient goal-directed behavior in the visual navigation domain. The random policy, conversely, selects actions uniformly at random, generating diverse but often inefficient exploration trajectories that cover various parts of the state space. We collect 24k trajectories from each policy type for training. 

\paragraph{Comparison and Implementation} The evaluation protocol examines QTPT and SPT performance under both pure random data pretraining and pure expert data pretraining conditions. The model architecture incorporates appropriate visual encoding mechanisms while maintaining consistency with the core transformer structure used in other experimental settings.

\subsection{Implementation Detail}
\label{ssec:implement}
The submitted linear-bandit decoder is GPT-2-style \citep{garg2022can}, with 8 layers, 4 attention heads, and embedding dimension $D=256$. Its custom ReLU-attention implementation is not a standard GPT-2 block: it omits the usual attention output projection and uses a $256\to256\to256$ MLP rather than $256\to1024\to256$. The implemented model has 2,858,763 parameters. The original training setup uses batch size 32 and Adam, cosine learning-rate decay with a 2,000-step warmup, peak learning rate $5\times10^{-6}$, minimum $10^{-7}$, and weight decay 0.001. Supplementary experiments state any changes to this setup explicitly.

Each model is pretrained for 50 epochs on the collected datasets. After pretraining, the models are evaluated on a test set to assess their effectiveness in the context of the stochastic linear bandit problem.

\section{Additional Experiments}\label{additional-experi}
\subsection{Ablation Study on the Stochastic Linear Bandit Problem}
\label{sssec:ablation}
We also consider using 'softmax' operation when computing the Q-target in experiments.
To evaluate the robustness of \ours, we conducted ablation experiments examining several key factors: The impact of using either 'softmax' or 'hardmax', the effect of incorporating a double DQN framework for network updates, and the differences between ReLU and softmax activation functions in computing attention scores. Note that our original implementation combines 'softmax', ReLU, and a double DQN framework.
\begin{table*}[ht]
\centering
\caption{Comparison of results for different configurations of \ours.}
\resizebox{\textwidth}{!}{
\begin{tabular}{lcccc}
\toprule
\textbf{Random Data Results} & Default & Double DQN & Softmax + Hardmax & ReLU-Attn + Softmax-Attn \\ 
\midrule
Results & 30.9 & 34.01 & 32.48 & 197.26 \\ 
\midrule
\textbf{Expert Data Results} & Default & Double DQN & Softmax + Hardmax & ReLU-Attn + Softmax-Attn \\ 
\midrule
Results & 39.4 & 45.76 & 53.73 & 203.74 \\ 
\bottomrule
\end{tabular}
}
\end{table*}

In addition, we vary the embedding dimension, number of layers, and number of attention heads to evaluate the model's performance under different configurations. Note that the original configuration consists of 8 layers, 4 attention heads, and an embedding dimension of 256, pretrained on purely random data.

\begin{table}[h!]
\centering
\caption{Results for varying embedding dimensions.}
\begin{tabular}{lccccc}
\toprule
\textbf{Embedding Dimension} & 16 & 32 & 64 & 128 & 256 \\ 
\midrule
Results & 135.7 & 54.0 & 40.5 & 33.3 & 30.9 \\ 
\bottomrule
\end{tabular}
\end{table}

\begin{table}[h!]
\centering
\caption{Results for varying number of attention heads.}
\begin{tabular}{lccccc}
\toprule
\textbf{Number of Heads} & 1 & 2 & 4 & 8 & 16 \\ 
\midrule
Results & 34.2 & 30.2 & 30.9 & 29.3 & 31.4 \\ 
\bottomrule
\end{tabular}
\end{table}

\begin{table}[h!]
\centering
\caption{Results for varying number of layers.}
\begin{tabular}{lcccc}
\toprule
\textbf{Number of Layers} & 1 & 2 & 4 & 8 \\ 
\midrule
Results & 122.3 & 33.6 & 31.0 & 30.9 \\ 
\bottomrule
\end{tabular}
\end{table}

\subsection{Experiments on Non-stationary Environment.}
\label{ssec:non-stat}
We aim to demonstrate the robustness of \ours by evaluating its performance in non-stationary environments.

\paragraph{Experiment Setup:} The pretraining phase remains consistent with the process described in Section 4. Yet, in the test phase, we introduce two different settings: 
\begin{itemize}
    \item \textbf{Stationary:} The parameter $\theta^*$ is sampled from a uniform distribution $[0,1]^d$, which matches the conditions of the pretraining stage;
    \item \textbf{Non-stationary:} The parameter $\theta^*$ is sampled from a standard Gaussian distribution $N(0,1)^d$ and rescaled to lie within the range $[0,1]^d$. The rescaling is performed using the transformation: 
\[
\theta_{\text{scaled}} = \frac{\theta - \theta_{\min}}{\theta_{\max} - \theta_{\min}},
\]
where $\theta_{\min}$ and $\theta_{\max}$ represent the minimum and maximum values of the sampled $\theta$. In our experimental setup, we sample 1000 values from the Gaussian distribution and take the minimum and maximum from these samples for rescaling.

\end{itemize}

Figure \ref{fig:stationary} demonstrates that QTPT achieves superior performance in the non-stationary environment compared to the stationary setting. While SPT maintains relatively consistent performance across both environments, it significantly underperforms compared to Q-learning, especially under non-stationary conditions. These results highlight QTPT's enhanced adaptability to changing environmental dynamics and robustness to distributional shifts in the underlying parameters. Notably, the performance gap between QTPT and SPT widens considerably in the non-stationary setting, with QTPT maintaining strong performance while SPT's effectiveness diminishes.

\begin{figure}[htbp]
    \centering
    \includegraphics[width=0.85\linewidth]{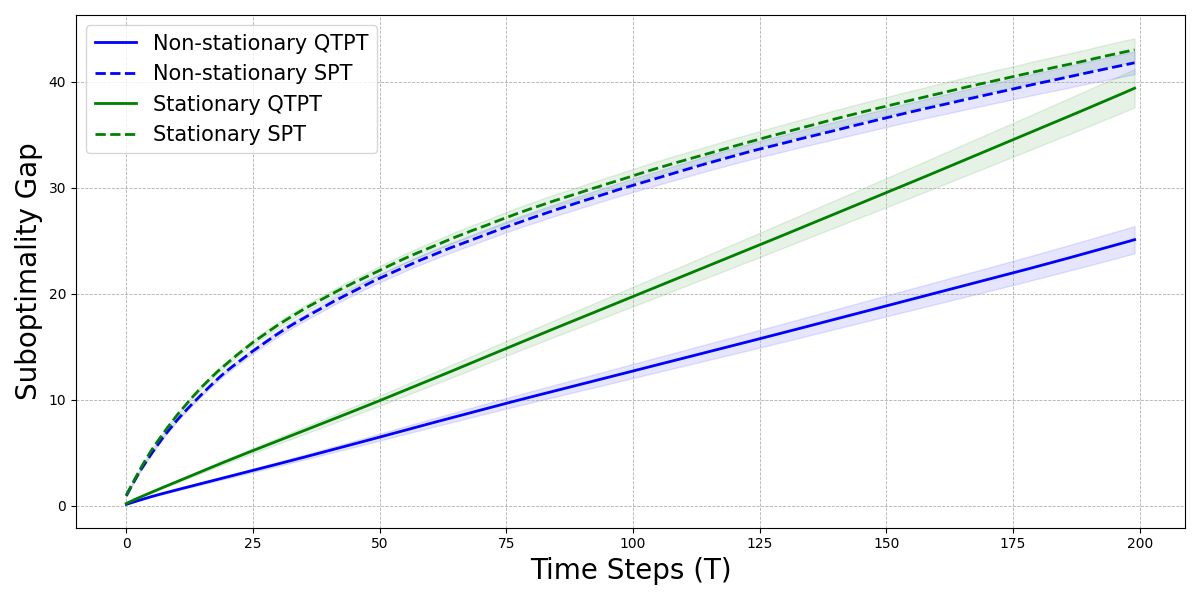}
    \caption{Comparison of Pretrained Transformer Performance Using Q-learning and Supervised Learning in Stationary and Non-stationary Environments. The shaded regions represent the
standard deviation of the suboptimality estimates
based on 1000 simulation runs.}
    \label{fig:stationary}
\end{figure}

\subsection{More Complex Task: Math Reasoning}
\label{ssec:math_reasoning}

\begin{algorithm}
    \caption{QTPT with Policy Optimization (QTPO)}
    \label{algo:bppo}
    \begin{algorithmic}[1]
    \STATE Estimate behavior policy $\hat{\pi}_{\beta}$ by behavior cloning on offline dataset $D$
    \STATE Training the Value function $\hat{V}_{\pi_\beta}$ and advantage function $\hat{A}_{\pi_\beta}$ together by dueling architecture using \ours on offline dataset $D$. 
    \STATE Initialize $k = 0$ and set $\pi_k \leftarrow \hat{\pi}_{\beta}$
    
    \FOR{$i = 0, 1, 2, \cdots, I$}
        \STATE Update the policy $\pi$ by maximizing $L_k(\pi)$
        
        \IF{$J(\pi) > J(\pi_k)$}
            \STATE Set $k = k + 1 \& \pi_k \leftarrow \pi$
        \ENDIF
    \ENDFOR
    \end{algorithmic}
    \end{algorithm}

To demonstrate the broader applicability of \ours, we extended our approach to mathematical reasoning tasks. Each math problem serves as a distinct environment, where generated tokens represent states, the full sequence forms the context-state, and token selection constitutes actions. The reward signal is derived from binary correctness or step-by-step reasoning accuracy. Our method showed consistent performance improvements over supervised learning baselines in these tasks.

A key challenge emerged during preliminary experiments is that when learning a Q-function from offline data alone, and only use the Q-function to generate token, the model frequently produced degenerate outputs (e.g., generating repetitive tokens). We hypothesize that this is due to semantic distortion caused by the Bellman update. Specifically, the output in language models consists of token probabilities, and applying cumulative reward-based reasoning to these probabilities disrupts their linguistic coherence, as it is unconventional to treat them as a direct sum of the reward and the probability of the next token.

To address this challenge, we adopt an actor-critic framework in which the Q-function learned by \ours is not used directly for token generation but to refine the policy. In the offline setting, we modified the Proximal Policy Optimization (PPO) algorithm, drawing inspiration from Behavior Proximal Policy Optimization (BPPO)\cite{Zhuang2023BehaviorPP}, and incorporated an iterative refinement process, which we name \textit{QTPT with Policy Optimization}(QTPO). The details are provided in Algorithm~\ref{algo:bppo}.
where
\begin{equation}
\begin{aligned}
L_k(\pi) = \mathbb{E}_{s \sim \rho_\mathcal{D}(.),\, a \sim \pi_k(.||s)} \Bigg[ 
\min\Bigg(\frac{\pi(a|s)}{\pi_k(a|s)} \hat{A}(s,a), \text{clip}\left(\frac{\pi(a|s)}{\pi_k(a|s)},\, 1-\epsilon,\, 1+\epsilon \right)\hat{A}(s,a) 
\Bigg) \Bigg]
\end{aligned}
\end{equation}
and the objective $J(\pi)$ is the total accuracy on the offline dataset.

\paragraph{Data Preparation} We evaluated our method on the GSM8K dataset \cite{cobbe2021gsm8k}, utilizing reward labels and training data from Math-Shepherd \cite{wang2024mathepherd}. The experimental setup involved two distinct data categories: 
\begin{itemize}[leftmargin=0.2in, nolistsep]
    \item \textbf{Expert Data}, consisting of samples with correct final answers
    \item \textbf{Suboptimal Data}, comprising samples with incorrect answers.
\end{itemize}

Each category contained 50,000 samples. 
\paragraph{Implementation and Comparison}
We initialized both the policy and Q-function models using the Qwen-2.5-1.5B-instruct model. These models were then trained using Low-Rank Adaptation (LoRA) with the following hyperparameters: rank=8, lora\_alpha=32, and dropout=0.1.
We evaluated performance on the \textbf{unseen} GSM8k test set. This evaluation compared QTPT against direct behavior cloning, using models trained on various mixtures of expert and imperfect data.

\begin{figure}[htbp]
    \centering
    \includegraphics[width=0.72\linewidth]{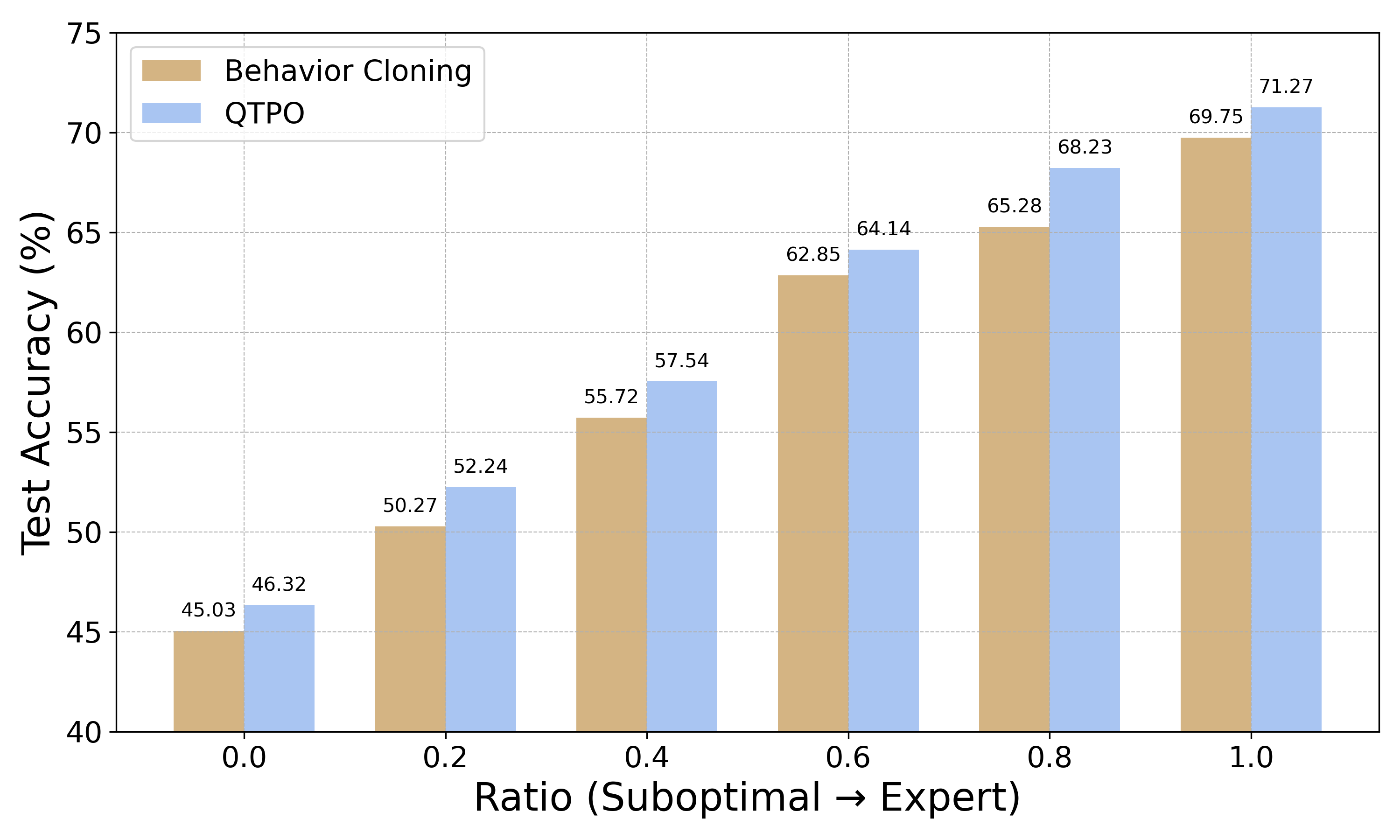}
    \caption{Accuracy (\%) on the GSM8K test set across different mixtures of suboptimal and expert data.}
    \label{fig:gsm8k}
\end{figure}

\paragraph{Results}The results of these comparisons are presented in the figure ~\ref{fig:gsm8k}. QTPT consistently outperformed behavior cloning across all settings, achieving an average accuracy improvement of \textbf{1.81\%}. This consistent enhancement is particularly noteworthy as our method relies exclusively on a static offline dataset.

\subsection{D4RL Extension and Controlled Ablations}
\label{app:post_submission}

The D4RL extension uses Franka Kitchen and AntMaze \citep{fu2020d4rl}. To keep the main text and appendix consistent, the available active-stage summaries are $13.23\pm4.03$ for \texttt{kitchen-partial-v0}, $14.62\pm3.36$ for \texttt{kitchen-mixed-v0}, and $35.94\pm13.52$ for \texttt{antmaze-umaze-v2}. The first two are normalized scores; the last is success percentage. Active-stage means condition on nonzero periodic evaluation checkpoints and therefore exclude failures. These are descriptive 3--4-seed matched-Transformer results, not standard final-policy benchmark scores or evidence of statistically significant superiority. No additional D4RL run is included in this revision.

Table~\ref{tab:post_submission_ablation} compares the submitted TD, Monte Carlo, and no-context variants. Its intervals are over evaluation episodes, not independent training seeds. The context and OOD diagnostics below address distinct mechanisms under separately stated protocols; they do not turn episode-level uncertainty into seed-level significance.

\subsection{Comparison with Other Baselines}
\label{app:baseline_compare}

To provide a comprehensive and principled evaluation, we compare QTPT against representative methods from two major paradigms of offline decision learning, distinguished by their learning mechanisms rather than surface algorithmic forms.

\textbf{(1) Bellman-based value-function-centric offline RL.}  
This class is represented by Conservative Q-Learning (CQL, \cite{kumar2020conservativeqlearningofflinereinforcement}), which explicitly relies on Bellman backups and value function optimization with conservative regularization to address distributional shift and out-of-distribution actions. Policy extraction is performed through value maximization over the learned Q-function.

\textbf{(2) Value-guided policy learning.}  
IQL \citep{kostrikov2021offlinereinforcementlearningimplicit} trains its critic with an in-sample value backup and extracts a policy by advantage-weighted imitation. Q-SFT \citep{hong2024qsftqlearninglanguagemodels} and QDT \citep{yamagata2023q} provide value-guided supervision for sequence-model policies. They differ in their critic construction and targets; the presence of a supervised policy loss does not make IQL Bellman-free.

The submitted linear-bandit comparison uses identical offline data and no test-time parameter updates. The Q-guided policy variants share a pretrained critic in this implementation. Consequently, these labels identify matched value-guided variants, not unchanged implementations of every original algorithm; in particular, the shared-critic IQL-style policy is not a full independent IQL reproduction. We retain the figure as that matched comparison, not as a universal ranking of the original offline-RL algorithms.

As shown in Figure \ref{fig:somebaseline}, QTPT achieves lower cumulative regret than the matched baselines. 
On the expert dataset, baselines like QDT and CQL achieve moderate performance but still incur higher cumulative regret compared to QTPT. The performance gap becomes more pronounced on the random dataset. In this setting, QTPT maintains a low-regret profile, demonstrating robustness to suboptimal data. 

The figure is descriptive evidence about the matched implementations. In the linear bandit, past action--reward observations provide information about the hidden task parameter, so it does test task inference; it does not isolate every aspect of that inference. Appendix~\ref{app:context_diagnostics} separately tests the role of task-relevant history.

\begin{figure}[htbp]
    \centering
    \includegraphics[width=0.6\linewidth]{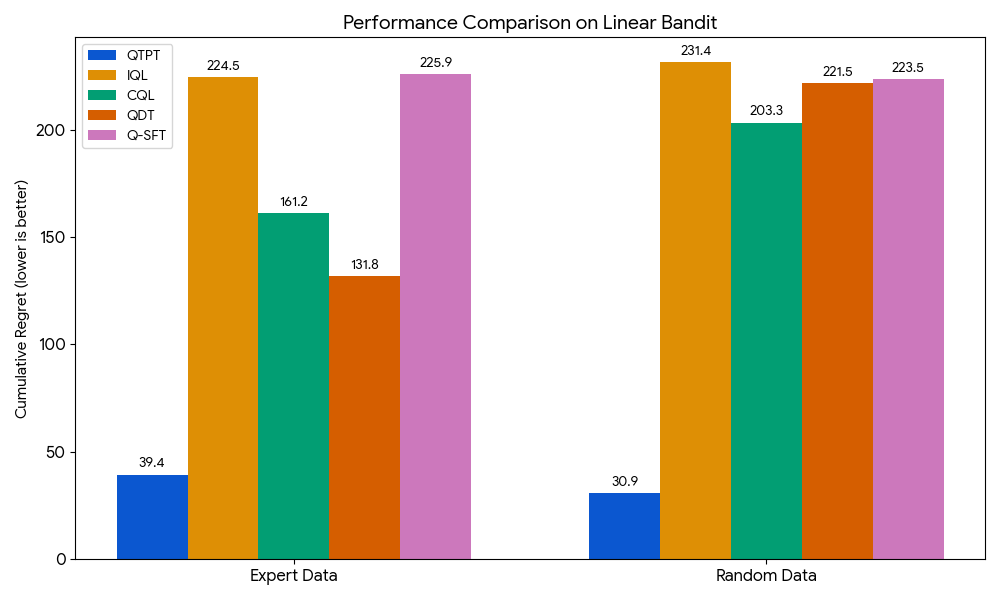}
    \caption{Performance comparison against strong baselines on the Stochastic Linear Bandit task}
    \label{fig:somebaseline}
\end{figure}


\input{sections/supplementary_rebuttal}

%% file: sections/supplementary_rebuttal.tex
\section{Supplementary Evaluation and Clarifications}
\label{app:supplementary}

\subsection{Target Construction, Stochasticity, and Cost}
\label{app:target_details}

\paragraph{Idealized objective and practical backup.}
The formal analysis uses the undiscounted expected Bellman operator. The submitted implementation uses online values for action weights and a separate target network for evaluation \citep{rev_vanhasselt2016double}:
\[
p_\theta(a\mid x_{t+1})=
\frac{\exp(Q_\theta(x_{t+1},a)/\tau)}{\sum_b\exp(Q_\theta(x_{t+1},b)/\tau)},
\qquad
y_t=r_t+\gamma\sum_a p_\theta(a\mid x_{t+1})Q_{\bar\theta}(x_{t+1},a).
\]
Here $x_t=D_{t-1}\cup\{s_t\}$, $\tau=1$, and target parameters are copied from the online model every 100 optimizer steps. The support-controlled diagnostic below uses $\gamma=0.99$. Gradients do not pass through $y_t$. This Double-target/softmax combination stabilizes target construction but supplies no conservative lower-bound guarantee for unsupported actions.

\paragraph{Stochastic targets.}
For fixed next-step values, write $Y_t=R_t+\max_{a'}Q_{t+1}(X_{t+1},a')$ and $m_t(x,a)=\mathbb E[Y_t\mid X_t=x,A_t=a]$. Then
\[
\mathbb E[(f(X_t,A_t)-Y_t)^2]
=\mathbb E[(f(X_t,A_t)-m_t(X_t,A_t))^2]
+\mathbb E[\operatorname{Var}(Y_t\mid X_t,A_t)].
\]
For this fixed-target regression, the last term does not depend on $f$. Stochastic rewards and transitions change target variance, not the population regression minimizer. The identity does not assert that the variance is constant when the target network changes between updates.

\paragraph{Memory and computation.}
Time-indexed Q-functions share one Transformer. Targets are constructed per batch, not stored for every offline trajectory. A model with $P$ parameters requires one additional $O(P)$ target copy, one no-gradient target forward pass per batch, and a hard copy every 100 updates. The target carries neither gradients nor optimizer states and is absent at deployment. For the custom 2,858,763-parameter bandit model, the FP32 copy is approximately 10.91 MiB. Reduced-precision storage and sharding are possible scaling options, not additional experiments reported here.

\paragraph{Attention versus backup sensitivity.}
For fixed $q\in\mathbb R^{|\mathcal A|}$ and $p=\operatorname{softmax}(q/\tau)$,
\[
0\leq\max_a q_a-\sum_a p_aq_a\leq\tau\log|\mathcal A|.
\]
For fixed $p$, replacing online by target-network evaluation changes the weighted value by at most $\|Q_\theta(x,\cdot)-Q_{\bar\theta}(x,\cdot)\|_\infty$. These are one-step target comparisons, not guarantees about a complete nonlinear training trajectory. Attention normalization instead changes the history representation. For identical positive attention logits $M$ and value vectors $v$, length-normalized ReLU aggregation gives $Mv$, whereas softmax gives $v$. This illustrates a scale difference not controlled by the backup bound. It offers a plausible representation-level explanation for the original attention ablation, without proving a unique cause of its performance collapse. Attention choices should be validated separately from target-construction hyperparameters.

\subsection{Unsupported-Action Overestimation}
\label{app:ood}

\paragraph{Controlled support and training protocol.}
We use a stochastic linear bandit with ten five-dimensional action vectors and $T=200$. Rewards are $r_t=a_t^\top w^\star+\epsilon_t$, $\epsilon_t\sim\mathcal N(0,1.5^2)$. For each task, seven actions form $S$ and the other three form $U$. Behavior samples uniformly from $S$; every supported action appears at least once, and no action in $U$ is executed in the offline trajectory. All ten vectors remain visible. The split is randomized across tasks, testing behavior-support OOD rather than a fixed action-index rule.

Each paired seed uses 10,000 offline tasks, an 8,000/2,000 train/validation split, and 1,000 disjoint test tasks. The model has 8 layers, width 256, 4 heads, dropout 0.2, batch size 128, learning rate $5\times10^{-6}$, and weight decay $10^{-3}$. Training is capped at 100 epochs, with patience 20. The minimum held-out validation objective selects the checkpoint, never test performance.

The $2\times2$ design compares QTPT with Double-target/softmax backup against single-network hard-max QTPT, with or without the discrete CQL(H) penalty \citep{kumar2020conservativeqlearningofflinereinforcement}:
\[
L_{\mathrm{TD}}+\alpha L_{\mathrm{CQL}},\qquad
L_{\mathrm{CQL}}=\mathbb E\left[\log\sum_a \exp Q_\theta(x_t,a)-Q_\theta(x_t,a_t^{\mathrm{data}})\right],
\qquad \alpha=0.03.
\]
QTPT + CQL penalty adds this term to QTPT; it is not an independent MLP CQL baseline. All four cells use paired data and evaluation tasks.

\paragraph{Exact terminal calibration.}
After $T-1$ support-only observations, no future reward remains beyond the final action, so the task-specific oracle value is exactly $\mu(a)=a^\top w^\star$. Define
\[
\Delta_{\mathrm{OOD}}=
\frac1{|U|}\sum_{a\in U}(\hat Q_T(x_T,a)-\mu(a))
-\frac1{|S|}\sum_{a\in S}(\hat Q_T(x_T,a)-\mu(a)),
\]
\[
G_{\mathrm{OOD}}=
\left[\max_{a\in U}\hat Q_T(x_T,a)-\max_{a\in S}\hat Q_T(x_T,a)\right]
-\left[\max_{a\in U}\mu(a)-\max_{a\in S}\mu(a)\right].
\]
For greedy action $a_g$, harmful selection means $a_g\in U$ and $\mu(a_g)<\max_{a\in S}\mu(a)$. OOD-attributable regret is this event's indicator times $\max_{a\in S}\mu(a)-\mu(a_g)$, averaged over all tasks. Separately, full-rollout regret is $\sum_{t=1}^{T}[\max_a\mu(a)-\mu(a_t)]$ when all actions are available from step one. Correctly choosing an optimal unsupported action is not a failure.

\begin{table}[htbp]
\centering\small
\setlength{\tabcolsep}{3pt}
\caption{Support-controlled diagnostic. Mean $\pm$ standard deviation across five paired training/data seeds. Lower is better; harmful rate is a fraction.}
\label{tab:ood}
\begin{tabular}{lrrrr}
\toprule
Method & $G_{\mathrm{OOD}}\downarrow$ & Harmful rate $\downarrow$ & OOD regret $\downarrow$ & Rollout regret $\downarrow$\\
\midrule
QTPT & $0.2569\pm0.1021$ & $0.2064\pm0.0362$ & $0.1828\pm0.0754$ & $153.15\pm58.99$\\
Naive hard-max QTPT & $0.3351\pm0.0387$ & $0.2588\pm0.0090$ & $0.2856\pm0.0157$ & $211.67\pm7.81$\\
QTPT + CQL penalty & $0.2582\pm0.1021$ & $0.2144\pm0.0390$ & $0.1903\pm0.0675$ & $139.86\pm55.74$\\
Naive hard-max + CQL penalty & $0.3330\pm0.0386$ & $0.2514\pm0.0084$ & $0.2840\pm0.0176$ & $205.40\pm10.36$\\
\bottomrule
\end{tabular}
\end{table}

\paragraph{Interpretation.}
Task averages are computed within each seed before seed-level means, sample standard deviations, and Student-$t$ intervals. The offset $\Delta_{\mathrm{OOD}}$ is near zero (between $-0.0066$ and $-0.0033$), but QTPT's max-gap, harmful selection, and OOD regret have 95\% intervals strictly above zero. This is policy-relevant max-selection error, not uniform inflation of every unsupported value.

Naive hard-max minus QTPT increases harmful selection by 0.0524 and OOD regret by 0.1028; both paired 95\% intervals are positive. The combined stabilization reduces harm, without isolating each stabilizer's contribution. In either backup branch, paired intervals for CQL-induced changes in all four metrics contain zero. This does not establish reliable incremental improvement, nor prove equivalence.

Some unsupported actions are genuinely optimal and inferable through shared task structure. Support is context- and task-conditional. CQL is evaluated per context, not as a global blacklist, but its all-action penalty does not explicitly distinguish harmful unsupported actions from beneficial generalization. This is a plausible explanation, not a demonstrated causal mechanism.

\begin{table}[htbp]
\centering
\caption{Darkroom diagnostic: five-seed held-out reward, mean $\pm$ standard deviation, validation-best checkpoints. This protocol is separate from Table~\ref{tab:post_submission_ablation}.}
\label{tab:darkroom_cql}
\begin{tabular}{crr}
\toprule
Expert mix & QTPT reward $\uparrow$ & QTPT + CQL penalty reward $\uparrow$\\
\midrule
0.0 & $8.3053\pm8.7520$ & $9.3333\pm8.7971$\\
0.5 & $15.7753\pm10.7997$ & $14.0393\pm7.8813$\\
1.0 & $6.2247\pm1.9315$ & $8.9653\pm4.2767$\\
\bottomrule
\end{tabular}
\end{table}
These runs use $d=10$, $T=100$, five actions, 10,000 training and 100 test environments, and 4-layer, width-128, 4-head models, with 20,000 updates, batch size 128, and learning rate $3\times10^{-4}$. The mean difference changes direction across regimes and variability is substantial. These are performance observations, not Q-calibration measurements. Understanding conservatism in context-conditioned generalization remains a promising research direction.

\subsection{Meta-RL Comparisons}
\label{app:meta_rl}

RL$^2$ \citep{rev_duan2016rl2}, PEARL \citep{rev_rakelly2019pearl}, and QTPT all adapt behavior using task evidence, but differ in pretraining data acquisition and representation. Matching a data budget does not make online collection identical to offline learning.

\paragraph{RL$^2$: transition-budget comparison.}
A five-arm, ten-pull Bernoulli bandit has unknown arm probabilities per task. QTPT receives a fixed random-behavior corpus; RL$^2$ collects fresh on-policy interactions. One pull is one transition, so 1M/5M transitions correspond to 100,000/500,000 trials. Evaluation uses 1,000 common tasks and common outcome tapes. Pseudo-regret is $\sum_{t=1}^{10}(p^\star-p_{a_t})$.
Bayes-normalized progress is $100(R_{\mathrm{random}}-R)/(R_{\mathrm{random}}-R_{\mathrm{Bayes}})$, where $R_{\mathrm{random}}=3.333333$ and $R_{\mathrm{Bayes}}=1.673345$: random is 0\% and the finite-horizon Bayes-optimal reference is 100\%.

\begin{table}[htbp]
\centering
\caption{Bernoulli bandit, mean $\pm$ standard deviation over three independent training seeds.}
\label{tab:rl2}
\begin{tabular}{llrr}
\toprule
Method / acquisition & Transitions & Pseudo-regret $\downarrow$ & Bayes progress $\uparrow$\\
\midrule
QTPT / fixed random corpus & 1M & $1.915\pm0.085$ & $85.4\%\pm5.1\%$\\
RL$^2$ / on-policy trials & 1M & $2.356\pm0.112$ & $58.9\%\pm6.7\%$\\
QTPT / fixed random corpus & 5M & $1.784\pm0.041$ & $93.3\%\pm2.5\%$\\
RL$^2$ / on-policy trials & 5M & $1.780\pm0.056$ & $93.6\%\pm3.4\%$\\
\bottomrule
\end{tabular}
\end{table}
QTPT has lower mean pseudo-regret at 1M transitions; at 5M the means are almost identical. This supports a budget-dependent comparison, not universal superiority of offline acquisition.

\paragraph{PEARL: common fixed-offline replay.}
A separate continuous ten-pull meta-bandit has latent target $M$ and expected reward $1-(a-M)^2$. Both methods receive the same 80,000 uniform-random transitions and 10,000 updates. PEARL retains probabilistic context inference, posterior sampling, and SAC losses, but replaces active collection with this replay corpus. Evaluation uses seven training seeds, 1,000 common tasks, and two common noise tapes. Checkpoints use the fixed training endpoint, not test-set selection.

\begin{table}[htbp]
\centering
\caption{Fixed-offline continuous meta-bandit. Mean $\pm$ standard deviation over seven training seeds; pseudo-regret is $\sum_{t=1}^{10}(a_t-M)^2$.}
\label{tab:pearl}
\begin{tabular}{lrr}
\toprule
Method & Pseudo-regret $\downarrow$ & Mean training time (s)\\
\midrule
QTPT & $0.969\pm0.140$ & 842.7\\
PEARL (fixed-offline adaptation) & $2.472\pm0.074$ & 1062.7\\
\bottomrule
\end{tabular}
\end{table}
The paired difference is $-1.503\pm0.173$; an exploratory bootstrap 95\% interval is $[-1.621,-1.383]$, and the exact two-sided sign test gives $p=0.0156$. This supports an advantage under this fixed weak-corpus protocol, not reproduction of PEARL's original benchmark. The two meta-RL tables have different task/action spaces and must not be numerically pooled.

\subsection{Backbone and Mixture Robustness}
\label{app:backbone}

This supplementary bandit study uses small randomly initialized GPT-2-, Llama-, and Qwen2-style decoders, comparing QTPT and SPT within each matched backbone rather than fine-tuning pretrained language models. Its independent seeds and training budget differ from the original bandit setting, so the absolute GPT-2 values should not be equated with Figure~\ref{fig:mix}.

The supplementary configuration uses 3 layers, width 128, 4 heads, dropout 0.1, 10,000 training trajectories, 1,000 validation trajectories, and 1,000 test tasks. Training uses 3,000 updates, batch size 64, AdamW with learning rate $3\times10^{-4}$ and weight decay $10^{-4}$. Validation losses are logged, but the table evaluates the fixed 3,000-update endpoint. Its target-network update is Polyak averaging with coefficient 0.01; this is a matched-backbone diagnostic, not an unchanged rerun of the submitted 8-layer setup.

\begin{table}[htbp]
\centering
\caption{Cumulative gap at $T=200$, mean $\pm$ standard deviation over five training seeds. Mix is expert-trajectory fraction. Lower is better.}
\label{tab:backbone}
\begin{tabular}{lcrr}
\toprule
Backbone & Expert mix & QTPT gap $\downarrow$ & SPT gap $\downarrow$\\
\midrule
GPT-2 & 0.0 & $33.52\pm3.85$ & $222.20\pm2.35$\\
GPT-2 & 0.5 & $34.42\pm2.10$ & $133.41\pm13.97$\\
GPT-2 & 1.0 & $35.53\pm3.19$ & $46.92\pm1.36$\\
Llama & 0.0 & $33.18\pm1.44$ & $221.93\pm2.19$\\
Llama & 0.5 & $40.59\pm2.49$ & $141.29\pm11.15$\\
Llama & 1.0 & $38.11\pm2.07$ & $39.88\pm0.93$\\
Qwen2 & 0.0 & $33.88\pm1.34$ & $222.09\pm1.94$\\
Qwen2 & 0.5 & $39.35\pm0.70$ & $141.08\pm11.75$\\
Qwen2 & 1.0 & $36.54\pm2.17$ & $39.98\pm1.25$\\
\bottomrule
\end{tabular}
\end{table}
QTPT has lower mean gap in all nine comparisons, with a large random-data margin and a narrower expert-data margin. Similar random-data SPT means are consistent with imitation of nearly uniform behavior, but numerical proximity alone does not prove exact convergence to a random policy. These tests cover three decoder choices, not all model scales.

\begin{table}[htbp]
\centering\small
\caption{Values used for Figure~\ref{fig:mix}; mean $\pm$ standard deviation over five training seeds per mixture.}
\label{tab:mix_sd}
\begin{tabular}{crr}
\toprule
Expert mix & QTPT gap $\downarrow$ & SPT gap $\downarrow$\\
\midrule
0.00 & $31.97\pm1.45$ & $220.85\pm1.24$\\
0.10 & $31.58\pm0.41$ & $210.40\pm1.24$\\
0.20 & $32.57\pm0.66$ & $196.89\pm11.14$\\
0.25 & $33.92\pm1.25$ & $173.79\pm9.33$\\
0.50 & $36.24\pm1.57$ & $139.98\pm11.59$\\
0.75 & $36.43\pm1.75$ & $78.30\pm2.56$\\
0.80 & $35.64\pm0.90$ & $77.94\pm6.66$\\
0.90 & $37.93\pm2.44$ & $59.95\pm11.76$\\
1.00 & $38.14\pm2.02$ & $46.32\pm1.83$\\
\bottomrule
\end{tabular}
\end{table}

\subsection{Task-Coherent Context and Extra Episodes}
\label{app:context_diagnostics}

These synthetic diagnostics isolate context information rather than rerun the full sequential Bellman benchmark. A hidden goal $g\in[-0.9,0.9]$ determines reward $r(a,g)=1-(a-g)^2+\epsilon$, with noise standard deviation 0.05 and eleven grid actions in $[-1,1]$. Behavior selects the nearest-goal action with probability 0.55 and a uniform random action otherwise. Each query has two observed transitions; support-augmented inputs add 24 transitions. The true goal is visible only in the oracle condition.

Both studies use five training seeds, 4,000 updates, batch size 256, learning rate $3\times10^{-4}$, and a two-layer, width-96, four-head Transformer. The objective regresses queried-action reward; at this terminal query the value equals expected immediate reward. All context observations precede the query, but the diagnostic encoder uses bidirectional attention over that observed context, not the submitted causal decoder. Training resamples tasks. Every 500 updates, checkpoints are evaluated on 512 goal-grid points with eight context resamples, selecting minimum diagnostic regret. These exploratory results use the same evaluation for selection and reporting, not a separate final hold-out.

\begin{table}[htbp]
\centering
\caption{Diagnostic A: extra-context source, five-seed mean $\pm$ standard deviation. Regret is true reward loss against the best grid action; success is the fraction choosing it.}
\label{tab:context_source}
\begin{tabular}{lrrr}
\toprule
Input protocol & Regret $\downarrow$ & Success $\uparrow$ & Action flip rate\\
\midrule
Current-episode history & $0.0033\pm0.0003$ & $0.8808\pm0.0097$ & $0.2036\pm0.0120$\\
History + unrelated-task support & $0.0037\pm0.0002$ & $0.8694\pm0.0096$ & $0.2220\pm0.0074$\\
History + same-task support & $0.0002\pm0.0000$ & $0.9638\pm0.0069$ & $0.0475\pm0.0099$\\
\bottomrule
\end{tabular}
\end{table}

\begin{table}[htbp]
\centering
\caption{Diagnostic B: task relevance versus token count, five-seed mean $\pm$ standard deviation. Displayed $0.0000$ values are rounded.}
\label{tab:context_relevance}
\begin{tabular}{lrrr}
\toprule
Input protocol & Regret $\downarrow$ & Success $\uparrow$ & Action flip rate\\
\midrule
No history & $0.2386\pm0.0000$ & $0.1172\pm0.0000$ & $0.0000\pm0.0000$\\
Current-episode history & $0.0033\pm0.0003$ & $0.8847\pm0.0100$ & $0.1978\pm0.0203$\\
Unrelated history (same length) & $0.2386\pm0.0000$ & $0.1172\pm0.0000$ & $0.0000\pm0.0000$\\
History + same-task support & $0.0002\pm0.0001$ & $0.9536\pm0.0112$ & $0.0502\pm0.0052$\\
Oracle goal token & $0.0001\pm0.0001$ & $0.9547\pm0.0110$ & $0.0000\pm0.0000$\\
\bottomrule
\end{tabular}
\end{table}
Same-task support improves the diagnostic, whereas unrelated support does not. Removing history or replacing it with equally long unrelated history loses task information. Action flip rate measures sensitivity to context resampling; a constant bad policy also has zero flips, so it is not unconditionally lower-is-better. The results support the task-coherence explanation, not a theorem that arbitrary extra context must hurt or a long-horizon policy advantage over every baseline.

\paragraph{Reporting conventions.}
Each supplementary table specifies seed count and uncertainty unit. Seed standard deviations, episode confidence intervals, paired seed tests, and conditional checkpoint aggregates are distinct. Different selection rules and information protocols are not pooled.

%% file: checklist.tex
\section*{NeurIPS Paper Checklist}

\begin{enumerate}

\item {\bf Claims}
    \item[] Question: Do the main claims made in the abstract and introduction accurately reflect the paper's contributions and scope?
    \item[] Answer: \answerYes{}.
    \item[] Justification: The abstract and introduction state the main theoretical and empirical claims, including the scope of the D4RL extension and controlled ablations.
    \item[] Guidelines:
    \begin{itemize}
        \item The answer \answerNA{} means that the abstract and introduction do not include the claims made in the paper.
        \item The abstract and/or introduction should clearly state the claims made, including the contributions made in the paper and important assumptions and limitations. A \answerNo{} or \answerNA{} answer to this question will not be perceived well by the reviewers. 
        \item The claims made should match theoretical and experimental results, and reflect how much the results can be expected to generalize to other settings. 
        \item It is fine to include aspirational goals as motivation as long as it is clear that these goals are not attained by the paper. 
    \end{itemize}

\item {\bf Limitations}
    \item[] Question: Does the paper discuss the limitations of the work performed by the authors?
    \item[] Answer: \answerYes{}.
    \item[] Justification: The conclusion discusses remaining limitations and future work, including broader benchmarks, alternative Bellman backup designs, OOD overestimation, and distribution shift.
    \item[] Guidelines:
    \begin{itemize}
        \item The answer \answerNA{} means that the paper has no limitation while the answer \answerNo{} means that the paper has limitations, but those are not discussed in the paper. 
        \item The authors are encouraged to create a separate ``Limitations'' section in their paper.
        \item The paper should point out any strong assumptions and how robust the results are to violations of these assumptions (e.g., independence assumptions, noiseless settings, model well-specification, asymptotic approximations only holding locally). The authors should reflect on how these assumptions might be violated in practice and what the implications would be.
        \item The authors should reflect on the scope of the claims made, e.g., if the approach was only tested on a few datasets or with a few runs. In general, empirical results often depend on implicit assumptions, which should be articulated.
        \item The authors should reflect on the factors that influence the performance of the approach. For example, a facial recognition algorithm may perform poorly when image resolution is low or images are taken in low lighting. Or a speech-to-text system might not be used reliably to provide closed captions for online lectures because it fails to handle technical jargon.
        \item The authors should discuss the computational efficiency of the proposed algorithms and how they scale with dataset size.
        \item If applicable, the authors should discuss possible limitations of their approach to address problems of privacy and fairness.
        \item While the authors might fear that complete honesty about limitations might be used by reviewers as grounds for rejection, a worse outcome might be that reviewers discover limitations that aren't acknowledged in the paper. The authors should use their best judgment and recognize that individual actions in favor of transparency play an important role in developing norms that preserve the integrity of the community. Reviewers will be specifically instructed to not penalize honesty concerning limitations.
    \end{itemize}

\item {\bf Theory assumptions and proofs}
    \item[] Question: For each theoretical result, does the paper provide the full set of assumptions and a complete (and correct) proof?
    \item[] Answer: \answerYes{}.
    \item[] Justification: The theoretical assumptions and statements are provided in the analysis section, with additional derivations and proofs included in the appendix.
    \item[] Guidelines:
    \begin{itemize}
        \item The answer \answerNA{} means that the paper does not include theoretical results. 
        \item All the theorems, formulas, and proofs in the paper should be numbered and cross-referenced.
        \item All assumptions should be clearly stated or referenced in the statement of any theorems.
        \item The proofs can either appear in the main paper or the supplemental material, but if they appear in the supplemental material, the authors are encouraged to provide a short proof sketch to provide intuition. 
        \item Inversely, any informal proof provided in the core of the paper should be complemented by formal proofs provided in appendix or supplemental material.
        \item Theorems and Lemmas that the proof relies upon should be properly referenced. 
    \end{itemize}

    \item {\bf Experimental result reproducibility}
    \item[] Question: Does the paper fully disclose all the information needed to reproduce the main experimental results of the paper to the extent that it affects the main claims and/or conclusions of the paper (regardless of whether the code and data are provided or not)?
    \item[] Answer: \answerYes{}.
    \item[] Justification: The paper describes the benchmark tasks, data-generation protocols, model architecture, optimizer, and major hyperparameters in the main text and appendix.
    \item[] Guidelines:
    \begin{itemize}
        \item The answer \answerNA{} means that the paper does not include experiments.
        \item If the paper includes experiments, a \answerNo{} answer to this question will not be perceived well by the reviewers: Making the paper reproducible is important, regardless of whether the code and data are provided or not.
        \item If the contribution is a dataset and\slash or model, the authors should describe the steps taken to make their results reproducible or verifiable. 
        \item Depending on the contribution, reproducibility can be accomplished in various ways. For example, if the contribution is a novel architecture, describing the architecture fully might suffice, or if the contribution is a specific model and empirical evaluation, it may be necessary to either make it possible for others to replicate the model with the same dataset, or provide access to the model. In general. releasing code and data is often one good way to accomplish this, but reproducibility can also be provided via detailed instructions for how to replicate the results, access to a hosted model (e.g., in the case of a large language model), releasing of a model checkpoint, or other means that are appropriate to the research performed.
        \item While NeurIPS does not require releasing code, the conference does require all submissions to provide some reasonable avenue for reproducibility, which may depend on the nature of the contribution. For example
        \begin{enumerate}
            \item If the contribution is primarily a new algorithm, the paper should make it clear how to reproduce that algorithm.
            \item If the contribution is primarily a new model architecture, the paper should describe the architecture clearly and fully.
            \item If the contribution is a new model (e.g., a large language model), then there should either be a way to access this model for reproducing the results or a way to reproduce the model (e.g., with an open-source dataset or instructions for how to construct the dataset).
            \item We recognize that reproducibility may be tricky in some cases, in which case authors are welcome to describe the particular way they provide for reproducibility. In the case of closed-source models, it may be that access to the model is limited in some way (e.g., to registered users), but it should be possible for other researchers to have some path to reproducing or verifying the results.
        \end{enumerate}
    \end{itemize}

\item {\bf Open access to data and code}
    \item[] Question: Does the paper provide open access to the data and code, with sufficient instructions to faithfully reproduce the main experimental results, as described in supplemental material?
    \item[] Answer: \answerNo{}.
    \item[] Justification: The experiments use standard public benchmarks and described data-generation procedures, but an anonymized public code release is not provided at submission time.
    \item[] Guidelines:
    \begin{itemize}
        \item The answer \answerNA{} means that paper does not include experiments requiring code.
        \item Please see the NeurIPS code and data submission guidelines (\url{https://neurips.cc/public/guides/CodeSubmissionPolicy}) for more details.
        \item While we encourage the release of code and data, we understand that this might not be possible, so \answerNo{} is an acceptable answer. Papers cannot be rejected simply for not including code, unless this is central to the contribution (e.g., for a new open-source benchmark).
        \item The instructions should contain the exact command and environment needed to run to reproduce the results. See the NeurIPS code and data submission guidelines (\url{https://neurips.cc/public/guides/CodeSubmissionPolicy}) for more details.
        \item The authors should provide instructions on data access and preparation, including how to access the raw data, preprocessed data, intermediate data, and generated data, etc.
        \item The authors should provide scripts to reproduce all experimental results for the new proposed method and baselines. If only a subset of experiments are reproducible, they should state which ones are omitted from the script and why.
        \item At submission time, to preserve anonymity, the authors should release anonymized versions (if applicable).
        \item Providing as much information as possible in supplemental material (appended to the paper) is recommended, but including URLs to data and code is permitted.
    \end{itemize}

\item {\bf Experimental setting/details}
    \item[] Question: Does the paper specify all the training and test details (e.g., data splits, hyperparameters, how they were chosen, type of optimizer) necessary to understand the results?
    \item[] Answer: \answerYes{}.
    \item[] Justification: The appendix specifies the GPT-2-style Transformer configuration, optimizer, learning-rate schedule, batch size, training epochs, and task-specific experimental details.
    \item[] Guidelines:
    \begin{itemize}
        \item The answer \answerNA{} means that the paper does not include experiments.
        \item The experimental setting should be presented in the core of the paper to a level of detail that is necessary to appreciate the results and make sense of them.
        \item The full details can be provided either with the code, in appendix, or as supplemental material.
    \end{itemize}

\item {\bf Experiment statistical significance}
    \item[] Question: Does the paper report error bars suitably and correctly defined or other appropriate information about the statistical significance of the experiments?
    \item[] Answer: \answerNo{}.
    \item[] Justification: Simulation-based results include standard-deviation bands where applicable, but some larger benchmark summaries, including the D4RL extension, are reported as point estimates due to computational cost.
    \item[] Guidelines:
    \begin{itemize}
        \item The answer \answerNA{} means that the paper does not include experiments.
        \item The authors should answer \answerYes{} if the results are accompanied by error bars, confidence intervals, or statistical significance tests, at least for the experiments that support the main claims of the paper.
        \item The factors of variability that the error bars are capturing should be clearly stated (for example, train/test split, initialization, random drawing of some parameter, or overall run with given experimental conditions).
        \item The method for calculating the error bars should be explained (closed form formula, call to a library function, bootstrap, etc.)
        \item The assumptions made should be given (e.g., Normally distributed errors).
        \item It should be clear whether the error bar is the standard deviation or the standard error of the mean.
        \item It is OK to report 1-sigma error bars, but one should state it. The authors should preferably report a 2-sigma error bar than state that they have a 96\% CI, if the hypothesis of Normality of errors is not verified.
        \item For asymmetric distributions, the authors should be careful not to show in tables or figures symmetric error bars that would yield results that are out of range (e.g., negative error rates).
        \item If error bars are reported in tables or plots, the authors should explain in the text how they were calculated and reference the corresponding figures or tables in the text.
    \end{itemize}

\item {\bf Experiments compute resources}
    \item[] Question: For each experiment, does the paper provide sufficient information on the computer resources (type of compute workers, memory, time of execution) needed to reproduce the experiments?
    \item[] Answer: \answerNo{}.
    \item[] Justification: The paper provides model and optimization details, but it does not yet fully specify hardware type, memory, runtime, or total compute for each experiment.
    \item[] Guidelines:
    \begin{itemize}
        \item The answer \answerNA{} means that the paper does not include experiments.
        \item The paper should indicate the type of compute workers CPU or GPU, internal cluster, or cloud provider, including relevant memory and storage.
        \item The paper should provide the amount of compute required for each of the individual experimental runs as well as estimate the total compute. 
        \item The paper should disclose whether the full research project required more compute than the experiments reported in the paper (e.g., preliminary or failed experiments that didn't make it into the paper). 
    \end{itemize}
    
\item {\bf Code of ethics}
    \item[] Question: Does the research conducted in the paper conform, in every respect, with the NeurIPS Code of Ethics \url{https://neurips.cc/public/EthicsGuidelines}?
    \item[] Answer: \answerYes{}.
    \item[] Justification: The work uses standard reinforcement-learning and reasoning benchmarks and does not involve human subjects, private data, or deceptive data collection.
    \item[] Guidelines:
    \begin{itemize}
        \item The answer \answerNA{} means that the authors have not reviewed the NeurIPS Code of Ethics.
        \item If the authors answer \answerNo, they should explain the special circumstances that require a deviation from the Code of Ethics.
        \item The authors should make sure to preserve anonymity (e.g., if there is a special consideration due to laws or regulations in their jurisdiction).
    \end{itemize}

\item {\bf Broader impacts}
    \item[] Question: Does the paper discuss both potential positive societal impacts and negative societal impacts of the work performed?
    \item[] Answer: \answerNo{}.
    \item[] Justification: The paper includes a broader-impact statement, but it does not provide a detailed separate discussion of both positive and negative societal impacts because the work is primarily foundational.
    \item[] Guidelines:
    \begin{itemize}
        \item The answer \answerNA{} means that there is no societal impact of the work performed.
        \item If the authors answer \answerNA{} or \answerNo, they should explain why their work has no societal impact or why the paper does not address societal impact.
        \item Examples of negative societal impacts include potential malicious or unintended uses (e.g., disinformation, generating fake profiles, surveillance), fairness considerations (e.g., deployment of technologies that could make decisions that unfairly impact specific groups), privacy considerations, and security considerations.
        \item The conference expects that many papers will be foundational research and not tied to particular applications, let alone deployments. However, if there is a direct path to any negative applications, the authors should point it out. For example, it is legitimate to point out that an improvement in the quality of generative models could be used to generate Deepfakes for disinformation. On the other hand, it is not needed to point out that a generic algorithm for optimizing neural networks could enable people to train models that generate Deepfakes faster.
        \item The authors should consider possible harms that could arise when the technology is being used as intended and functioning correctly, harms that could arise when the technology is being used as intended but gives incorrect results, and harms following from (intentional or unintentional) misuse of the technology.
        \item If there are negative societal impacts, the authors could also discuss possible mitigation strategies (e.g., gated release of models, providing defenses in addition to attacks, mechanisms for monitoring misuse, mechanisms to monitor how a system learns from feedback over time, improving the efficiency and accessibility of ML).
    \end{itemize}
    
\item {\bf Safeguards}
    \item[] Question: Does the paper describe safeguards that have been put in place for responsible release of data or models that have a high risk for misuse (e.g., pre-trained language models, image generators, or scraped datasets)?
    \item[] Answer: \answerNA{}.
    \item[] Justification: The paper does not release high-risk pretrained generative models, scraped datasets, or other assets requiring special misuse safeguards.
    \item[] Guidelines:
    \begin{itemize}
        \item The answer \answerNA{} means that the paper poses no such risks.
        \item Released models that have a high risk for misuse or dual-use should be released with necessary safeguards to allow for controlled use of the model, for example by requiring that users adhere to usage guidelines or restrictions to access the model or implementing safety filters. 
        \item Datasets that have been scraped from the Internet could pose safety risks. The authors should describe how they avoided releasing unsafe images.
        \item We recognize that providing effective safeguards is challenging, and many papers do not require this, but we encourage authors to take this into account and make a best faith effort.
    \end{itemize}

\item {\bf Licenses for existing assets}
    \item[] Question: Are the creators or original owners of assets (e.g., code, data, models), used in the paper, properly credited and are the license and terms of use explicitly mentioned and properly respected?
    \item[] Answer: \answerNo{}.
    \item[] Justification: The paper cites the existing benchmarks and models used in the experiments, but it does not explicitly list all license terms in the manuscript.
    \item[] Guidelines:
    \begin{itemize}
        \item The answer \answerNA{} means that the paper does not use existing assets.
        \item The authors should cite the original paper that produced the code package or dataset.
        \item The authors should state which version of the asset is used and, if possible, include a URL.
        \item The name of the license (e.g., CC-BY 4.0) should be included for each asset.
        \item For scraped data from a particular source (e.g., website), the copyright and terms of service of that source should be provided.
        \item If assets are released, the license, copyright information, and terms of use in the package should be provided. For popular datasets, \url{paperswithcode.com/datasets} has curated licenses for some datasets. Their licensing guide can help determine the license of a dataset.
        \item For existing datasets that are re-packaged, both the original license and the license of the derived asset (if it has changed) should be provided.
        \item If this information is not available online, the authors are encouraged to reach out to the asset's creators.
    \end{itemize}

\item {\bf New assets}
    \item[] Question: Are new assets introduced in the paper well documented and is the documentation provided alongside the assets?
    \item[] Answer: \answerNA{}.
    \item[] Justification: The paper proposes a method and does not introduce a new dataset or benchmark asset.
    \item[] Guidelines:
    \begin{itemize}
        \item The answer \answerNA{} means that the paper does not release new assets.
        \item Researchers should communicate the details of the dataset\slash code\slash model as part of their submissions via structured templates. This includes details about training, license, limitations, etc. 
        \item The paper should discuss whether and how consent was obtained from people whose asset is used.
        \item At submission time, remember to anonymize your assets (if applicable). You can either create an anonymized URL or include an anonymized zip file.
    \end{itemize}

\item {\bf Crowdsourcing and research with human subjects}
    \item[] Question: For crowdsourcing experiments and research with human subjects, does the paper include the full text of instructions given to participants and screenshots, if applicable, as well as details about compensation (if any)? 
    \item[] Answer: \answerNA{}.
    \item[] Justification: The work does not involve crowdsourcing or human-subject experiments.
    \item[] Guidelines:
    \begin{itemize}
        \item The answer \answerNA{} means that the paper does not involve crowdsourcing nor research with human subjects.
        \item Including this information in the supplemental material is fine, but if the main contribution of the paper involves human subjects, then as much detail as possible should be included in the main paper. 
        \item According to the NeurIPS Code of Ethics, workers involved in data collection, curation, or other labor should be paid at least the minimum wage in the country of the data collector. 
    \end{itemize}

\item {\bf Institutional review board (IRB) approvals or equivalent for research with human subjects}
    \item[] Question: Does the paper describe potential risks incurred by study participants, whether such risks were disclosed to the subjects, and whether Institutional Review Board (IRB) approvals (or an equivalent approval/review based on the requirements of your country or institution) were obtained?
    \item[] Answer: \answerNA{}.
    \item[] Justification: The work does not involve human-subject research, so IRB approval or equivalent review is not applicable.
    \item[] Guidelines:
    \begin{itemize}
        \item The answer \answerNA{} means that the paper does not involve crowdsourcing nor research with human subjects.
        \item Depending on the country in which research is conducted, IRB approval (or equivalent) may be required for any human subjects research. If you obtained IRB approval, you should clearly state this in the paper. 
        \item We recognize that the procedures for this may vary significantly between institutions and locations, and we expect authors to adhere to the NeurIPS Code of Ethics and the guidelines for their institution. 
        \item For initial submissions, do not include any information that would break anonymity (if applicable), such as the institution conducting the review.
    \end{itemize}

\item {\bf Declaration of LLM usage}
    \item[] Question: Does the paper describe the usage of LLMs if it is an important, original, or non-standard component of the core methods in this research? Note that if the LLM is used only for writing, editing, or formatting purposes and does \emph{not} impact the core methodology, scientific rigor, or originality of the research, declaration is not required.
    \item[] Answer: \answerYes{}.
    \item[] Justification: The appendix describes the GSM8K extension using Qwen-2.5-1.5B-Instruct with LoRA; this is an experimental extension rather than the core theoretical method.
    \item[] Guidelines:
    \begin{itemize}
        \item The answer \answerNA{} means that the core method development in this research does not involve LLMs as any important, original, or non-standard components.
        \item Please refer to our LLM policy in the NeurIPS handbook for what should or should not be described.
    \end{itemize}

\end{enumerate}